\newcommand\AlgPhase[1]{%
\begin{algorithmic}[0]
\vspace{-0.7ex}
\vspace*{-.1\baselineskip}\Statex\hspace*{\dimexpr-2pt\relax}\rule{0.9\textwidth}{0.4pt}%
\Statex\hspace*{-1pt}\textbf{#1}%
\vspace*{-.5\baselineskip}\Statex\hspace*{\dimexpr-2pt\relax}\rule{0.9\textwidth}{0.4pt}%
\end{algorithmic}
}
\newcommand\AlgPhaseTop[1]{%
\begin{algorithmic}[0]
\vspace{-0.7ex}
\Statex\hspace*{-1pt}\textbf{#1}%
\vspace*{-.5\baselineskip}\Statex\hspace*{\dimexpr-2pt\relax}\rule{0.9\textwidth}{0.4pt}%
\end{algorithmic}
}
\newcommand{\citeA}[1]{\citeauthor{#1} (\citeyear{#1})}
\newcommand{\dat}[2]{\Xi_#2(#1)}
\newcommand{\chase}[2]{\mathcal{H}_{#1,#2}}
\newcommand{\freeze}[1]{[{#1}]}
\newcommand{\query}[1]{Q^{#1}}
\newcommand{\str}[1]{\mathcal{#1}}
\newcommand{\edge}{\ensuremath{edge}}
\newcommand{\eset}{\emptyset}
\newcommand{\equality}{\approx}
\renewcommand{\epsilon}{\varepsilon}
\renewcommand{\phi}{\varphi}
\newcommand{\set}[1]{\{#1\}} %
\newcommand{\incl}{\subseteq} %
\newcommand{\homo}{\ensuremath{\hookrightarrow}\xspace}
\newcommand{\Anonymise}{\ensuremath{\mathtt{AddUnPredicates}}\xspace}
\newcommand{\AddRoles}{\ensuremath{\mathtt{AddBinPredicates}}\xspace}
\newcommand{\CheckRole}{\ensuremath{\mathtt{CheckRole}}\xspace}
\newcommand{\conc}[2]{\ensuremath{\mathtt{conc}_{#1}(#2)}\xspace}
\newcommand{\UOptViewC}[1]{\V^1_c}
\newcommand{\UOptViewE}[1]{\V^1_\exists}
\newcommand{\BOptView}[1]{\V^2}
\newcommand{\cert}[3]{\mathsf{cert}(#1,#2,#3)}
\newcommand{\cens}{\ensuremath{\mathsf{cens}}\xspace}
\newcommand{\vcens}[2]{\mathsf{vcens}_{#1}^{#2}}
\newcommand{\ocens}[2]{\mathsf{ocens}_{#1}^{#2}}
\newcommand{\Fcens}[1]{\ensuremath{\mathsf{Th}_{#1}}}
\newcommand{\true}{\ensuremath{\mathtt{True}}\xspace}
\newcommand{\false}{\ensuremath{\mathtt{False}}\xspace}
\newcommand{\View}{\ensuremath{\mathcal{V}}\xspace}
\newcommand{\pseudo}{\text{pseudo-obstruction}\xspace}
\newcommand{\A}{\ensuremath{\mathcal{A}}\xspace} 
\newcommand{\B}{\ensuremath{\mathcal{B}}\xspace} 
\newcommand{\D}{\ensuremath{\mathcal{D}}\xspace}
\renewcommand{\H}{\ensuremath{\mathcal{H}}\xspace}  
\newcommand{\I}{\ensuremath{\mathcal{I}}\xspace} 
\newcommand{\J}{\ensuremath{\mathcal{J}}\xspace}
\renewcommand{\P}{\ensuremath{\mathcal{P}}\xspace}
\renewcommand{\S}{\ensuremath{\mathcal{S}}\xspace}
\newcommand{\Inst}{\ensuremath{\mathbf{I}}\xspace}
\newcommand{\V}{\ensuremath{\mathcal{V}}\xspace}
\newcommand{\X}{\ensuremath{\mathcal{X}}\xspace}
\newcommand{\Oex}{\ensuremath{\mathcal{O}_{\mathsf{ex}}}\xspace}
\newcommand{\an}{\ensuremath{\mathit{an}}\xspace}
\newcommand{\john}{\ensuremath{\mathsf{John}}\xspace}
\newcommand{\bob}{\ensuremath{\mathsf{Bob}}\xspace}
\newcommand{\mary}{\ensuremath{\mathsf{Mary}}\xspace}
\newcommand{\movie}{\ensuremath{\mathsf{Seven}}\xspace}
\newcommand{\friendOf}{\ensuremath{\mathit{FoF}}\xspace}
\newcommand{\likes}{\ensuremath{\mathit{Likes}}\xspace}
\newcommand{\suspense}{\ensuremath{\mathit{Susp}}\xspace}
\newcommand{\crime}{\ensuremath{\mathit{Cr}}\xspace}
\newcommand{\thriller}{\ensuremath{\mathit{Thr}}\xspace}
\newcommand{\film}{\ensuremath{\mathit{Movie}}\xspace}
\newcommand{\thrillerFan}{\ensuremath{\mathit{ThrFan}}\xspace}
\newcommand{\movieFan}{\ensuremath{\mathit{MovFan}}\xspace}
\newcommand{\Dex}{\ensuremath{\D_{\mathsf{ex}}}\xspace}
\newcommand{\Vex}{\ensuremath{\View_{\mathsf{ex}}}\xspace}
\newcommand{\Uex}{\ensuremath{U_{\mathsf{ex}}}\xspace}
\newcommand{\Const}{\sigma}
\newcommand{\frontier}[1]{\ensuremath{\mathsf{fr}(#1)}\xspace}	%
\newcommand{\goals}[1]{\ensuremath{\mathbb{Q}}\xspace}
\newtheorem{theorem}{Theorem}
\newtheorem{lemma}[theorem]{Lemma}
\newtheorem{definition}[theorem]{Definition}
\newtheorem{myexample}[theorem]{Example}
\let\oldmyexample\myexample
\renewcommand{\myexample}{\oldmyexample\normalfont}
\newenvironment{example}{\begin{myexample}}{\hfill$\lozenge$\end{myexample}}
\renewcommand{\O}{\ensuremath{\mathcal{O}}\xspace} 
\begin{document}
\title{
Controlled Query Evaluation 
for 
Datalog and 
OWL 2 Profile Ontologies \\ (Extended Version)
\thanks{Work supported by the Royal Society,
the EPSRC grants
Score!, DBonto, and $\text{MaSI}^3$,
and the FP7 project \mbox{OPTIQUE}.}
}

\author{
Bernardo Cuenca Grau,
Evgeny Kharlamov,
Egor V. Kostylev,
Dmitriy Zheleznyakov\\
Department of Computer Science, University of Oxford, UK\\
\set{\textit{f\_name.l\_name}}@cs.ox.ac.uk
} 

\maketitle

\begin{abstract}
We study confidentiality enforcement in ontologies under the Controlled Query Evaluation 
framework, where a policy specifies the sensitive information and a censor ensures that query
answers that may compromise the policy are not returned. We focus on censors that ensure confidentiality 
while maximising information access, and consider both Datalog and the OWL 2 profiles as ontology languages.

\end{abstract}

\section{Introduction}
\label{sec:introduction}
 
As semantic technologies 
are becoming increasingly mature,
there is a  need for mechanisms to
ensure that confidential data
is only accessible by  authorised users.

Controlled Query Evaluation (CQE) 
is a prominent 
confidentiality enforcement framework,
in which 
sensitive information is 
declaratively specified 
by means of a \emph{policy}
and confidentiality is enforced by a \emph{censor}.
When given a query, 
a censor checks whether returning
the answer 
may lead to
a policy violation, 
in which case 
it returns a distorted answer. 
The CQE framework was introduced in 
\cite{DBLP:journals/tods/SichermanJR83}, and studied
in 
\cite{DBLP:journals/dke/BiskupB01,%
	DBLP:journals/ijisec/BiskupB04,%
	DBLP:journals/tkde/BonattiKS95,DBLP:journals/ijisec/BiskupW08}
for propositional databases. 
It has been recently extended to ontologies, where different formalisations have been proposed
\cite{DBLP:conf/semweb/BonattiS13,DBLP:conf/semweb/GrauKKZ13,DBLP:journals/tdp/StuderW14}.

We study CQE for ontologies that are expressed in the
rule language Datalog
as well as in the lightweight 
description logics (DLs) 
underpinning the standadised profiles of OWL~2~\cite{OWL2profiles}. 
We assume that data is hidden, and users
access the system by a query interface.
An ontology, which is known to users, 
provides the vocabulary and
background knowledge needed for users to formulate
queries, as well as to enrich query answers with 
implicit information. 
Policies, formalised as conjunctive queries, are
available only to system administrators, but not to ordinary users.  
The role
of the censor is to preserve confidentiality by 
filtering out 
those answers to user queries that could lead to a policy violation.  
In this setting, there is a
danger that confidentiality enforcement 
may over-restrict the access of the user.	
Thus, we focus on \emph{optimal} censors, 
which maximise answers to queries while 
ensuring confidentiality of the policy. 

We are especially interested
in censors that can be realised by off-the-shelf
reasoning infrastructure. 
To fullfil this requirement,
we introduce in Section~\ref{sec:censor-types} 
\emph{view} and \emph{obstruction} censors.
View censors return only answers that
follow from 
the ontology and an anonymised dataset (a view)
where some occurrences of constants may have been replaced with labelled nulls.
The censor
answers faithfully all queries against the view; thus,
any information not captured
by the view is inaccessible by default.
View censors may require materialisation of implicit data, 
and hence are well-suited 
for applications where 
materialisation is feasible.
Obstruction censors are defined by a set of
``forbidden query patterns'' (an obstruction), where
all answers instantiating such patterns are
not returned to users.
These censors do not require data modification 
and are well-suited for applications such as
Ontology Based Data Access (OBDA), 
where data is managed by an 
RDBMS.
Obstruction censors are dual to view censors in the sense
that they specify the information that users are denied access to.
We formally characterise this duality, and show
that their capabilities
are incomparable.

In Section~\ref{sec:view-censors} we
investigate the limitations of view censors and 
show that checking existence of an optimal view
is undecidable for Datalog ontologies.
We then study fragments of Datalog for which optimal views always 
exist and  extend our results
to OWL 2 profile ontologies.
In Section~\ref{sec:obstruction-optimal} 
we focus on obstruction censors,
and provide sufficient and necessary conditions for
an optimal censor to exist.
Then, 
we propose a tractable algorithm
for computing optimal obstruction censors for
linear Datalog ontologies and apply our results to 
OWL 2 QL ontologies. 

\section{Preliminaries}

\begin{table}[t]
\begin{small}
$\ \ \
\mathrm{(1)} 
\;%
A(x) {\wedge} R(x,y_1) {\wedge} B(y_1) {\wedge} R(x,y_2) {\wedge} B(y_2) \rightarrow y_1 \equality y_2,
$\\[-3.5ex]
\begin{align*}
&\mbox{}\hspace{-.8ex} \mathrm{(2)} \;
R(x,y) \rightarrow S(x,y),
&& 
\hspace{-1ex}
\mathrm{(3)} \;
A(x) \rightarrow \exists y. [R(x,y) {\wedge} B(y)],
\\
&\mbox{}\hspace{-.8ex} 
\mathrm{(4)} \;
		 A(x) \rightarrow x \equality a,
&&
\hspace{-1ex}
\mathrm{(5)}  \;
	R(x,y) {\wedge} S(y,z) \rightarrow T(x,z),
\\
&\mbox{}\hspace{-.8ex}  \mathrm{(6)} \;
		 A(x) {\wedge} B(x) \rightarrow C(x), 
&&
\hspace{-1ex}
\mathrm{(7)} \;
	 	 A(x) \wedge R(x,y) \rightarrow B(y),
\\
&\mbox{}\hspace{-.8ex}  \mathrm{(8)} \;
	R(x,y) \rightarrow S(y,x),
&&
\hspace{-1ex}
\mathrm{(9)} \;
		R(x,a) \rightarrow B(x),
\\ 
&\mbox{}\hspace{-.8ex}  \mathrm{(10)} \;
		R(x,y) \rightarrow A(y),
&&
\hspace{-1ex}
\mathrm{(11)} \;
		A(x) \rightarrow R(x,a),\\
&\mbox{}\hspace{-.8ex}  
\mathrm{(12)} \;
A(x) \rightarrow B(x),
&&
\hspace{-1ex}
\mathrm{(13)} \; 
		R(x,y) \wedge B(y) \rightarrow A(x).
\end{align*} 
\caption{OWL 2 profile axioms as rules}
\label{tab:DL-axioms}
\end{small}
\end{table}

We adopt standard notions
in first order logic over function-free finite signatures.
Our focus is on ontologies, so we assume
signatures with predicates of arity at most two. 
We treat equality $\equality$ as an ordinary predicate,
	but assume that any set of 
	formulae containing $\equality$ also contains all 
	the axioms of $\equality$ for its signature.

\smallskip
\noindent\textbf{Datasets and Ontologies}
A \emph{dataset} is a finite set of facts (i.e., ground atoms). 
An \emph{ontology} is a finite set of \emph{rules}, that is, formulae of the form
$$
\varphi(\vec{x}) \rightarrow \exists \vec{y}. \psi(\vec{x},\vec{y}),
$$
where the \emph{body} ${\varphi(\vec{x})}$ and the \emph{head} $\psi(\vec{x},\vec{y})$ are conjunctions of atoms, and variables $\vec{x}$ are implicitly universally quantified. %
We restrict ourselves to ontologies $\O$ and datasets $\D$ such that
$\O \cup \D$ is satisfiable, which 
ensures that answers to queries are meaningful. 
A rule is 
\begin{itemize}[leftmargin=10pt,noitemsep,topsep=1pt]
\item[--] \emph{Datalog} if the head has a single atom and $\vec y$ is empty;
\item[--] \emph{guarded} if the body has an atom (\emph{guard}) with all $\vec{x}$;
\item[--] \emph{linear} if the body has a single atom;
\item[--] \emph{multi-linear} if the body contains only guards;
\item[--] \emph{tree-shaped} if the undirected multigraph with an edge $\{t_1,t_2\}$ for each binary body atom $R(t_1,t_2)$ is a tree.
\end{itemize}
An ontology is of a type above if so are all the rules in it.

\smallskip
\noindent\textbf{OWL 2 Profiles}
Table~\ref{tab:DL-axioms} provides the types of rules 
sufficient to capture the axioms in the OWL 2 RL, EL, and QL profiles.
We treat the $\top$ concept in DLs as a unary predicate and 
assume that each ontology contains
the rule $S(\vec x) \rightarrow \top(x)$ 
for each predicate $S$ and variable $x$ from $\vec x$. 
An ontology consisting of rules in Table~\ref{tab:DL-axioms}  is 

\begin{itemize}[leftmargin=10pt,noitemsep,topsep=1pt]
\item[--] \emph{RL} if it has no rules of type (3);
\item[--] \emph{QL} if it only has rules of types~(2), (3), (8), (10), (12);
\item[--] \emph{EL} if it has no rules of types~(1),~(7),~(8). 
\end{itemize}

\smallskip
\noindent\textbf{Queries}
A \emph{conjunctive query} (\emph{CQ}) with \emph{free} variables $\vec x$ is 
a formula $Q(\vec{x})$ of the form 
$\exists \vec{y}.\varphi(\vec{x},\vec{y})$, with the \emph{body} $\varphi(\vec{x},\vec{y})$
a conjunction of atoms. A \emph{union} of CQs (\emph{UCQ}) is disjunction of CQs with same free variables.
Queries with no free variables are \emph{Boolean}.
A tuple of constants $\vec a$ is a  (\emph{certain}) \emph{answer} 
to $Q(\vec{x})$ over ontology $\O$ and dataset $\D$ if ${\O \cup
\D \models Q(\vec{a})}$. The set of answers to $Q(\vec{x})$ over $\O$ and $\D$ is denoted by $\cert{Q}{\O}{\D}$.
\section{Basic Framework}\label{sec:basic-framework}

We assume that data $\D$ is hidden
while the ontology $\O$ is known to all users. 
It is assumed that system administrators are in charge of 
specifying policies as CQs, and that policies are assigned 
to users by standard mechanisms such as 
role-based access control \cite{DBLP:journals/computer/SandhuCFY96}. 
\begin{definition}
A \emph{CQE} \emph{instance} $\Inst$ is a triple $(\O,\D,P)$, with
$\O$ an ontology, $\D$ a dataset, and $P$ a CQ, which is called \emph{policy}.
The instance $\Inst$ is Datalog, guarded, etc.~if so is the ontology $\O \cup \{\varphi(\vec{x},\vec{y}) \to A_p(\vec x)\}$, where $\varphi(\vec{x},\vec{y})$ is the body of $P$ and $A_p$ 
a fresh predicate.
\end{definition}

\begin{example}\label{ex:running-example} 
Consider the following ontology and dataset that describe
an excerpt of a social network:
\begin{align*}
& \begin{array}{l}
\!\!\!\Oex = \set{\likes(x,y) \land \thriller(y) \rightarrow \thrillerFan(x), \\
	\!\suspense(x) {\land} \crime(x) \rightarrow \thriller(x), \friendOf(x,\!y) \rightarrow \friendOf(y,\!x)
		 },
\end{array}\\
& \begin{array}{l}
\!\!\!\Dex \!=\! \set{\friendOf(\john, \!\bob), \friendOf(\bob, \!\mary), \crime(\movie), \\
		 \!\likes(\john, \!\movie), \likes(\bob, \!\movie), \suspense(\movie)}.
\end{array}
\end{align*}
Here, the ontology \Oex  states, for example, that people who like thrillers are thriller fans, or that
friendship is a symmetric relation.
Then, a policy $P_{\mathsf{ex}} = \friendOf(\john,x)$ forbids access to John's friend list.
\end{example}
A key component of a CQE system 
is the \emph{censor}, whose goal is to decide according to the policy
which query answers can be safely returned to users.

\begin{definition}
A \emph{censor} for a CQE instance $(\O,\D, P)$ is a function 
$\cens$ mapping each CQ $Q$ to a subset of $\cert{Q}{\O}{\D}$.
The \emph{theory} $\Fcens{\cens}$
of $\cens$ is the set
\begin{equation*}
\{Q(\vec a) \mid \vec a \in \cens(Q) \text{ and } Q(\vec x) \text{ is a CQ} \}.
\end{equation*}
Censor $\cens$ is \emph{confidentiality preserving} if for each tuple $\vec a$
of constants  $\O \cup \Fcens{\cens} \not\models P(\vec a)$.
It is \emph{optimal} if 
\begin{itemize}[leftmargin=10pt,noitemsep,topsep=1pt]
\item[--] it is confidentiality preserving, and
\item[--] no 
confidentiality preserving censor $\cens' \neq \cens$
exists such that $\cens(Q) \incl \cens'(Q)$ for every CQ $Q$.
\end{itemize}
\end{definition}
Intuitively, $\Fcens{\cens}$ represents all the information that
 a user can gather by asking CQs to
 the system. 
If the censor is confidentiality preserving, 
then no information can be obtained about the policy, 
regardless of the number of CQs asked. 
In this way, optimal censors maximise information 
	accessibility without
	compromising the policy.
\section{View and Obstruction Censors} \label{sec:censor-types}
The idea behind \emph{view censors} is to 
modify the dataset by anonymising occurrences of
constants as well as by adding or removing facts, whenever needed.
We refer to such modified dataset as an \emph{(anonymisation) view}.
The censor returns only  
the answers that follow from the ontology and view;
in this way, the main workload of the censor 
amounts to the computation of certain answers, which
can be delegated to the query answering engine.

\begin{definition}\label{def:view-censor}
A \emph{view} $\View$ for $\Inst = (\O,\D,P)$ is a dataset over the signature of $\Inst$
extended with a set of fresh constants. 
The \emph{view censor} $\vcens{\Inst}{\View}$
is the function mapping each CQ $Q(\vec x)$ to the set $\cert{Q}{\O}{\D} \cap \cert{Q}{\O}{\View}$.
The view is \emph{optimal} if so is its corresponding censor.
\end{definition}

Clearly, for the censor
to
be confidentiality preserving $\O \cup \View$ must not entail
any answer to the policy. On the other hand, to ensure optimality a view must encode
 as much information
from the hidden dataset as possible.

\begin{example}
\label{ex:view-basic}
Consider the view $\Vex$ obtained from $\Dex$ 
	by replacing \bob with a fresh $\an_b$. 
Intuitively, $\Vex$ 
	is the
	result of ``anonymising'' the constant \bob, 
	while keeping the structure of the data intact.
Since $\Vex$ contains no information about $\bob$, 
	we have $\cert{P_{\mathsf{ex}}}{\Oex}{\Vex} = \emptyset$ and
	the censor based on $\Vex$ is confidentiality preserving. 
View $\Vex$, however, is not optimal: for instance,
	$\Oex \cup \Vex$ does not entail the fact $\likes(\bob,\movie)$, 
	which can be added to the view without
	violating confidentiality.
\end{example}

The idea behind \emph{obstruction censors} is to associate
to a CQE instance a Boolean UCQ $U$ s.t.\  
the censor returns an answer $\vec a$  to 
a CQ $Q(\vec x)$ only if no CQ in $U$ follows from
$Q(\vec a)$. Thus, the obstruction can be seen as a set of
forbidden query patterns, which should not be disclosed.
 
\begin{definition}
An \emph{obstruction} $U$ for $\Inst = (\O,\D,P)$ is a Boolean UCQ.
The \emph{obstruction censor} $\ocens{\Inst}{U}$
 based on $U$ 
is the function that maps each CQ $Q(\vec x)$ to the set
\begin{align*}
\{ \vec a \mid \vec a \in \cert{Q}{\O}{\D} \text{ and } Q(\vec a) \not\models U \}.
\end{align*}
The obstruction is \emph{optimal} if so is its censor $\ocens{\Inst}{U}$.
\end{definition}
Similarly to view censors, obstruction censors do not require
dedicated algorithms: checking 
$Q(\vec a) \models U$ 
can be delegated to an RDBMS. Obstructions can be virtually maintained
and do not require data materialisation.

\begin{example}\label{ex:obstruction-basic}
The censor based on $\Vex$ from Example \ref{ex:view-basic}
can also be realised with the following obstruction $\Uex$:\\[-3ex]
\begin{align*}
	& \exists x. \friendOf(x, \bob) \lor \exists x. \friendOf(\bob, x) \lor{}\\
	& \exists x. \likes(\bob, x) \lor \thrillerFan(\bob).
\end{align*}
Intuitively, $\Uex$ ``blocks'' query answers involving $\bob$; and all other answers are the same as over $\Oex \cup \Dex$.  
\end{example}
Examples \ref{ex:view-basic} 
and \ref{ex:obstruction-basic} show that the same censor 
may be based on both a view and an obstruction.
These censors, however, behave \emph{dually}:
a view explicitly encodes the information accessible to users,
whereas obstructions specify information which users are
denied access to. 
It is not
obvious whether (and how)
a view can be realised by an obstruction, or vice-versa.
We next focus on Datalog ontologies and
characterise when a view $\View$
and obstruction $U$ yield the same censor. 
We start with few definitions. 

Each Datalog ontology
$\O$ and dataset $\D$ have a unique \emph{least Herbrand model}
$\chase{\O}{\D}$: a finite structure
satisfying $\vec a \in \cert{Q}{\O}{\D}$ iff $\chase{\O}{\D} \models Q(\vec a)$
for every CQ $Q$. Thus, this model captures all the
information relevant to
CQ answering.
A natural specification of the duality between 
views and obstructions is then as follows: 
$U$ and $\View$
implement the same censor if and only if $U$ captures 
the structures
\emph{not} homomorphically embeddable into  $\chase{\O}{\V}$.
To formalise this statement, 
we recall the central problem in the (non-uniform)
constraint satisfaction theory.

\begin{definition}[Kolaitis and Vardi, 2008]
\nocite{DBLP:conf/dagstuhl/KolaitisV08}
Let $\mathbb{C}$ be a class of finite structures
and let $\mathbb{C}'$ be a subset of $\mathbb{C}$. First-order sentence $\psi$ \emph{defines} $\mathbb{C}'$ if $\str{I} \in \mathbb{C}'$ is equivalent to $\str{I} \models \psi$ for every 
structure $\str{I} \in \mathbb{C}$. 
\end{definition}

Let $\str{J} \homo \str{J'}$ denote the fact that there is a homomorphism from a structure $\str{J}$ to a structure $\str{J'}$. The correspondence is given in the following theorem.

\begin{restatable}{theorem}{censorsimulation}\label{th:simulation}
Let $\Inst = (\O,\D,P)$ be Datalog and 
$\mathbb{C} = \{\str{I} \mid \str{I} \text{ finite}, \str{I} \homo \chase{\O}{\D}\}$.
Then, $\vcens{\Inst}{\View} = \ocens{\Inst}{U}$ iff $U$ defines the set $\mathbb{C} \setminus \{ \str{I} \in \mathbb{C} \mid \str{I} \homo \chase{\O}{\V} \}$. %
\end{restatable}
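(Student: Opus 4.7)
My plan is to reduce both directions to the classical Chandra--Merlin style correspondence between finite structures and Boolean conjunctive queries. For any finite structure $\str{I}$, let $Q_\str{I}$ denote the Boolean CQ whose body is the atomic diagram of $\str{I}$ with occurrences of constants of $\Inst$ kept as constants and all remaining elements replaced by pairwise distinct existentially quantified variables; conversely, for any CQ $Q(\vec{x})$ and tuple $\vec a$, let $\str{I}_{Q(\vec a)}$ be the canonical structure whose atoms are the body atoms of $Q$ with $\vec x$ substituted by $\vec a$. The standard fact I will invoke is that for any dataset $\E$, $\chase{\O}{\E} \models Q_\str{I}$ iff $\str{I} \homo \chase{\O}{\E}$ and, symmetrically, $\chase{\O}{\E} \models Q(\vec a)$ iff $\str{I}_{Q(\vec a)} \homo \chase{\O}{\E}$. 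A second standard fact is that for any Boolean UCQ $U$, $Q(\vec a) \models U$ iff $\str{I}_{Q(\vec a)} \models U$, so entailment of $U$ collapses to ordinary structural evaluation on canonical databases.

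For the ($\Rightarrow$) direction I fix an arbitrary $\str{I} \in \mathbb{C}$ and test the censor equality on the Boolean query $Q_\str{I}$. The assumption $\str{I} \homo \chase{\O}{\D}$ gives $() \in \cert{Q_\str{I}}{\O}{\D}$, so unfolding $\vcens{\Inst}{\View}(Q_\str{I}) = \ocens{\Inst}{U}(Q_\str{I})$ reduces to the biconditional $\chase{\O}{\V} \models Q_\str{I}$ iff $Q_\str{I} \not\models U$. Translating the two sides via the correspondences above turns this into $\str{I} \homo \chase{\O}{\V}$ iff $\str{I} \not\models U$, which is exactly the definability condition of the theorem, applied uniformly to every member of $\mathbb{C}$.

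For the ($\Leftarrow$) direction I fix any CQ $Q(\vec x)$ and tuple $\vec a$. If $\vec a \notin \cert{Q}{\O}{\D}$, both censors trivially exclude $\vec a$; otherwise $\str{I}_{Q(\vec a)} \homo \chase{\O}{\D}$, so $\str{I}_{Q(\vec a)} \in \mathbb{C}$ and the assumed definability yields $\str{I}_{Q(\vec a)} \models U$ iff $\str{I}_{Q(\vec a)} \not\homo \chase{\O}{\V}$. The same two correspondences rewrite this as $Q(\vec a) \models U$ iff $\chase{\O}{\V} \not\models Q(\vec a)$, which is precisely the statement that $\vec a \in \ocens{\Inst}{U}(Q)$ iff $\vec a \in \vcens{\Inst}{\View}(Q)$.

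The main obstacle is bookkeeping rather than conceptual: one must pin down the convention that homomorphisms implicit in $\mathbb{C}$ fix the constants of $\Inst$ (so that the ``constants vs.\ fresh elements'' split used in $Q_\str{I}$ agrees with the notion of homomorphism into $\chase{\O}{\D}$ and $\chase{\O}{\V}$), and that equality atoms produced by equality-generating rules are treated identically on the structure side and on the CQ side. Once these conventions are fixed, both directions are a mechanical unfolding of the definitions of $\vcens{\Inst}{\View}$, $\ocens{\Inst}{U}$, and ``defines'' combined with the two correspondences above.
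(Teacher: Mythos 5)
Your proof is correct and follows essentially the same route as the paper's: both directions rest on the Chandra--Merlin correspondence, testing the censor equality on the canonical Boolean query $Q_{\str{I}}$ of each structure $\str{I} \in \mathbb{C}$ for the forward direction, and passing to the canonical structure of $Q(\vec a)$ for the converse. The only difference is cosmetic --- the paper first re-expresses the non-embeddable part of $\mathbb{C}$ as an infinite disjunction of canonical queries (its Lemma~\ref{lem:known-CSP}) before unfolding the definitions, whereas you apply the definability condition directly to the canonical database of $Q(\vec a)$; the bookkeeping conventions you flag are resolved in the paper by keeping constants in $\freeze{Q}$ while mapping all elements of $\str{I}$ to fresh variables in $\query{\str{I}}$.
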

Using this theorem together with 
definability results in Finite Model Theory, 
we can show that views and obstructions cannot simulate
one another in general.

\begin{restatable}{theorem}{censorNotSimulationGeneral}\label{th:no simulation in general}
There is a Datalog CQE instance admitting a confidentiality preserving view censor that is not based on any obstruction.
Conversely, there is a Datalog CQE instance admitting a confidentiality preserving obstruction censor that is not based on any view.
\end{restatable}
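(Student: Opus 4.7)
The plan is to invoke Theorem~\ref{th:simulation}, which reduces the simulation question to a definability problem on the class $\mathbb{C}=\{\str{I}\mid \str{I}\homo\chase{\O}{\D}\}$: a view censor $\vcens{\Inst}{\V}$ coincides with an obstruction censor $\ocens{\Inst}{U}$ exactly when $U$ defines $\mathbb{C}\setminus\{\str{I}\in\mathbb{C}\mid \str{I}\homo\chase{\O}{\V}\}$. So for the first direction I would exhibit a view whose complement subclass in $\mathbb{C}$ is not UCQ-definable, and for the second I would exhibit an obstruction whose complement subclass is not of the form $\{\str{I}\in\mathbb{C}\mid \str{I}\homo T\}$ for any finite $T$. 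In both constructions I take the policy to mention a fresh unary predicate not appearing in $\D$ or $\V$, which makes confidentiality preservation automatic.

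For the first direction, let $\O$ be the single rule $R(x,y)\rightarrow R(y,x)$, let $\D$ encode a symmetric triangle, let $\V$ encode a single symmetric edge on fresh constants, and let $P=A(x)$ with $A$ fresh. Then $\chase{\O}{\D}$ is the symmetric $K_3$, $\chase{\O}{\V}$ is the symmetric $K_2$, and $\mathbb{C}$ is the class of $A$-free symmetric structures whose underlying graph is 3-colorable; the complement subclass is exactly the non-bipartite members of $\mathbb{C}$. Suppose a Boolean UCQ $U=\bigvee_j q_j$ defined this subclass. Since $K_2\not\models U$, no $q_j$ maps into $K_2$, so each $q_j$ either contains a self-loop $R(x,x)$ or is non-bipartite and hence contains an odd cycle of length at most $|q_j|$. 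Pick an odd $n$ strictly larger than $\max_j|q_j|$; then $C_n\in\mathbb{C}$ is non-bipartite, so $C_n\models U$ and some $q_j$ maps into $C_n$. But a $q_j$ with a self-loop cannot map into $C_n$ (which has no loops), and for $q_j$ with only a short odd cycle the image would be a closed walk of odd length below $n$ in $C_n$, ruled out by a standard $\pm 1$-displacement argument in $\mathbb{Z}/n\mathbb{Z}$: the net displacement of such a walk has the parity of its length and so cannot be a multiple of $n$ unless the length is at least $n$.

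For the second direction let $\O=\emptyset$, $\D=\{R(a,a)\}$, $P=A(x)$, and $U=\exists x, y.\, R(x,y)\wedge R(y,x)$. Then $\mathbb{C}$ consists of all $A$-free $R$-structures (everything collapses onto the self-loop), and $U$ defines the subclass whose members contain a 2-cycle or a self-loop. If a view $\V$ yielded the same censor, then $\{\str{I}\in\mathbb{C}\mid \str{I}\homo T\}$, with $T=\chase{\O}{\V}$, would equal the loop-free and 2-cycle-free members of $\mathbb{C}$. Two cases give a contradiction: if $T$ itself contains a 2-cycle or a loop, then $T\homo T$ places $T$ into the left-hand side while $T\models U$ excludes it from the right; otherwise, the transitive tournament $K_n^\ast$ with $R$-edges $v_i\to v_j$ for all $i<j$ lies in $\mathbb{C}$ and has no 2-cycle, so $K_n^\ast\homo T$ is required, but any such homomorphism is forced to be injective (coincident images would produce a self-loop in $T$), giving $|T|\ge n$, which fails once $n>|T|$.

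The main obstacle is exactly these two finite-model-theoretic lemmas — the closed-walk parity argument underlying the first direction and the injectivity-from-absent-loops argument underlying the second. With them in hand, Theorem~\ref{th:simulation} absorbs the rest, so the verifications that $\Inst$ is a Datalog CQE instance and that the chosen censors are confidentiality preserving become routine bookkeeping.
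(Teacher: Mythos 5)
Your proposal is correct and follows essentially the same route as the paper: both directions are reduced via Theorem~\ref{th:simulation} to the same two finite-model-theoretic facts, namely that non-2-colourability is not UCQ-definable on the relevant class and that no finite structure is homomorphism-universal for the loop-free (in your case, 2-cycle-free) digraphs. The only differences are cosmetic --- the paper cites the non-definability of 2-colourability from the literature where you prove it directly by the odd-girth/displacement argument, and its second instance uses a transitivity rule with a longest-chain argument where you use transitive tournaments and injectivity --- and both of your supporting lemmas are sound.
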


\section{Optimal View Censors}\label{sec:view-censors}
 
Our discussion
 in Section~\ref{sec:censor-types} suggests that
view and obstruction censors must be studied independently. In this section we
focus on view censors and start
by establishing their theoretical limitations.
The following example shows that optimal view censors may not exist, even 
if we restrict ourselves to empty ontologies.

\begin{example}
\label{ex:no optimal view}
Consider a CQE instance with empty ontology, 
	dataset consisting of a fact $R(a,a)$, and policy $P = \exists x\, \exists y\, \exists z. 
	R(x,y) \wedge R(y, z) \wedge R(z, x)$.
Consider also the family of Boolean CQs $Q_n = \exists x_1\ldots\exists x_n. \bigwedge_{i < j} R(x_i,x_j)$, which
intuitively represent strict total orders on $n$ elements.
Answering these queries positively is harmless: $\V \cup \set{Q_n}_{n \geq 1} \not\models P$ 
	for any confidentiality preserving view $\V$.
Assume now that $\V$ is optimal, and let $m$
be the number of constants in $\V$.
Then, $\V \not\models Q_{m+1}$
	since otherwise $\V$ would encode a self-loop and
	violate the policy.
This contradicts the optimality of $\V$, and hence no optimal view exists.
\end{example} 

Furthermore, determining the existence of
an optimal view
is undecidable even for Datalog CQE instances.

\begin{restatable}{theorem}{undecTheorem}
The problem of checking whether a  Datalog CQE instance 
admits an optimal view is undecidable.
\end{restatable}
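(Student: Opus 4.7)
The plan is to prove undecidability by reduction from the halting problem for deterministic Turing machines, exploiting the same source of non-existence displayed in Example~\ref{ex:no optimal view}: an infinite ascending family of harmless query patterns that collectively cannot be captured by any single finite view.

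Given a deterministic Turing machine $M$, I would construct in polynomial time a Datalog CQE instance $\Inst = (\O, \D, P)$ as follows. The ontology $\O$ uses recursive Datalog rules to encode the transition relation of $M$, with derived predicates that, on any dataset representing a valid initial configuration, mark the sequence of successor configurations reached by $M$. The dataset $\D$ seeds the empty-input initial configuration together with a supply of positional constants. The policy $P$ is a CQ whose match corresponds to the appearance of the halting state together with a small rigid gadget anchored on distinguished constants of $\D$ that cannot be avoided merely by renaming constants inside the view.

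The correctness claim is that $\Inst$ admits an optimal view iff $M$ halts on the empty input. If $M$ halts in $k$ steps, then only finitely many partial-trace patterns are harmless and they can be merged into a single maximal view $\V^\star$; any strict enrichment would force the full halting trace to be derivable under $\O$, violating confidentiality. If $M$ does not halt, then for every candidate confidentiality-preserving view $\V$ the mechanism of Example~\ref{ex:no optimal view} applies: if $\V$ witnesses partial runs only up to some length $n$, one may strictly extend $\V$ by the length-$(n{+}1)$ pattern without ever matching $P$, so no view is maximal.

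The main obstacle lies in the non-halting case, where one must rule out clever views that use labelled nulls to witness arbitrarily long partial runs simultaneously by folding them into a cyclic or otherwise compactified structure over finitely many nulls. The construction must enforce enough rigidity -- typically through the policy gadget anchored at real constants of $\D$ and through carefully chosen arities and positional markers -- so that any view collapsing two distinct configurations would entail a forbidden self-reference analogous to the triangle in Example~\ref{ex:no optimal view} and thus entail $P$. A secondary technical point is to stay within Datalog, i.e., to avoid existential heads and equality-generating rules; this is standard but constrains how the tape, head, and step counter are represented.
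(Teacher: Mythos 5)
There is a genuine gap, and it lies exactly at the point you flag as ``the main obstacle'': your choice of the halting problem as the source of the reduction cannot be made to work, because foldability of the run is not governed by halting. Consider a machine $M$ that does not halt but eventually enters a loop, i.e., repeats a configuration. Then the infinite time--tape grid carrying the run of $M$ can be folded into a \emph{finite} structure by identifying the two time points at which the identical configurations occur (and by folding the tape beyond the finitely many cells ever visited before the repeat). This folded structure is a perfectly good finite view: it witnesses \emph{all} partial-run patterns of every length simultaneously, and it creates no policy violation, because the merged grid nodes carry literally the same cell symbol, head marker and state. Hence your instance would admit an optimal view even though $M$ does not halt, breaking the backward direction of your claimed equivalence. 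Your proposed fix --- making the policy gadget rigid enough that ``any view collapsing two distinct configurations would entail a forbidden self-reference'' --- cannot succeed here, since the two collapsed configurations are not distinct in any property expressible over the signature; the only way to forbid the fold is to forbid time-cycles outright (e.g., via transitivity and a policy $\exists x.\,S(x,x)$), but then no finite view can entail the infinite family of harmless $S$-chain queries either, so no optimal view would exist even for halting machines, and the forward direction breaks instead. This is precisely why the paper reduces not from halting but from the problem of whether a deterministic machine \emph{without a final state} has a \emph{repeated configuration} on the empty tape (undecidable by Rice's theorem): there, ``repeats'' coincides exactly with ``the grid folds to a finite view'' in both directions.

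A secondary but important omission: for the non-existence mechanism of Example~\ref{ex:no optimal view} to apply at all, the infinite family of long trace queries must be \emph{certain answers} over $\O\cup\D$ (the view censor only ever returns answers in $\cert{Q}{\O}{\D}$). The paper achieves this by taking $\D=\{R(a,a),S(a,a),T(a,a)\}$, so that every connected grid query maps onto the single self-looping point and is therefore certain, which is what forces any confidentiality-preserving view to contain an actual (possibly folded) grid that the ontology then labels with the run of $M$. Your dataset, described as seeding the initial configuration with ``positional constants,'' does not obviously have this property; without the self-loops the set of returnable answers need not contain an infinite ascending chain of harmless patterns, and the instance may trivially admit an optimal view regardless of $M$.
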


\begin{proof}[Proof (idea)]
The proof is by reduction to the undecidable problem 
of checking whether a deterministic Turing 
machine without a final state has a 
repeated configuration in a run 
on the empty tape. 
For each such machine we construct a CQE
instance such that the run corresponds to an infinite grid-like 
``view'' with axes for the tape and time. 
The ontology guarantees that representations of adjacent configurations
agree with the transition function, and the policy forbids 
invalid configurations (e.g., 
with many symbols in a cell). 
Coinciding configurations appear in the run iff 
the grid can be ``folded'' to a finite view on all sides (e.g., 
if configurations can be merged).  
\end{proof} 

In what follows, 
we identify classes of CQE instances that guarantee existence of optimal view censors. 
We start by studying restrictions on Datalog ontologies and then adapt the obtained results to the OWL 2 profiles.

\subsection{Guarded Tree-Shaped Datalog}
\label{sec:data-view-RL}

The idea behind view censors
is to anonymise information in the original data in such a way that
the policy cannot be violated. 
For instance, in Example~\ref{ex:view-basic}
we substituted the atom $\friendOf(\john, \bob)$
with $\friendOf(\john, \an_b)$, where $\an_b$ is a fresh constant
that is an anonymised copy of  $\bob$.
In general, however, many such anonymous copies may be required for each
data constant to encode all the information required for
ensuring optimality. The limit case is illustrated by Example~\ref{ex:no optimal view}, where no finite number of
fresh constants suffices for optimality.

Observe that the CQE instance used in Example \ref{ex:no optimal view} is neither guarded nor tree-shaped due to the form of the policy. In what follows, we show
that an optimal view can always be constructed using at most exponentially many 
anonymous constants if we restrict ourselves to Datalog CQE instances that
are guarded and tree-shaped.

We next provide an intuitive idea of the construction.
Consider the view for a CQE instance $(\O,\D,P)$ consisting of the following three components $\View_1$--$\View_3$.
\begin{itemize}[leftmargin=17pt,noitemsep,topsep=1pt]
\item[\em(1)] Component $\V_1$ is any maximal set of unary atoms in 
$\chase{\O}{\D}$ that does not compromise the policy.
\item[\em(2)] To construct $\View_2$, we consider an anonymised copy $a_{\mathcal B}$ of each constant $a$ and each set
 $\mathcal B$ of unary predicates $B$ s.t.\ $\chase{\O}{\D} \models B(a)$. The corresponding set of all unary atoms $B(a_{\mathcal B})$ for $B \in \mathcal B$ is a part of $\View_2$ if and only if it is ``safe'', that is, neither discloses the policy nor entail new facts together with $\O \cup \View_1$. 
\item[\em(3)] Finally, $\View_3$ consists of a maximal set of binary atoms on all the constants (including the copies) that are justified by $\chase{\O}{\D}$ and do not disclose the policy.
\end{itemize}
Optimality of this view follows immediately from the construction.
The view, however, may require exponentially many anonymised copies of data constants.
The need for them is illustrated by the following example.

\begin{figure}[t]
\hrule
\medskip
\begin{center}
\begin{tikzpicture}
[RDFNode/.style={inner sep=1mm, rectangle, rounded corners=5pt, draw},
 RDFNodeA/.style={inner sep=1.5mm, rectangle, rounded corners=5pt, draw},
 RDFLabel/.style={inner sep=.5mm},
 >=stealth,
 scale=1]

\node at ( 0.5,1)     (John)  [RDFNodeA] {${}_{\left.\right.}\john_{\left.\right.}$};
\node at ( 0.5,0)   (Bob) [RDFNodeA] {\bob};

\node at ( 3,1)     (aJohn1)  [RDFNode] {$\john_{\{\movieFan\}}$};
\node at ( 6,1)     (aJohn2)  [RDFNode] {$\john_{\{\movieFan, \thrillerFan\}}$};
\node at ( 4.5,0)   (aBob) [RDFNode] {$\bob_{\{\movieFan, \thrillerFan\}}$};
\node at ( 2,0)   (dots) {$\cdots$};

\draw [->] (John) to node [RDFLabel, auto] {} (Bob);
\draw [->] (John) to node [RDFLabel, auto] {} (aBob);
\draw [->] (aJohn1) to node [RDFLabel, auto] {} (Bob);
\draw [->] (aJohn1) to node [RDFLabel, auto] {} (aBob);
\draw [->] (aJohn2) to node [RDFLabel, auto] {} (aBob);

\end{tikzpicture}
\end{center}
\hrule

\caption{Part of optimal view in Example~\ref{ex:algorithm-intuition} (omitted labels coincide to subscipts, arrows represent $\friendOf$)}
\label{fig:running example}
\end{figure}
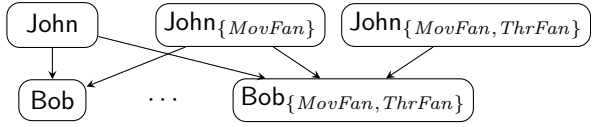

\begin{myexample}\label{ex:algorithm-intuition}
Consider the CQE instance with ontology consisting of rules
$\thrillerFan(x) {\rightarrow} \movieFan(x)$ and $\thrillerFan(y) {\land} \friendOf(x,y) \rightarrow$ $\movieFan(x)$, 
dataset consisting of facts $\friendOf(\john,\bob)$, $\thrillerFan(\john)$ and $\thrillerFan(\bob)$, and policy
$\movieFan(x)$.
The essential part of the optimal view obtained using the aforementioned
construction  
is given in Figure~\ref{fig:running example}.
Here $\View_1 = \eset$, $\View_2$
contains unary atoms over the anonymised copies $\john_{\{\movieFan\}}$ and $\john_{\{\movieFan, \thrillerFan\}}$ of $\john$,  and
$\bob_{\{\movieFan, \thrillerFan\}}$ of $\bob$,
while $\View_3$ contains the $\friendOf$ atoms represented by arrows.
Note that at least two anonymised copies of $\john$ are necessary
in any optimal view to answer correctly ``harmless'' queries such as
\begin{align*} 
  \exists x\, \exists y\, \exists z. &
		\thrillerFan(x) \land  %
		\friendOf(x,y) \land \thrillerFan(y) \land {} \\
		& %
		\friendOf(z,y) \land \movieFan(z) \land \friendOf(z, \bob). \quad \diamondsuit
\end{align*}
\end{myexample}

This example shows that, in order to avoid the exponential blow up in the number of anonymised copies,
we need further restrictions on the ontology.
In particular, in the case of multi-linear CQE instances we can guarantee that just one copy suffices for every constant.

The following theorem formalises the intuition above.

\begin{restatable}{theorem}{guardedTreeShapedDatalogView}\label{th:guarded}
Let $\Inst$ be a Datalog tree-shaped CQE instance.
If $\Inst$ is guarded, 
	it admits an optimal view that can be computed in time 
	exponential in $\vert \Inst \vert$ and polynomial in data size.
If $\Inst$ is multi-linear, 
	it admits an optimal view that can be computed in time 
	polynomial in $\vert \Inst \vert$.
Additionally, $\Inst$ has a unique optimal censor if it
is linear.
\end{restatable}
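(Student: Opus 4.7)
The plan is to implement the tripartite construction $\V = \V_1 \cup \V_2 \cup \V_3$ sketched just before the theorem, and then verify confidentiality, optimality, and the complexity bounds separately for each of the three sub-cases. Confidentiality preservation is immediate: every component is assembled greedily, adding an atom only when the resulting $\O \cup \V$ still fails to entail any instance of $P$, so by construction $\O \cup \V \not\models P(\vec c)$ for every $\vec c$ and $\vcens{\Inst}{\V}$ is confidentiality-preserving.

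For optimality in the guarded tree-shaped case I would argue by contradiction. Suppose a confidentiality-preserving censor $\cens'$ strictly extends $\vcens{\Inst}{\V}$, witnessed by a CQ $Q(\vec x)$ and a tuple $\vec a$ with $\vec a \in \cens'(Q) \setminus \cert{Q}{\O}{\V}$. From $\vec a \in \cert{Q}{\O}{\D}$ I obtain a homomorphism $h \colon Q(\vec a) \to \chase{\O}{\D}$, and the critical combinatorial step is to re-route $h$ into $\chase{\O}{\V}$: for each query variable $y$ with $h(y) = a$ a data constant, identify the set $\mathcal B$ of unary predicates that $a$ must carry to discharge the neighbourhood of $y$ in $Q$, and redirect $y$ to the anonymised copy $a_{\mathcal B}$. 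Tree-shape of the ontology ensures that distinct neighbourhoods can be routed independently, while guardedness guarantees that the local unary type $\mathcal B$ at $a$ fully determines which rules fire there. If no suitable copy $a_{\mathcal B}$ exists in $\V_2$, the greedy construction refused it precisely because including it would have entailed a policy instance; but then the same instance is entailed from $\Fcens{\cens'}$, contradicting confidentiality of $\cens'$.

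The complexity bounds come out of counting. In the guarded case there are at most exponentially many subsets $\mathcal B$ of unary predicates, and each data constant spawns at most one copy per subset, so $|\V_2|$ is exponential in $|\Inst|$ but linear in $|\D|$; each safety test is a Datalog entailment on a theory of that size, yielding a construction exponential in $|\Inst|$ and polynomial in data size. In the multi-linear case every body atom is a guard on all body variables, so once the maximal safe unary type of $a$ is fixed no further splitting of $a$ across distinct query contexts is needed, and a single anonymised copy per constant suffices; this reduces $|\V_2|$ and the number of safety checks to polynomial in $|\Inst|$. For uniqueness under linearity, I would exploit that linear rules propagate derived atoms along single-atom bodies, so $\Fcens{\cens}$ of any confidentiality-preserving censor is determined by the set of atomic ground facts it certifies; two optimal censors therefore agree on every atom, and hence on every CQ.

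The main obstacle I expect is the re-routing lemma at the heart of the optimality argument: one must simultaneously control how an arbitrary CQ $Q$ embeds into $\chase{\O}{\D}$ and how the anonymised copies interlock via $\V_3$, so that local safety of individual atoms lifts to safety of their joint image in $\chase{\O}{\V}$. Example~\ref{ex:algorithm-intuition} shows that the exponential population of $\V_2$ is genuinely necessary in the guarded case, and the collapse to a single copy per constant under multi-linearity is exactly where the polynomial-time bound is won.
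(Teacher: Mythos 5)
Your construction and overall strategy coincide with the paper's: the same tripartite view ($\V_1$ maximal safe unary atoms, $\V_2$ anonymised copies indexed by closed unary types, $\V_3$ greedily added binary atoms), the same re-routing of a homomorphism $h\colon \freeze{Q(\vec t)} \to \chase{\O}{\D}$ onto type-indexed copies, the same exponential count for guardedness, the collapse to one maximal copy per constant for multi-linearity, and uniqueness of the maximal safe subset for linearity. The problem is that the step you yourself flag as ``the main obstacle'' is precisely the content of the theorem, and your proposal does not discharge it. Concretely: the atoms of $\V_3$ are admitted one at a time against the partially built view, each under a \emph{local} safety test, whereas optimality requires that the \emph{entire} image $g(\B)$ of the chase $\B = \chase{\O}{\freeze{Q(\vec t)}}$ lie in $\V$. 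For that one must prove $\cert{P}{\O}{\V \cup g(\B)} = \emptyset$ and, more delicately, that no rule fires on $\V \cup g(\B)$ that did not already fire on $\V$ or on $\B$ separately -- otherwise a binary atom $R(g(d_1),g(d_2))$ could be absent from $\V_3$ because, in combination with unary atoms contributed by a \emph{different} part of the query's image, it would derive new facts or a policy instance. The paper closes this by (i) extending $\O$ with domain/range markers $\delta_R,\rho_R$ so that the unary type of a copy records which binary atoms it must support, and (ii) a case analysis on rule shapes in which guardedness forces any mixed instantiation of a two-atom body (one atom from $\V$, one from $g(\B)$) to collapse into one of the two pieces. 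Your appeal to ``tree-shape ensures distinct neighbourhoods can be routed independently'' and ``guardedness guarantees the local unary type determines which rules fire'' names the right ingredients but does not substitute for this argument; as stated, the claim that local safety lifts to joint safety is unproven, and it is exactly where the hypotheses are consumed.

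Two smaller points. First, your contradiction setup compares safety relative to $\Fcens{\cens'}$ with safety relative to $\O \cup \V$; these are different theories, and the bridge (the paper's characterisation that a view is optimal iff every answer $\vec t$ with $\O \cup \V \cup \{Q(\vec t)\}$ safe is already certain over $\O \cup \V$) needs to be stated and proved, not assumed. Second, for uniqueness under linearity, the assertion that $\Fcens{\cens}$ is ``determined by the set of atomic ground facts it certifies'' is not literally true for CQs with existential variables; the paper instead argues that linearity makes the maximal subset of $\chase{\O}{\D}$ that is safe for the policy unique (since with single-atom bodies an atom is dangerous or not independently of the others), and uniqueness of the optimal censor follows from uniqueness of that view.
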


\subsection{OWL 2 Profiles}
\label{sec:view censors for EL and QL}
 
The result in Theorem \ref{th:guarded} is immediately applicable to
RL ontologies, with the only restriction that they do not contain rules
of types (1), (4), or (5) in Table \ref{tab:DL-axioms}.
In contrast to RL, the QL and EL profiles
provide means for capturing existentially quantified knowledge.
To bridge this gap, we show that every (guarded)
QL or EL CQE instance $\Inst = (\O,\D,P)$ can be polynomially 
trasformed into
a Datalog CQE instance $\Inst' = (\O',\D,P)$
by rewriting $\O$ into a (guarded and tree-shaped)
Datalog ontology $\O'$ in such a way that 
optimal views for 
$\Inst$ can be directly obtained
from those for $\Inst'$.
We start by specifying what constitutes an acceptable rewriting
$\O'$ of $\O$.

\begin{definition}
Let $\Const$ be a set of constants.%
A Datalog ontology $\O'$ is a $\Const$-\emph{rewriting} of an ontology $\O$
if $\cert{Q}{\O}{\D} = \cert{Q}{\O'}{\D}$ for each tree-shaped CQ $Q$ and dataset $\D$ over constants from $\Const$.
\end{definition} 

The following proposition provides the mechanism to
reduce optimal view computation for arbitrary ontologies
to the case of Datalog.

\begin{restatable}{proposition}{HornDatalog}
\label{prop:Horn2DatalogProp}
Let $\Inst  = (\O,\D,P)$ 
be a CQE instance over constants $\Const$ with $P$ tree-shaped,
 and 
$\O'$ a $\Const$-rewriting of $\O$
s.t.\ $\O' \models \O$. If $\View'$ is an optimal view for
$\Inst' = (\O', \D, P)$, then 
$\chase{\O'}{\View'}$ is an optimal view for $\Inst$.
\looseness=-1
\end{restatable}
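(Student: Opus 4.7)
Let $\View^{*} = \chase{\O'}{\View'}$; this is finite since $\O'$ is Datalog. The plan is to show that $\View^{*}$ is a confidentiality-preserving and optimal view for $\Inst$. The argument hinges on the identity
\[
\cert{Q}{\O}{\View^{*}} \;=\; \cert{Q}{\O'}{\View'}
\]
for every CQ $Q$. To establish this, I would first note that $\View^{*}$ is the least Herbrand (hence universal) model of $\O' \cup \View'$, so $\cert{Q}{\O'}{\View'} = \{\vec a \mid \View^{*} \models Q(\vec a)\}$; second, since $\O' \models \O$, every model of $\O'$ is a model of $\O$, so $\View^{*}$ already satisfies $\O$ and $\chase{\O}{\View^{*}} = \View^{*}$, giving $\cert{Q}{\O}{\View^{*}}$ equal to the same set.

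With this identity, confidentiality preservation of $\vcens{\Inst}{\View^{*}}$ follows quickly: optimality of $\View'$ for $\Inst'$ gives $\O' \cup \View' \not\models P(\vec b)$ for every $\vec b$; the identity then yields $\O \cup \View^{*} \not\models P(\vec b)$; and the standard observation that every CQ answer in $\Fcens{\vcens{\Inst}{\V}}$ is entailed by $\O \cup \V$ (so any model of $\O \cup \V$ witnesses consistency of the theory with $\neg P(\vec b)$) completes the step.

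For optimality, I would proceed by contradiction: suppose some CP censor $\cens$ for $\Inst$ dominates $\vcens{\Inst}{\View^{*}}$ strictly, witnessed by some $\vec a_0 \in \cens(Q_0) \setminus \vcens{\Inst}{\View^{*}}(Q_0)$. Define a candidate censor $\cens'$ for $\Inst'$ by $\cens'(Q) = \cens(Q) \cup \vcens{\Inst'}{\View'}(Q)$; since $\cens(Q) \subseteq \cert{Q}{\O}{\D} \subseteq \cert{Q}{\O'}{\D}$ (using $\O' \models \O$), this is a valid censor. Using the identity, the witness $\vec a_0$ lies in $\cert{Q_0}{\O}{\D}$ but not in $\cert{Q_0}{\O'}{\View'}$, hence not in $\vcens{\Inst'}{\View'}(Q_0)$; thus $\cens'$ strictly dominates $\vcens{\Inst'}{\View'}$.

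The main obstacle is showing that $\cens'$ remains CP for $\Inst'$, i.e., $\O' \cup \Fcens{\cens'} \not\models P(\vec b)$ for every $\vec b$. Since $\O' \models \O$ is a strengthening, CP of $\cens$ under $\O$ does not transfer directly. I would exploit both the rewriting property of $\O'$, which equates $\O$- and $\O'$-certain answers for tree-shaped CQs over $\Const$, and the fact that $P$ is tree-shaped, to argue that any putative $\O'$-derivation of $P(\vec b)$ from $\Fcens{\cens'} = \Fcens{\cens} \cup \Fcens{\vcens{\Inst'}{\View'}}$ can be re-routed, via a suitable chase or universal model, into either an $\O$-derivation from $\Fcens{\cens}$ (contradicting CP of $\cens$ for $\Inst$) or an $\O'$-derivation from $\Fcens{\vcens{\Inst'}{\View'}}$ (contradicting CP of $\View'$ for $\Inst'$). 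Isolating precisely how tree-shapedness of $P$ combines with the rewriting property is the technical heart of the proof.
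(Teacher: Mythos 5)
Your first half is sound: the identity $\cert{Q}{\O}{\View^*}=\cert{Q}{\O'}{\View'}$ (from $\O'\models\O$ and the fact that $\View^*=\chase{\O'}{\View'}$ is itself a model of $\O'$, hence of $\O$) is correct and yields confidentiality preservation cleanly. The gap is in the optimality half, and it sits exactly where you flag it as ``the technical heart'': showing that $\cens'$ with $\Fcens{\cens'}=\Fcens{\cens}\cup\Fcens{\vcens{\Inst'}{\View'}}$ is confidentiality preserving for $\Inst'$. This does not follow from confidentiality preservation of the two parts separately, and the dichotomy you propose --- re-routing any $\O'$-derivation of $P(\vec b)$ into either an $\O$-derivation from $\Fcens{\cens}$ alone or an $\O'$-derivation from $\Fcens{\vcens{\Inst'}{\View'}}$ alone --- fails for derivations that essentially combine facts from both sources. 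An arbitrary dominating confidentiality-preserving censor $\cens$ for $\Inst$ may answer positively some $Q_0(\vec a_0)$ that is harmless relative to $\O\cup\Fcens{\cens}$ yet compromises $P$ once put together with $\View'$ under the stronger $\O'$; nothing in your construction rules this out, so the union censor is not merely unproven but genuinely in doubt.

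The paper avoids this by first establishing a \emph{pointwise} characterisation of optimality (Proposition~\ref{prop:basic-view-properties}): $\vcens{\Inst}{\View}$ is optimal iff for every single CQ $Q$ and $\vec t\in\cert{Q}{\O}{\D}$, if $\O\cup\View\cup\{Q(\vec t)\}\not\models P(\vec s)$ for all $\vec s\in\cert{P}{\O}{\D}$ then $\vec t\in\cert{Q}{\O}{\View}$. Non-optimality of $\View^*$ then yields a single witness $Q$ with \emph{(i)} $\O\cup\D\models Q$, \emph{(ii)} $\O\cup\View^*\not\models Q$, and \emph{(iii)} $\O\cup\View^*\cup\{Q\}\not\models P(\vec s)$. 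From \emph{(i)} and $\O'\models\O$ one gets $\O'\cup\D\models Q$; from \emph{(iii)}, tree-shapedness of $P$ and the rewriting property applied to the dataset $\View^*\cup\freeze{Q}$ one gets $\O'\cup\View'\cup\{Q\}\not\models P(\vec s)$; optimality of $\View'$ for $\Inst'$ then forces $\O'\cup\View'\models Q$, i.e.\ $\View^*\models Q$, contradicting \emph{(ii)}. To repair your argument, derive (or cite) such a per-query characterisation and transfer a single harmless query rather than an entire censor; that is precisely the point at which tree-shapedness of $P$ and the rewriting property are needed, and the only one.
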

With this proposition at hand, 
we just need to devise a technique for rewriting any
QL (or guarded EL) ontology 
into a stronger Datalog ontology, which, however, 
preserves the answers to all tree-shaped queries. To this end, 
we exploit
techniques developed for the so-called \emph{combined approach}
to query answering \cite{DBLP:conf/ijcai/KontchakovLTWZ11,DBLP:conf/ijcai/LutzTW09,DBLP:conf/semweb/LutzSTW13,DBLP:conf/aaai/StefanoniMH13}. The idea is to  
transform rules of type (3) into Datalog by
Skolemising existentially quantified variables into globally 
fresh constants. Such transformation strengthens the
ontology; however, if applied to a QL or guarded EL
ontology, it preserves answers to tree-shaped CQs for any dataset
over~$\Const$ \cite{DBLP:conf/aaai/StefanoniMH13}.
 
\begin{definition}
\label{Xidef}
Let $\O$ be an ontology and $\Const$ be a set of constants. The ontology
$\dat{\O}{\Const}$ is obtained from $\O$ by replacing each rule 
$A(x)  \rightarrow  \exists y. [R(x,y) \wedge B(y)]$
 with
$$
A(x) {\rightarrow} R'(x,a), R'(x, y) {\rightarrow}  R(x, y), R'(x, y) {\rightarrow} B(y),
$$ 
where $R'$ is a fresh binary predicate, uniquely associated to the original rule, and $a$ is a globally fresh constant not from $\Const$, uniquely associated to $A$ and $R$.
\end{definition}

\begin{theorem}\label{thm:combined-approach}
For any ontology $\O$ we have $\dat{\O}{\Const} \models \O$.
Furthermore, if $\O$ is either a QL or guarded EL ontology,
then $\dat{\O}{\Const}$ is a $\Const$-rewriting of $\O$.
\end{theorem}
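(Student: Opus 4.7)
The plan is to prove the two claims in turn.

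For the first claim, $\dat{\O}{\Const} \models \O$, I would do a direct rule-by-rule check. Rules of $\O$ that are not of type (3) are copied unchanged into $\dat{\O}{\Const}$, so the argument reduces to showing, for each rule $A(x) \to \exists y.[R(x,y) \wedge B(y)]$ of $\O$, that the three Datalog replacements entail it: from $A(c)$ the first replacement gives $R'(c,a)$, and the other two then give $R(c,a)$ and $B(a)$, so $y = a$ witnesses the existential head. This part is essentially syntactic.

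For the $\Const$-rewriting claim, I fix a tree-shaped CQ $Q$, a dataset $\D$ over $\Const$, and a tuple $\vec a \in \Const^k$; the aim is to show $\O \cup \D \models Q(\vec a)$ iff $\dat{\O}{\Const} \cup \D \models Q(\vec a)$. The ``only if'' direction is immediate from the first claim, since every model of $\dat{\O}{\Const} \cup \D$ is also a model of $\O \cup \D$. For the ``if'' direction, my plan is to invoke the combined approach. Let $\I := \chase{\dat{\O}{\Const}}{\D}$, which is universal for $\dat{\O}{\Const} \cup \D$, and let $\tilde{\I}$ be the (possibly infinite) unravelling of $\I$ obtained by starting at the constants of $\D$ and, whenever an $R'$-edge points to the globally fresh witness constant $a$ introduced for a rule of type (3), introducing a fresh labelled null in place of $a$ and continuing to unfold its outgoing $R'$-edges recursively. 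The collapsing map $\pi : \tilde{\I} \to \I$ is the identity on $\Const$ and sends each fresh null back to its corresponding $a$.

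I would then carry out two sub-steps. First, verify that $\tilde{\I}$ is a universal model of $\O \cup \D$: rules in $\O$ are satisfied because each application of a rule of type (3) now has a private witness, and universality is the content of the combined-approach results of \citeA{DBLP:conf/aaai/StefanoniMH13} for exactly the classes QL and guarded EL, which I would invoke directly. Second, given the homomorphism $h : Q(\vec a) \to \I$ witnessing $\dat{\O}{\Const} \cup \D \models Q(\vec a)$, lift it along $\pi$ to $\tilde h : Q(\vec a) \to \tilde{\I}$. The construction is by induction on the tree of $Q$: root it at an answer variable or a constant of $Q$ (whose image lies in $\Const \subseteq \tilde{\I}$), and for each tree edge $R(x,y)$ with $\tilde h(x)$ already defined, pick the (necessarily existing) $R$-successor of $\tilde h(x)$ in $\tilde{\I}$ that $\pi$ projects to $h(y)$.

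The hard part will be this lifting, and its correctness hinges on the tree-shape of $Q$: cycles in $Q$ through a single witness constant $a$ would force distinct copies of $a$ in $\tilde{\I}$ to be identified, which generally fails. Tree-shapedness guarantees that no two branches of the induction ever need to reach the same copy of $a$, so the greedy choice is consistent; unary atoms on each variable are preserved because $\pi$ is a homomorphism. The guardedness of $\O$ is what prevents the chase from introducing unintended identifications, and it is exactly the condition the combined approach needs in order for $\tilde{\I}$ to satisfy $\O$.
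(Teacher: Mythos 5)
The paper offers no proof of this theorem at all---it is justified solely by the citation to \citeA{DBLP:conf/aaai/StefanoniMH13}---so your reconstruction via the chase of $\dat{\O}{\Const}$ and its unravelling is the natural route, and your handling of the first claim and of the easy inclusion $\cert{Q}{\O}{\D} \subseteq \cert{Q}{\dat{\O}{\Const}}{\D}$ is fine. The gap is in the lifting step, which is exactly where the known difficulty of the combined approach lives. Your induction tacitly assumes that every tree edge of $Q$ is traversed in the direction in which the unravelling $\widetilde{\mathcal{I}}$ is generated, i.e., from a node to one of its freshly created successors. But the paper's notion of tree-shapedness is purely \emph{undirected}, so the induction may have to extend $\tilde h$ from a variable $y$ already mapped to an anonymous copy of a Skolem constant to a \emph{predecessor} $x$ with $R(x,y)$ in $Q$; in $\widetilde{\mathcal{I}}$ that copy has a unique parent, so the required $\pi$-preimage need not exist. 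Concretely, take the QL ontology $\{A(x) \rightarrow \exists y.[R(x,y) \wedge \top(y)]\}$, dataset $\{A(a_1), A(a_2)\}$, and the tree-shaped query $Q(x_1,x_2) = \exists y.\, R(x_1,y)\wedge R(x_2,y)$: the chase $\chase{\dat{\O}{\Const}}{\D}$ contains $R(a_1,c)$ and $R(a_2,c)$ for the single Skolem constant $c$, so $(a_1,a_2)$ is an answer over the rewriting, yet $a_1$ and $a_2$ have distinct anonymous witnesses in every unravelled model of $\O\cup\D$, so both your lifting and the entailment over $\O$ fail.

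So "no two branches of the induction reach the same copy of $a$" is not the right invariant, and as stated it is false: the problem is not two branches colliding on one copy but the query entering the anonymous part against the generation order (the \emph{fork} and \emph{cycle} patterns that the filtering procedures of the combined-approach papers are designed to detect). A correct proof must either work with a directed, rooted notion of tree-shapedness (roots at constants and answer variables, edges oriented away from the root so Skolem images are only reached forwards), or explicitly show that any match of a tree-shaped query into $\chase{\dat{\O}{\Const}}{\D}$ can be re-routed to avoid such backwards entries---and the fork example shows the latter fails under the paper's literal definitions. As it stands, your argument leaves the essential step unestablished, precisely the step the paper itself delegates to the cited work.
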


Proposition~\ref{prop:Horn2DatalogProp} and Theorem \ref{thm:combined-approach}
ensure that $\chase{\dat{\O}{\Const}}{\View}$ is an optimal view for $\Inst$ 
whenever $\View$ is such a view for $\Inst' = (\dat{\O}{\Const},\D,P)$. The transformation of $\O$ to $\dat{\O}{\Const}$ 
preserves linearity, guardedness, and tree-shapedness, 
so the results of Section~\ref{sec:data-view-RL} are applicable to $\Inst'$.

\begin{theorem}\label{def:view-EL-QL-optimal}
Every guarded EL CQE instance  admits
an optimal view that can be computed in exponential time.
Every QL instance
admits a unique optimal censor, which is implementable by a view of
polynomial size.
\end{theorem}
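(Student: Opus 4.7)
The plan is to obtain both claims by reducing each case to Theorem~\ref{th:guarded} through the Skolem-style rewriting $\dat{\O}{\Const}$ of Definition~\ref{Xidef} and then transporting an optimal view back to the original instance via Proposition~\ref{prop:Horn2DatalogProp}. Given $\Inst = (\O,\D,P)$, I let $\Const$ denote the constants appearing in $\Inst$, set $\O' = \dat{\O}{\Const}$, and form $\Inst' = (\O',\D,P)$. The first step is a case analysis on the rule types in Table~\ref{tab:DL-axioms}: the replacement in Definition~\ref{Xidef} turns each rule of type~(3) into three Datalog rules that are simultaneously linear, guarded, and tree-shaped, while all other EL and QL rules are already Datalog. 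Hence if $\Inst$ is guarded EL (and tree-shaped, as is automatic for the EL rule forms in the table), then $\Inst'$ is guarded tree-shaped Datalog; and if $\Inst$ is QL, then $\Inst'$ is linear Datalog, hence tree-shaped. The policy is already tree-shaped in both profiles: in QL because every admissible policy rule has a single body atom, and in EL because the admissible body patterns form trees.

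By Theorem~\ref{thm:combined-approach}, $\O' \models \O$ and $\O'$ is a $\Const$-rewriting of $\O$, so Proposition~\ref{prop:Horn2DatalogProp} applies: from any optimal view $\View'$ for $\Inst'$, the set $\chase{\O'}{\View'}$ is an optimal view for $\Inst$. For the EL case, Theorem~\ref{th:guarded} applied to $\Inst'$ yields such a $\View'$ in time exponential in $|\Inst|$ and polynomial in $|\D|$; materialising the chase adds only a polynomial overhead, leaving an exponential total bound. For the QL case, Theorem~\ref{th:guarded} gives a unique optimal censor for $\Inst'$ together with an optimal view $\View'$ of polynomial size, whose chase $\chase{\O'}{\View'}$ is then a polynomial-size optimal view for $\Inst$.

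The last thing to check is uniqueness of the optimal censor for the QL instance $\Inst$ itself. The argument I plan to use is contrapositive: two distinct optimal censors for $\Inst$, each realised by an optimal view for $\Inst$, would, by running $\O'$-chase on these views and invoking the $\Const$-rewriting property of $\O'$ on tree-shaped CQs, induce two distinct optimal views for $\Inst'$ and hence two distinct optimal censors for $\Inst'$, contradicting the linear-Datalog uniqueness clause of Theorem~\ref{th:guarded}.

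The main obstacle is the bookkeeping in the first paragraph: verifying that guardedness, linearity, tree-shapedness, and the tree-shapedness of the policy genuinely survive the rewriting $\dat{\O}{\Const}$, since Theorem~\ref{th:guarded} and Proposition~\ref{prop:Horn2DatalogProp} are sensitive to all of these invariants simultaneously. The second subtlety is the uniqueness transfer for QL, which is not stated in Proposition~\ref{prop:Horn2DatalogProp} and has to be argued directly as above, using the fact that $\cert{Q}{\O}{\cdot}$ and $\cert{Q}{\O'}{\cdot}$ coincide on tree-shaped CQs over $\Const$.
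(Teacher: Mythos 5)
Your overall route is exactly the paper's: Theorem~\ref{def:view-EL-QL-optimal} is obtained there by composing Definition~\ref{Xidef}, Theorem~\ref{thm:combined-approach}, Proposition~\ref{prop:Horn2DatalogProp}, and Theorem~\ref{th:guarded}, after observing precisely what you check in your first paragraph --- that $\dat{\O}{\Const}$ replaces each type-(3) rule by three rules that are linear, guarded, and tree-shaped, so a QL instance becomes linear (hence multi-linear and tree-shaped) Datalog and a guarded EL instance becomes guarded tree-shaped Datalog, with the policy staying tree-shaped. The existence and complexity claims are therefore handled correctly and in the same way as in the paper.

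The one genuine gap is in your uniqueness transfer for QL. Your contrapositive begins with ``two distinct optimal censors for $\Inst$, \emph{each realised by an optimal view for $\Inst$}'', but a censor is by definition an arbitrary function from CQs to subsets of certain answers, and nothing established so far says that every optimal censor for $\Inst$ is a view censor; assuming view-realisability here is essentially assuming the conclusion. Moreover, even for view-realised censors the step ``distinct optimal censors for $\Inst$ induce distinct optimal censors for $\Inst'$'' needs care, since censors for $\Inst$ and $\Inst'$ have different ranges ($\cert{Q}{\O}{\D} \subseteq \cert{Q}{\O'}{\D}$, with equality guaranteed only for tree-shaped $Q$), and confidentiality is judged against $\O$ in one case and against $\O'$ in the other. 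A correct argument should work with arbitrary censors directly, e.g.\ by showing that $\Fcens{\cens}$ is forced to be the unique maximal confidentiality-preserving set of entailed CQs (mirroring the way the linear-Datalog uniqueness in Theorem~\ref{th:guarded} is proved via the unique maximal safe subset of the chase), and only then identifying that censor with $\vcens{\Inst}{\chase{\O'}{\View'}}$. To be fair, the paper itself states the uniqueness claim without spelling out this transfer, so you are attempting to close a gap the authors leave implicit --- but your particular sketch does not close it.
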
  %
\section{Optimal Obstruction Censors}
\label{sec:obstruction-optimal}

Similarly to Section~\ref{sec:view-censors}, we start the study of optimal obstruction censors with its limitations. 
The following example shows that such a censor may not exist even if we restrict ourselves to ontologies with 
only one  rule.
 
\begin{example}
	\label{example:no obstruction censor}
	Consider a CQE instance with ontology
$\set{R(x,y) \wedge A(y)\rightarrow A(x)}$, dataset
$\{R(a,a), A(a)\}$, and policy $A(a)$.
Let $Q_n$, $n > 0$, be a family of Boolean CQs
\begin{align*}
        \exists \vec x.\, R(a,x_1) \wedge R(x_1,x_2) \wedge \!\cdots \!\wedge
	R(x_{n-1},x_n) \wedge A(x_n).
\end{align*}
With the help of the ontology 
each of $Q_n$ 
discloses the policy.
Thus, 
each $Q_n$ should entail 
a Boolean CQ in any optimal obstruction.
Consider the set of all CQs that are entailed by queries $Q_n$ but not equivalent to any of them. On the one hand, this set is ``harmless'', than is, any obstruction censor should answer all these queries positively. On the other hand, the CQs $Q_n$ do not entail each other.
Hence, any optimal obstruction should contain a CQ equivalent to each $Q_n$, which is however not possible, because $n$ is unbounded.
\end{example}

We leave the question of decidability of checking the existence of an optimal obstruction for a CQE instance open. 
Answering this question positively would imply a solution to a long-standing open problem.
In Appendix~\ref{app:reduction}
we provide a reduction from the problem of uniform 
boundedness for binary Datalog, for which the decidability is unknown~\cite{DBLP:journals/siamcomp/Marcinkowski99}, 
to the existence problem of optimal obstructions for Datalog CQE instances. 
In the rest of the section we give a characterisation of optimal 
obstructions for Datalog instances in terms of resolution proofs 
and identify restrictions for which the characterisation guarantees 
existence of such obstructions.

\subsection{Characterisation of Optimal Obstructions}
\label{sec:characterisation-obstruction}
We first recall the
standard notion of SLD resolution. 

A \emph{goal} is a conjunction of atoms.
An \emph{SLD resolution step}
takes a goal $\beta \wedge \phi$ with a selected atom $\beta$  
and a sentence $r$ that is either a Datalog rule $\psi \rightarrow \delta$ or a fact $\delta$, and produces a new goal 
$\phi\theta \wedge \psi\theta$,
where $\theta$ is a most general unifier 
of $\beta$ and $\delta$ (assuming that $\psi$ is empty in the case when $r$ is a fact). An \emph{(SLD) proof} of a goal $G_0$ in a 
Datalog ontology $\O$ and dataset $\D$ is a sequence of goals
$G_0, G_1, \ldots, G_{n}$, where $G_n$ is empty, and each $G_{i}$ is obtained from $G_{i-1}$ and
a sentence (rule or fact) in $\O \cup \D$ by an SLD resolution step.

Resolution is sound and complete:
for any Datalog ontology $\O$, dataset $\D$,
and goal $G$ (such that $\O \cup \D$ is satisfiable) there is a proof of $G$ in $\O$ and $\D$
if and only if $\O \cup \D \models \exists^* G$ for 
 the existential closure $\exists^* G$~of~$G$.

We next characterise optimal obstructions using 
SLD proofs. %
Intuitively, if an obstruction censor answers positively
sufficient number of Boolean CQs $\exists^* G$
 for goals $G$ in a proof of a policy, then a user could
reconstruct (a part of) this proof 
and compromise the policy. 
Also, there can be many proofs,
and a user may compromise
the policy by reconstructing any of them.
Thus, 
to ensure that a censor is confidentiality preserving, 
we must guarantee that
the obstruction contains enough CQs 
to prevent reconstruction of any proof. 
If we also want the censor to be optimal, the obstruction should not block too many CQs.
As we will see later on, these requirements may be in conflict and
lead to an infinite ``obstruction''. 
Next definitions formalise this intuition.

\begin{definition}
\label{def:pseudo obstruction}
Let $\Inst = (\O,\D,P)$ be a Datalog CQE instance, $\goals{\Inst}$ be the set of all Boolean CQs $\exists^* G$ for
goals $G$ in proofs of $P(\vec a)$ in $\O$ and $\D$
for some tuple of constants $\vec a$, and $\mathbb S$ be a maximal subset of $\goals{\Inst}$ such that $\O \cup \mathbb S \not\models P(\vec a)$ for 
any $\vec a$.
Then, a \emph{\pseudo} for $\Inst$ is 
a subset of $\goals{\Inst} \setminus \mathbb S$ that contains a CQ $Q'$ for any $Q$ in $\goals{\Inst} \setminus \mathbb S$ with $Q \models Q'$.
\end{definition} 

 The next theorem establishes
 the connection between pseudo-obstructions 
 and optimality.

\begin{restatable}{theorem}{optimalUCQcensor}
\label{th:characterisation}
Let $\Inst$ be a Datalog CQE instance.
\begin{enumerate}[leftmargin=*,noitemsep,topsep=0pt]	
  \item If $\Upsilon$ is a finite \pseudo for $\Inst$, then 
  $\bigvee_{Q \in \Upsilon} Q$ is an optimal obstruction for $\Inst$.
  \item If each \pseudo for $\Inst$ is infinite, then no optimal obstruction
  censor for $\Inst$ exists.
\end{enumerate}
\end{restatable}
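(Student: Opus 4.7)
The plan is to establish both requirements of optimality — confidentiality and maximality — for $\cens := \ocens{\Inst}{U}$, where $U = \bigvee_{Q \in \Upsilon} Q$, and to prove Part 2 by its contrapositive. A useful preliminary inclusion is $\mathbb{S} \subseteq \Fcens{\cens}$ (viewed as Boolean CQs): any $R \in \mathbb{S}$ must satisfy $R \not\models U$, for otherwise $R \models Q'$ for some $Q' \in \Upsilon \subseteq \goals{\Inst} \setminus \mathbb{S}$, whence $\mathbb{S} \models Q'$ and the maximality of $\mathbb{S}$ (applied to $\mathbb{S} \cup \{Q'\}$) would force $\O \cup \mathbb{S} \models P(\vec d)$ for some $\vec d$, contradicting the defining property of $\mathbb{S}$.

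For Part 1 confidentiality, assume towards contradiction that $\O \cup \Fcens{\cens} \models P(\vec c)$. By compactness and SLD completeness, there is a proof of $P(\vec c)$ from $\O$ together with the Skolemizations of a finite $\{Q_1,\dots,Q_m\} \subseteq \Fcens{\cens}$. Reorder so that all ontology-rule resolutions precede all fact resolutions, take $G_k$ to be the goal at the transition point, and set $\hat Q := \exists^{*} G_k$. Standard manipulations yield $\O \cup \{\hat Q\} \models P(\vec c)$, $\O \cup \D \models \hat Q$, and (by continuing the proof through $\D$-facts) $\hat Q \in \goals{\Inst}$. If $\hat Q \in \mathbb{S}$, one contradicts $\O \cup \mathbb{S} \not\models P$ at once; otherwise $\hat Q \in \goals{\Inst} \setminus \mathbb{S}$ and the pseudo-obstruction property produces $Q' \in \Upsilon$ with $\hat Q \models Q'$, and a structural analysis of the witnessing homomorphism (exploiting the pairwise freshness of Skolem constants for distinct $Q_i$) forces $Q'$ to embed into a single conjunct $Q_j$, so $Q_j \models Q' \models U$, contradicting $Q_j \in \Fcens{\cens}$.

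For Part 1 maximality, take any $Q(\vec b) \in \cert{Q}{\O}{\D} \setminus \cens(Q)$. By construction $Q(\vec b) \models U$, hence $Q(\vec b) \models Q'$ for some $Q' \in \Upsilon$; by maximality of $\mathbb{S}$ there is $\vec d$ with $\O \cup \mathbb{S} \cup \{Q'\} \models P(\vec d)$, and since $\mathbb{S} \subseteq \Fcens{\cens}$ and $Q(\vec b) \models Q'$ we obtain $\O \cup \Fcens{\cens} \cup \{Q(\vec b)\} \models P(\vec d)$, so $\cens$ admits no strict confidentiality-preserving extension. For Part 2, I argue the contrapositive: given an optimal obstruction $U$, use Zorn's lemma to choose a maximal $\mathbb{S} \subseteq \goals{\Inst}$ containing $\Fcens{\cens} \cap \goals{\Inst}$ with $\O \cup \mathbb{S} \not\models P$, and take $\Upsilon$ to consist of the finitely many disjuncts of $U$ lying in $\goals{\Inst} \setminus \mathbb{S}$, supplemented by a suitable finite selection of covers from $\goals{\Inst} \setminus \mathbb{S}$ for any remaining disjunct. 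Every $Q \in \goals{\Inst} \setminus \mathbb{S}$ must lie outside $\Fcens{\cens}$ (else $\mathbb{S}$ would not be maximal), hence $Q \models U$, and one verifies $Q \models Q'$ for some $Q' \in \Upsilon$, yielding a finite pseudo-obstruction.

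The chief technical difficulty lies in Case 2 of Part 1 confidentiality: knowing $\hat Q \models Q'$ does not on its face localize to one conjunct $Q_i$, because the witnessing homomorphism may spread $Q'$'s atoms across several $Q_i$. The resolution exploits the freshness of per-$Q_i$ Skolem constants: any shared existential of $Q'$ must map either to a data constant appearing ground in two distinct $Q_i$ or to existentials within a single $Q_i$. In the former case, the linking ground atoms would make those $Q_i$ themselves entail a disjunct of $U$, contradicting their presence in $\Fcens{\cens}$; in the latter, $Q'$ embeds into one $Q_j$ outright. Equivalently, one may reason directly on the universal model $\chase{\O}{\Fcens{\cens}^{*}}$: since its Skolem constants never coincide with data constants, no derivation of $P(\vec c)$ over data constants can be assembled, because the $Q_i$ needed to pin the critical ground atoms are precisely the CQs blocked by $U$.
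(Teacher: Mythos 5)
Your overall architecture coincides with the paper's: Part~1 is split into confidentiality and maximality, confidentiality is attacked via a normalised SLD proof of $P(\vec c)$ from $\O$ and the frozen queries with the dichotomy ``in $\mathbb{S}$ or covered by $\Upsilon$'', maximality uses $\mathbb{S}\subseteq\Fcens{\cens}$ together with the maximality of $\mathbb{S}$, and Part~2 extracts a finite pseudo-obstruction from the disjuncts of a hypothetical optimal obstruction. The maximality argument and the Part~2 sketch are essentially the paper's (the latter is stated somewhat loosely but the idea is right).

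However, there is a genuine gap at exactly the point you flag as the ``chief technical difficulty''. You apply the dichotomy only to the \emph{whole} frontier goal $\hat Q=\exists^{*}G_k$, and in the case $\hat Q\notin\mathbb{S}$ you must blame a single $Q_j$; your resolution decomposes the covering disjunct $Q'$ along the data constants and asserts that ``the linking ground atoms would make those $Q_i$ themselves entail a disjunct of $U$''. Nothing you have established supports this: the fragments of $Q'$ landing in the several $\freeze{Q_i}$ are arbitrary sub-conjunctions of $Q'$ grounded at data constants, and you have not shown they belong to $\goals{\Inst}$, let alone to $\goals{\Inst}\setminus\mathbb{S}$ or that they entail members of $\Upsilon$; so no contradiction with $Q_i\in\Fcens{\cens}$ follows. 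The paper's proof closes this by decomposing the frontier goal $G_k$ itself (not $Q'$) into conjuncts $B_i\theta_i$ according to which $\freeze{Q_i}$ supplies the facts proving them, and then establishing three facts: each $B_i\theta_i$ mentions only constants of $\O\cup\D$ (freshness of frozen constants), each $\exists^{*}B_i\theta_i$ occurs as a subgoal of a genuine proof of $P(\vec s)$ over $\O\cup\D$ and hence lies in $\goals{\Inst}$, and $Q_i\models\exists^{*}B_i\theta_i$. The piece-wise dichotomy then works: a piece outside $\mathbb{S}$ is covered by $\Upsilon$ and blocks $Q_i$, while if all pieces lie in $\mathbb{S}$ then, since after instantiation they share no variables, $\O\cup\mathbb{S}\models P(\vec s)$ --- a contradiction in either case (note this also makes your case split on whether $\hat Q\in\mathbb{S}$ unnecessary). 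To repair your proof you need precisely the membership of the per-query pieces of the frontier goal in $\goals{\Inst}$; without it the localisation step does not go through.
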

 
This theorem has implications on 
the expressive power of obstructions. In particular, we can now extend
the result in Theorem \ref{th:no simulation in general}, 
which applies to censors that are not necessarily optimal, to capture also optimality.

\begin{restatable}{theorem}{nonexistenceobs}\label{th:non-existence-comparison}
There is a CQE instance, which is both RL and EL, 
admitting an optimal view, but no optimal obstruction.
Conversely, there exists an RL CQE instance that admits an optimal
obstruction, but no optimal view.
\end{restatable}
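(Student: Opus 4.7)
The plan is to supply explicit witnesses for both directions by re-using the instances already presented in the excerpt.

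For the first statement, I would take the CQE instance $\Inst_1$ of Example~\ref{example:no obstruction censor}, namely $\O_1 = \{R(x,y) \wedge A(y) \rightarrow A(x)\}$, $\D_1 = \{R(a,a), A(a)\}$ and policy $P_1 = A(a)$. The sole rule is of type~(13) in Table~\ref{tab:DL-axioms}, which is admitted by both the RL and the EL profile, and the auxiliary policy rule $A(a) \rightarrow A_{P_1}$ introduces no forbidden axiom; hence $\Inst_1$ is simultaneously RL and EL. The rule is guarded (with $R(x,y)$ as a guard) and tree-shaped (its body multigraph is a single edge), so Theorem~\ref{th:guarded} provides an optimal view. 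For non-existence of an optimal obstruction, I would recall from Example~\ref{example:no obstruction censor} that SLD resolution of $P_1$ against $\O_1 \cup \D_1$ produces the infinite family $Q_n = \exists x_1 \ldots x_n.\, R(a,x_1) \wedge R(x_1,x_2) \wedge \cdots \wedge R(x_{n-1},x_n) \wedge A(x_n)$; every $Q_n$ entails $P_1$ modulo $\O_1$, but the $Q_n$ form a $\models$-antichain. Consequently every pseudo-obstruction must contain a representative of each $Q_n$ and is therefore infinite, and Theorem~\ref{th:characterisation}(2) rules out an optimal obstruction.

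For the second statement, I would take $\Inst_2$ from Example~\ref{ex:no optimal view}: empty ontology $\O_2$, dataset $\{R(a,a)\}$ and policy $P_2 = \exists x \exists y \exists z.\, R(x,y) \wedge R(y,z) \wedge R(z,x)$. The empty ontology is vacuously RL, and Example~\ref{ex:no optimal view} already establishes that no optimal view exists. To exhibit an optimal obstruction, I would compute a finite pseudo-obstruction and apply Theorem~\ref{th:characterisation}(1). Since $\O_2$ has no rules, every SLD step must resolve against the fact $R(a,a)$, forcing the arguments of the selected atom to $a$; hence, modulo equivalence, $\goals{\Inst_2} = \{P_2,\, \exists z.\, R(a,z) \wedge R(z,a),\, R(a,a),\, \top\}$. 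Of these, only $P_2$ and $R(a,a)$ entail $P_2$; the CQ $\exists z.\,R(a,z) \wedge R(z,a)$ does not, as the two-element structure with edges $R(a,b)$ and $R(b,a)$ contains no triangle. Thus $\goals{\Inst_2} \setminus \mathbb{S} = \{P_2, R(a,a)\}$, and because $R(a,a) \models P_2$, the singleton $\{P_2\}$ is a pseudo-obstruction; Theorem~\ref{th:characterisation}(1) then yields $P_2$ itself as an optimal obstruction.

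The main obstacle is the $\models$-antichain argument in the first statement, specifically ruling out that a single CQ in $\goals{\Inst_1} \setminus \mathbb{S}$ could serve as the obstructing disjunct for two distinct $Q_n$'s. The key lemma would assert that any $Q^* \in \goals{\Inst_1}$ whose conjunction with $\O_1$ entails $P_1$ must contain a chain of $R$-atoms from the constant $a$ to some $A$-atom, and any homomorphism from $Q^*$ into $Q_n$ is forced, because $a$ is a constant and the underlying $R$-graph of $Q_n$ is a rooted chain, to map this chain onto the unique chain of $Q_n$, pinning its length to exactly $n$. Hence a given $Q^*$ can block at most one of the $Q_n$, and any pseudo-obstruction must contain infinitely many non-equivalent CQs.
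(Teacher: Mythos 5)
Your first witness and its analysis coincide with the paper's: the same instance $(\{R(x,y)\wedge A(y)\rightarrow A(x)\},\{R(a,a),A(a)\},A(a))$, an optimal view via Theorem~\ref{th:guarded} (the instance is guarded and tree-shaped, and the rule is of type~(13), hence both RL and EL), and non-existence of an optimal obstruction via the antichain of chain-queries $Q_n$ from Example~\ref{example:no obstruction censor} together with Theorem~\ref{th:characterisation}(2). Your key lemma pinning the image of any blocking CQ to a chain of fixed length is exactly the detail the paper leaves implicit in that example, and it is correct.

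The second half has a genuine gap: your witness is not an RL CQE instance. By the paper's definition, $\Inst$ belongs to a profile only if $\O\cup\{\varphi(\vec x,\vec y)\rightarrow A_p(\vec x)\}$ does, where $\varphi$ is the body of the policy. For $P_2=\exists x\,\exists y\,\exists z.\,R(x,y)\wedge R(y,z)\wedge R(z,x)$ the induced rule has a cyclic three-atom body that matches none of the axiom types of Table~\ref{tab:DL-axioms}, so the instance is not RL (the paper even remarks, just after Example~\ref{ex:no optimal view}, that this instance is neither guarded nor tree-shaped \emph{because of the policy}). Since the theorem asserts the existence of an \emph{RL} instance with an optimal obstruction but no optimal view, the vacuous-RL reading of the empty ontology does not suffice. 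The paper's witness is instead $\O_2=\{R(x_1,y)\wedge R(x_2,y)\rightarrow x_1\approx x_2,\ R(x,y)\rightarrow A(y)\}$, $\D_2=\{R(a,a)\}$, $P_2=A(a)$: here the equality-generating rule is what defeats every finite view (non-existence is inherited from the earlier ISWC work), while $A(a)\vee\exists x.\,R(x,a)$ is an optimal obstruction because there is essentially a single proof of $A(a)$ with subgoal $R(x,a)$. Your computation of the finite pseudo-obstruction $\{P_2\}$ for your own instance is correct as far as it goes, but it proves a weaker statement than the theorem claims.
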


\subsection{Linear Datalog and QL}
\label{sec:obstruction for OWL 2 QL}

We now show how to apply resolution-based techniques to 
compute optimal obstructions for linear Datalog CQE instances and then adapt the results to QL.
In fact, we can guarantee not only existence of optimal obstructions for such instances, but also uniqueness and polynomiality 
of corresponding censors.

Our solution for linear Datalog instances is based on the 
computation of the set $\goals{\Inst}$ of existential closures of 
goals in the proofs of policies. However, since all the rules 
in the ontology are linear and the body of the policy is an atom (recall that the rule corresponding to the policy should be linear as well), each of these goals consists of a single atom, except the last goal in each proof, which is empty. There are only polynomial number of such atoms (up to renaming of variables). So, all the proofs can be represented by a single finite \emph{proof graph} with atoms and the empty conjunction (denoted by $\top$) as nodes, and SLD resolution steps as edges. This is illustrated by the following example.

\begin{example}
\label{ex:graph-proof}
Consider a CQE instance with ontology
$$
\set{\likes(x,y) {\rightarrow} \film(y), \likes(x,y) {\rightarrow} \movieFan(x)},
$$
dataset $\likes(\john, \movie)$,
and policy $\movieFan(\john)$.
A fragment of the proof graph  is given in Figure~\ref{fig:proof_graph}.
\end{example}

Using proof graphs we can compute optimal censors.

\begin{restatable}{theorem}{algorithmForQLIsCorrect}
\label{thm:optimal censors for linear RL}
Let $\Inst = (\O, \D, P)$ be a linear Datalog CQE instance, and
	let $S$ be the set of all nodes 
	in the proof graph of $\O \cup \D$ 
	on the paths from facts $P(\vec a)$ with any tuple of constants $\vec a$ to $\top$. 
Then, the Boolean UCQ
\begin{align*}
U =  \bigvee\nolimits_{G \in S \setminus \{\top\}} 
  	\exists^*G
\end{align*}
is an optimal obstruction 
computable in polynomial time, and  
 $\ocens{\Inst}{U}$ is the unique optimal censor for \Inst.
\end{restatable}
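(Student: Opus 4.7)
The plan is to apply Theorem~\ref{th:characterisation}(1) by exhibiting a finite pseudo-obstruction that yields exactly $U$, and then to argue polynomial computability and uniqueness of the censor separately.

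The central step is to show that in the linear setting every non-trivial element of $\goals{\Inst}$ is by itself harmful: for each $G \in S \setminus \{\top\}$ there is a tuple $\vec a$ with $\O \cup \{\exists^* G\} \models P(\vec a)$. Fix a path $P(\vec a) = G_0, G_1, \ldots, G_k = G, \ldots, G_n = \top$ in the proof graph of $\O \cup \D$. Linearity of $\Inst$ guarantees that every $G_i$ with $i < n$ is a single atom, and that each step $G_i \to G_{i+1}$ with $i < n$ uses a rule $\psi_i \to \delta_i$ of $\O$ via a most general unifier $\theta_i$ with $G_{i+1} = \psi_i \theta_i$ and $G_i \theta_i = \delta_i \theta_i$. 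Reading the step forward, any valuation witnessing $\exists^* G_{i+1}$ instantiates $\psi_i \theta_i$ and, through the rule, yields $\delta_i \theta_i = G_i \theta_i$, hence witnesses $\exists^* G_i$; iterating from $k$ down to $0$ gives $\O \cup \{\exists^* G\} \models P(\vec a)$. Consequently the maximal safe set $\mathbb S$ from Definition~\ref{def:pseudo obstruction} can only contain elements equivalent to $\top$, so $\goals{\Inst} \setminus \mathbb S = \{\exists^* G \mid G \in S \setminus \{\top\}\}$, and this set is trivially closed under $Q \models Q'$ by taking $Q' = Q$. It is therefore a pseudo-obstruction, and Theorem~\ref{th:characterisation}(1) delivers that $U$ is an optimal obstruction.

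For the polynomial bound, linearity together with arity at most two implies that, up to renaming of variables, every non-$\top$ node of the proof graph is a single atom determined by its predicate and by at most two positions each occupied either by a constant of $\O \cup \D$ or by one of a bounded set of variables; the graph therefore has polynomially many nodes and polynomially many edges (one per applicable rule or fact), so $S$ is computed in polynomial time by intersecting backward reachability from $\top$ with forward reachability from the atoms $P(\vec a)$. Uniqueness of the optimal censor follows from the same individual-harmfulness claim: if $\cens'$ is any optimal censor whose theory $\Fcens{\cens'}$ contained a CQ $Q'(\vec b)$ with $Q'(\vec b) \models \exists^* G$ for some $G \in S \setminus \{\top\}$, then $\O \cup \Fcens{\cens'} \models P(\vec a)$, contradicting confidentiality; hence $\Fcens{\cens'} \subseteq \Fcens{\ocens{\Inst}{U}}$, and since $\ocens{\Inst}{U}$ is itself confidentiality preserving, maximality of $\cens'$ forces equality of the theories and thus of the censors. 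The hard step is the forward-reading argument in the second paragraph: linearity is essential there because a non-linear rule body would introduce side premises into the SLD step whose satisfaction cannot be recovered from $\exists^* G$ alone, so the propagation from $G$ back to $P(\vec a)$ need no longer go through.
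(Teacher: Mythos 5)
Your proposal is correct and follows essentially the same route as the paper's proof: identify $S$ with $\goals{\Inst}$, observe that in the linear case every non-trivial goal on such a path is individually harmful (so the maximal safe set $\mathbb S$ is trivial and $S\setminus\{\top\}$ is itself a finite pseudo-obstruction), apply Theorem~\ref{th:characterisation}, and bound the proof graph polynomially. Your write-up is in fact more explicit than the paper's terse argument, particularly in the forward-reading propagation step and the uniqueness argument, but it introduces no new ideas beyond it.
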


\begin{example}
For the instance in Example~\ref{ex:graph-proof}
there is only one path in the proof graph from the policy to $\top$, and 
	$S = 
	\{\movieFan(\john),\likes(\john,y), \top\}$.
Thus, $movieFan(\john) \lor \exists y.\likes(\john,y)$ 
is optimal.
\end{example}

Finally, note that the transformation of a QL ontology $\O$ to an RL ontology $\dat{\O}{\Const}$
	given in Definition~\ref{Xidef},
	preserves linearity of rules.
Hence, Proposition~\ref{thm:combined-approach} with Theorem~\ref{thm:optimal censors for linear RL}
	yield the following result. 
 
\begin{restatable}{theorem}{obstructionForQL}
\label{thm:obstruction censor for QL}
Every QL CQE instance admits
a unique optimal censor based 
on an obstruction that can be
computed in polynomial time.
\end{restatable}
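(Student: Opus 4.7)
The plan is to reduce to the linear Datalog setting via the combined-approach rewriting $\dat{\O}{\Const}$ from Definition~\ref{Xidef} and then invoke Theorem~\ref{thm:optimal censors for linear RL}. First I would verify that, when $\O$ is QL, the ontology $\dat{\O}{\Const}$ is a \emph{linear} Datalog ontology: each QL rule of type~(3) is rewritten into three rules with single-atom bodies, while the other QL rule types---(2), (8), (10), (12)---are already linear and unchanged. Since the QL restriction on CQE instances forces the policy $P$ to have a single-atom body as well, the instance $\Inst' = (\dat{\O}{\Const}, \D, P)$ is a linear Datalog CQE instance of size polynomial in $\vert \Inst \vert$, and applying Theorem~\ref{thm:optimal censors for linear RL} to $\Inst'$ yields a polynomial-time computable Boolean UCQ $U'$ such that $\ocens{\Inst'}{U'}$ is the unique optimal censor for $\Inst'$.

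I would then argue that the same $U'$ induces the unique optimal censor for $\Inst$ itself. Confidentiality preservation transfers from $\Inst'$ to $\Inst$ immediately: since $\dat{\O}{\Const} \models \O$ by Theorem~\ref{thm:combined-approach}, we have $\cert{Q}{\O}{\D} \subseteq \cert{Q}{\dat{\O}{\Const}}{\D}$ for every CQ~$Q$, so $\Fcens{\ocens{\Inst}{U'}} \subseteq \Fcens{\ocens{\Inst'}{U'}}$; any policy disclosure using $\O$ and the smaller theory would then yield a disclosure using the stronger $\dat{\O}{\Const}$ and the larger theory, contradicting confidentiality of $\ocens{\Inst'}{U'}$ in~$\Inst'$. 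For optimality and uniqueness I would take a hypothetical confidentiality preserving censor $\cens'$ for $\Inst$ with $\ocens{\Inst}{U'}(Q) \subseteq \cens'(Q)$ for all CQs $Q$ and strict inclusion somewhere, and show that $\cens'$ must also be confidentiality preserving in $\Inst'$, thereby contradicting the uniqueness in Theorem~\ref{thm:optimal censors for linear RL}.

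The hard part will be this reverse transfer, namely showing $\dat{\O}{\Const} \cup \Fcens{\cens'} \not\models P(\vec a)$ from $\O \cup \Fcens{\cens'} \not\models P(\vec a)$. My plan is to exploit that $P$ is atomic---hence tree-shaped---together with the $\Const$-rewriting property of Theorem~\ref{thm:combined-approach}: given any counter-model $\M$ of $\O \cup \Fcens{\cens'} \cup \{\lnot P(\vec a)\}$, I would saturate it into a model of $\dat{\O}{\Const}$ by interpreting each auxiliary Skolem constant introduced by $\Xi_\Const$ as the universal successor witness required by the corresponding existential rule, along the lines of the combined-approach construction of~\cite{DBLP:conf/aaai/StefanoniMH13}. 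Since the saturation touches only the fresh auxiliary predicates $R'$ and Skolem constants---both disjoint from the signature and constants of the atomic $P$---the resulting model still falsifies $P(\vec a)$. Once this step is in place, uniqueness and optimality for $\Inst$ follow from those for $\Inst'$, and polynomial-time computability follows from the bound already established for~$U'$.
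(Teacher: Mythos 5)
Your overall architecture is the same as the paper's: rewrite $\O$ to the linear Datalog ontology $\dat{\O}{\Const}$, apply Theorem~\ref{thm:optimal censors for linear RL} to $\Inst'=(\dat{\O}{\Const},\D,P)$ to get the obstruction $U'$, and then transfer the resulting censor back to $\Inst$. Your confidentiality-preservation direction is essentially the paper's argument and is fine, as are the observations that $\dat{\O}{\Const}$ is linear and that the QL restriction forces $P$ to be a single atom.

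The gap is in the reverse transfer, and it sits exactly where you say the hard part is. Your saturation argument claims that turning a counter-model $\M$ of $\O\cup\Fcens{\cens'}\cup\{\lnot P(\vec a)\}$ into a model of $\dat{\O}{\Const}$ ``touches only the fresh auxiliary predicates $R'$ and Skolem constants.'' That is false: the rewriting contains $R'(x,y)\rightarrow R(x,y)$ and $R'(x,y)\rightarrow B(y)$, so interpreting the Skolem constant $c$ of a rule $A(x)\rightarrow\exists y.[R(x,y)\wedge B(y)]$ as a single uniform witness forces you to add $R(d,c)$ and $B(c)$ for \emph{every} $d$ with $\M\models A(d)$ --- atoms over the \emph{original} signature --- and these can cascade through further rules of $\O$ (e.g.\ $R(x,y)\rightarrow A(y)$). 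Showing that this cascade cannot create a new match for $P(\vec a)$ is precisely the non-trivial content; it is the correctness of the combined approach, and Theorem~\ref{thm:combined-approach} only guarantees it for datasets over $\Const$, whereas $\freeze{\Fcens{\cens'}}$ (and the existential disjuncts of $U'$ such as $\exists y.R(a,y)$) introduce frozen constants outside $\Const$, so you cannot just cite it. The paper closes exactly this hole by a case analysis on the disjuncts $u$ of $U'$: for ground $u$ the $\Const$-rewriting property applies directly, and for $u=\exists y.R(a,y)$ it inspects the SLD proof of $P(\vec s)$ from $\dat{\O}{\Const}\cup\freeze{u}$, using the fact that the only rule with $R'$ in its head forces its second argument to be the Skolem constant, to conclude that $\O\cup\{u\}\models P(\vec s)$ already holds; optimality then follows because every blocked query entails some disjunct. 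A second, smaller point: contradicting the uniqueness claim of Theorem~\ref{thm:optimal censors for linear RL} via your hypothetical $\cens'$ needs an extra step, since $\cens'$ extends $\ocens{\Inst}{U'}$ but not necessarily $\ocens{\Inst'}{U'}$; you must first extend $\cens'$ to a maximal (hence optimal) confidentiality-preserving censor for $\Inst'$ that differs from $\ocens{\Inst'}{U'}$ before uniqueness gives a contradiction.
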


\begin{figure}[t]
\begin{center}
\includegraphics[width=.47\textwidth]{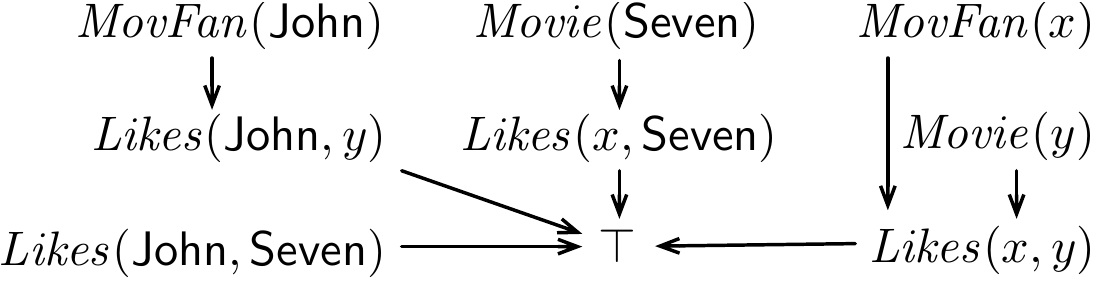}
\end{center}
\caption{Fragment of proof graph from Example~\ref{ex:graph-proof}}
\label{fig:proof_graph}
\end{figure}

\section{Related Work}
\label{sec: related work}

The formal study of privacy in databases has
received significant attention.
CQE for propositional databases with complete information 
has been studied in  
\cite{DBLP:journals/tods/SichermanJR83,DBLP:journals/tkde/BonattiKS95,DBLP:journals/dke/BiskupB01,DBLP:journals/ijisec/BiskupB04}. 
CQE was extended to (propositional) incomplete databases
in \cite{DBLP:journals/ijisec/BiskupW08}.  
\citeA{DBLP:journals/jcss/MiklauS07}
studied \emph{perfect privacy}.
Perfect privacy, however, is very strict
and may preclude publishing of any meaningful information when
extended to ontologies~\cite{DBLP:conf/ecai/GrauH08}.
View-based authorisation 
	was investigated in 
	\cite{DBLP:conf/sigmod/RizviMSR04,DBLP:conf/icdt/ZhangM05},
	and \citeA{DBLP:conf/icdt/DeutschP05} analysed 
	the implications to privacy derived 
	from publishing database views.

Privacy 
in the context of ontologies is a growing area of research.
Information hiding at the schema level was studied
in~\cite{DBLP:conf/ijcai/KonevWW09,DBLP:journals/jair/GrauM12}. 
Data privacy for $\mathcal{EL}$ and $\mathcal{ALC}$ DLs
	was investigated 
	in~\cite{DBLP:conf/ershov/StouppaS06,DBLP:conf/rr/TaoSH10}, 
	and
	the notion of a privacy-preserving reasoner
	was introduced in~\cite{DBLP:conf/webi/BaoSH07}.
	\citeA{DBLP:journals/jcss/CalvaneseGLR12}
	extended the view-based authorisation framework 
	by \citeA{DBLP:conf/icdt/ZhangM05} to DL ontologies.
	
An early work on non-propositional CQE is~\cite{Biskup:2007bh}.
CQE for ontologies has  been 
studied in~\cite{DBLP:conf/semweb/GrauKKZ13,DBLP:conf/semweb/BonattiS13,DBLP:journals/tdp/StuderW14}.
We extend \citeA{DBLP:conf/semweb/GrauKKZ13} with a wide range of new results:
\begin{inparaenum}[\it(i)]
\item we  
consider arbitrary CQs as policies rather than just ground facts;
\item we introduce obstruction censors, compare
their expressive power with that of view censors, 
characterise their optimality, 
and show how to compute obstructions for linear Datalog and QL ontologies;
\item we show undecidability of checking existence of an
optimal view censor and provide algorithms for guarded Datalog and all the OWL 2 profiles.
\end{inparaenum}	
We see our work as complementary to
\citeA{DBLP:conf/semweb/BonattiS13} and \citeA{DBLP:journals/tdp/StuderW14}.
The former 
focuses on situations where attackers have access to  
external sources of background knowledge;  
they identify additional vulnerabilities and 
propose solutions within the CQE framework.
The latter focuses on meta-properties
of general censors that, in contrast to ours, can also provide unsound 
answers or refuse queries. 

\section{Conclusions}
We studied CQE
in the context of ontologies.
Our results provide insights on the
fundamental tradeoff between accessibility and confidentiality of information.
Moreover, they yield a flexible way 
for system designers  
to ensure selective access to data.
In particular,
we proposed tractable 
 view based solutions 
for CQE instances with tree-shaped and linear Datalog and QL ontologies,
and tractable obstruction based solutions for linear Datalog and QL ontologies.
Our solutions can be implemented using off-the-shelf query answering infrastructure and 
provide a starting point 
for CQE system development.

\bibliographystyle{named}
 \clearpage
 \onecolumn
 \appendix 
\section{Appendix (Proofs)}

\subsection{Proofs for Section~\ref{sec:censor-types}}

\newenvironment{proofofclaim}{\noindent{\itshape Proof of claim.\ }}{\hfill $\square$\smallskip}

\newenvironment{claim}{\addtocounter{theorem}{1}\par\smallskip\noindent {\itshape Claim~\thetheorem.}}{\par\smallskip}

Before proving Theorem~\ref{th:simulation} we present the following notation and a lemma.
Let $\str{I}$ be a finite structure and 
$f$ a function associating a fresh variable to each 
domain element of $\str{I}$.
The query $\query{\str{I}}$ for $\str{I}$ is the Boolean CQ defined as follows, with
$R_1, \ldots R_n$ the predicates interpreted by $\str{I}$:
\begin{align*}
\query{\str{I}} = \exists^* \bigwedge_{1 \leq i \leq n} \{R_i(f(u_1),\ldots, f(u_{m_i})) \mid (u_1,\ldots,u_{m_i}) \in R_i^{\str{I}} \}.
\end{align*}  

Given a BCQ $Q$, 
	denote $\freeze{Q}$ the structure interpreting each $R$, occurring in $Q$,
	with $(f(u_1), \ldots f(u_n))$ for every atom $R(u_1, \ldots, u_n)$ in $Q$, 
	where $f$ maps each constant in $Q$ to itself and each variable $y$ to a fresh constant $d_{y}$.

\begin{lemma}
\label{lem:known-CSP}
Let $\str{J}$ be a finite structure and let
$\mathbb{C}$ be a class of finite structures. Then, 
the following holds:
\begin{align*}
	\set{\I \in \mathbb{C} \mid \I \not\homo \J}
	= \set{ \str{I} \in \mathbb{C} \mid \str{I} \models \bigvee_{\str{K} 
		\in \mathbb{C}, \str{K} \not\homo \str{J}} \query{\str{K}} }.
\end{align*}
 \end{lemma}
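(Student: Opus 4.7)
The plan is to reduce the lemma to the classical canonical-query characterisation of homomorphisms: for any finite structures $\str{K}$ and $\str{I}$, we have $\str{I} \models \query{\str{K}}$ if and only if $\str{K} \homo \str{I}$. This is immediate from the definition of $\query{\str{K}}$, since a satisfying assignment of its existentially quantified variables is by construction the same data as a homomorphism from $\str{K}$ to $\str{I}$ (the function $f$ in the definition picks out one variable per element of the domain of $\str{K}$, and the atoms of $\query{\str{K}}$ are exactly the tuples in the relations of $\str{K}$).

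Using this, the right-hand side of the claimed equality becomes
\begin{equation*}
\{\str{I} \in \mathbb{C} \mid \text{there exists } \str{K} \in \mathbb{C} \text{ with } \str{K} \not\homo \str{J} \text{ and } \str{K} \homo \str{I}\}.
\end{equation*}
I would then prove the two inclusions separately. For the inclusion $\subseteq$, given $\str{I} \in \mathbb{C}$ with $\str{I} \not\homo \str{J}$, take $\str{K} = \str{I}$: then $\str{K} \in \mathbb{C}$, $\str{K} \not\homo \str{J}$, and $\str{K} \homo \str{I}$ via the identity map, so $\str{I}$ satisfies the disjunct $\query{\str{I}}$ and hence the whole disjunction.

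For the inclusion $\supseteq$, suppose $\str{I} \in \mathbb{C}$ satisfies the disjunction, witnessed by some $\str{K} \in \mathbb{C}$ with $\str{K} \not\homo \str{J}$ and $\str{K} \homo \str{I}$. Assume for contradiction that $\str{I} \homo \str{J}$; composing the two homomorphisms yields $\str{K} \homo \str{J}$, contradicting the choice of $\str{K}$. Hence $\str{I} \not\homo \str{J}$, as required.

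I do not expect a real obstacle here: the only subtle point is verifying the canonical-query identity $\str{I} \models \query{\str{K}} \iff \str{K} \homo \str{I}$, which I would state explicitly as a sublemma and then close the argument by the two-line composition and identity-map arguments above. The proof is essentially the folklore observation underlying the homomorphism-based view of CSPs cited just before the lemma.
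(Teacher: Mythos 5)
Your proof is correct and follows essentially the same route as the paper's: both directions rely on the canonical-query equivalence $\str{I} \models \query{\str{K}}$ iff $\str{K} \homo \str{I}$, with the inclusion $\subseteq$ witnessed by $\str{K} = \str{I}$ and the inclusion $\supseteq$ obtained by composing homomorphisms to derive a contradiction. Making the canonical-query fact an explicit sublemma is a minor presentational difference; the paper uses it implicitly.
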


\begin{proof}
Let $\str{I} \in \mathbb{C}$ be such that $\str{I} \not\homo \str{J}$; 
	clearly, $\str{I} \models \query{\str{I}}$ 
	and hence $\str{I} \models \bigvee_{\str{K} \in \mathbb{C}, \str{K} \not\homo \str{J}} 
	\query{\str{K}}$, as required.
Conversely, assume that $\str{I} \in \mathbb{C}$ 
	is such that $\str{I} \models \bigvee_{\str{K} \in \mathbb{C}, \str{K} \not\homo \str{J}} 
	\query{\str{K}}$; 
	then, there exists $\str{K}$ such that $\str{K} \in \mathbb{C}$, $\str{K} \not\homo \str{J}$ 
	and $\str{I} \models \query{\str{K}}$. 
The latter implies that $\str{K} \homo \str{I}$
	and hence we can deduce $\str{I} \not\homo \str{J}$, as required 
	(otherwise, we would have by composition of homomorphisms that $\str{K} \homo \str{J}$, 
	which is a contradiction).
\end{proof}

\censorsimulation*
\begin{proof}\mbox{}\\
$\boldsymbol{(\Leftarrow)}$
Assume that $U$ defines 
	$\mathbb{C} \setminus \set{\I \in \mathbb{C} \mid \I  \homo \chase{\O}{\V}}$,
	which is equal to $\set{\I \in \mathbb{C} \mid \I \not\homo \chase{\O}{\V}}$.
Then, for each $\str{I} \in \mathbb{C}$ we have that $\str{I} \not\homo \chase{\O}{\View}$ 
	iff $\str{I} \models U$. 
By Lemma~\ref{lem:known-CSP},
	the following holds for each $\str{I} \in \mathbb{C}$:
\begin{align}
\label{eq:th-CSP-condition}
\str{I} \models U \quad \text{ iff } \quad \str{I} \models \bigvee_{\str{K} \in \mathbb{C}, \str{K} \not\homo \chase{\O}{\View}} \query{\str{K}}. 
\end{align}
Let $Q(\vec x)$ be a CQ, and let $\vec t 
\in \cert{Q}{\O}{\D}$, which implies that
$\freeze{Q(\vec t)} \homo \chase{\O}{\D}$ and hence $\freeze{Q(\vec t)} \in \mathbb{C}$.
We show that $\vec t
\in \vcens{\Inst}{\View}(Q)$ iff $\vec t \in \ocens{\Inst}{U}(Q)$.

For the forward direction, 
	assume that $\vec t \in  \vcens{\Inst}{\View}(Q)$; 
	then, $\O \cup \View \models Q(\vec t)$ 
	and hence $\freeze{Q(\vec t)} \homo \chase{\O}{\View}$. 
We can then conclude $\freeze{Q(\vec t)} \not\models \bigvee_{\str{K} \in \mathbb{C}, 
	\str{K} \not\homo \chase{\O}{\View}} \query{\str{K}}$ 
	(otherwise, $\str{K} \homo \freeze{Q(\vec t)}$ 
	for some $\query{\str{K}}$ in $U$ 
	and since we have established 
	that $\freeze{Q(\vec t)} \homo \chase{\O}{\View}$ 
	and homomorphism compose we would have $\str{K} \homo \chase{\O}{\View}$ 
	which is a contradiction). 
But then,
	Equation~\eqref{eq:th-CSP-condition} implies that $\freeze{Q(\vec t)} \not\models U$
	and by the definition of obstruction-censor that $\vec t \in \ocens{\Inst}{U}(Q)$, 
	as required.

For the backward direction, 
	assume now that $\vec t \in \ocens{\Inst}{U}(Q)$. 
Then, by the definition of obstruction censor we have $\freeze{Q(\vec t)} \not\models U$. 
By Equation~\eqref{eq:th-CSP-condition} 
	we then have $\freeze{Q(\vec t)} \not\models \bigvee_{\str{K} \in \mathbb{C}, 
	\str{K} \not\homo \chase{\O}{\View}} \query{\str{K}}$.
Lemma~\ref{lem:known-CSP} immediately implies 
	that $\freeze{Q(\vec t)} \not\in \set{\I \in \mathbb{C} \mid \I \not\homo \chase{\O}{\V}}$. 
From this, we must conclude
	that $\freeze{Q(\vec t)} \in \set{\I \in \mathbb{C} \mid \I  \homo \chase{\O}{\V}}$ 
	and hence $\freeze{Q(\vec t)} \homo \chase{\O}{\View}$, 
	which implies $\O \cup \View \models Q(\vec t)$ 
	and $\vec t \in  \vcens{\Inst}{\View}(Q)$, as required.

\medskip
\noindent
$\boldsymbol{(\Rightarrow)}$ 
Assume that $\ocens{\Inst}{U} = \vcens{\Inst}{\View}$. 
To show that $U$ defines 
	$\set{\I \in \mathbb{C} \mid \I  \not\homo \chase{\O}{\V}}$,
	we prove that $\str{I} \models U$ iff $\str{I} \not\homo \chase{\O}{\View}$
	for every structure $\str{I}$ in $\mathbb{C}$. %
If $\str{I} \homo \chase{\O}{\D}$ and $\str{I} \models U$, 
	then $\ocens{\Inst}{U}(Q^{\str{I}}) = \false$. 
Since $\ocens{\Inst}{U} = \vcens{\Inst}{\View}$,
	we also have that $\vcens{\Inst}{\View}(Q^{\str{I}}) = \false$ 
	and hence $\O \cup \View \not\models Q^{\str{I}}$. 
Consequently,
	$\str{I} \not\homo \chase{\O}{\View}$, as required.
If $\str{I} \not\homo \chase{\O}{\View}$, 
	then $\O \cup \View \not\models Q^{\str{I}}$; 
	consequently,
	$\vcens{\Inst}{\View}(Q^{\str{I}}) = \false$. 
Since $\ocens{\Inst}{U} = \vcens{\Inst}{\View}$,
	we have $\ocens{\Inst}{U}(Q^{\str{I}}) = \false$ 
	and hence, since $\str{I} \homo \chase{\O}{\D}$, we necessarily have $\str{I} \models U$.
\end{proof} %
\censorNotSimulationGeneral*

\begin{proof}
First we illustrate that obstruction censors cannot always simulate
view censors. Consider CQE instance
$\Inst = (\emptyset,\D,\emptyset)$,
where $\D$ represents an undirected graph with nodes ``green'' $g$ and ``blue'' $b$, which
are connected by $edge$ in all possible ways:
\begin{equation*}
\D = \{ \edge(g,b), \edge(b,g), \edge(b,b), \edge(g,g) \}.
\end{equation*} 
Clearly, $\D$ entails every Boolean CQ over the $\edge$ relation 
and thus every graph can be homomorphically embedded into $\D$.
Consider $\View = \{\edge(g,b), \edge(b,g)\}$.
Since the ontology is empty, $\chase{\emptyset}{\View} = \View$ and
$\set{\I \mid \I \text{ is finite, } \I \homo \chase{\O}{\D}, 
	\text{ and } \I \not\homo \V}$
is the class of all graphs that are not 2-colourable.
It is well-known that this class of graphs is not first-order definable and hence
cannot be captured by a UCQ.

\medskip
Next we construct an obstruction censor which cannot be simulated by a
view censor. Consider the
instance $\Inst = (\O,\D,\emptyset)$, where
$\D =   \{\edge(a,a)\}$ and $\O$ consists of the single transitivity rule
\begin{align*}
\edge(x,y) \wedge \edge(y,z) \rightarrow \edge(x,z). 
\end{align*}
Clearly, $\O \cup \D$ entails each Boolean CQ over
the $\edge$ relation. 
Consider  obstruction $U  =  \exists y.\edge(y,y)$, which
defines the class of directed graphs with self loops.
Suppose that some view $\View$
realises $\ocens{\Inst}{U}$. 
By Theorem \ref{th:simulation}, the obstruction
$U$ must define 
$\set{\I \in \mathbb{C} \mid \I \not\homo \chase{\O}{\View}}$
where
$\mathcal{C}$ is the class of all directed graphs.
Thus, any graph $G$ must satisfy the property
\begin{align*}
G \text{ has no self loops iff } G \homo \chase{\O}{\View}.
\end{align*}
Due to the rule in $\O$, we conclude that
$\View$ is a DAG, that is, it has no $\edge$-loops.
Take a DAG $G$ extending
	(a graph isomorphic to) $\chase{\O}{\View}$
	with a new node $v$
	and 
	edges connecting all its sink nodes to $v$.
Clearly $G$ has no self loops, but $G \not\homo \chase{\O}{\View}$, which is a contradiction.\looseness=-1
\end{proof} 
\subsection{Proofs for Section~\ref{sec:view-censors}}

\undecTheorem*

\begin{proof}

The proof is by reduction from the following problem: does a deterministic Turing machine without a final state have a repeated configuration? This problem is undecidable by Rice's Theorem.

Formally, for every such Turing machine $M = (\Gamma, \mathcal{Q}, q_0, \delta)$ with $\Gamma$ a tape alphabet, which include the blank symbol $0$, $\mathcal{Q}$ a set of states, $q_0 \in \mathcal Q$ an initial state, and $\delta: \Gamma \times \mathcal Q \to \Gamma \times (\mathcal Q \setminus \{q_0\})\times \{+, -\}$ a transition function, we construct an Datalog CQE instance $\Inst_M = (\O, \D, P)$ such that it admits an optimal view if and only if $M$ starting on the empty tape has a repeated configuration. The notion of configuration is as usual---it is the content of the tape and the head pointer to a cell on the tape. Note that the transition function $\delta$ is defined is such a way that the initial state does not appear in a computation anywhere except the initial configuration. This clearly does not affect the undecidability of the problem. We also assume, that the tape of the machine is infinite in both directions, and all of it can freely be used for computations.

We start the construction of $\Inst_M$ from the dataset $\D$. It uses only one constant $a$ and consists of three binary atoms
$$
R(a, a), S(a, a), T(a, a).
$$
The predicate $T$ is intended to point to the next cell on the tape, the predicate $S$ points to the same cell in the following configuration, and the predicate $R$ is responsible for initialisation.
We start the definition of the ontology $\O$ with the description of the role of $R$.
Let $\O$ contain rules
\begin{eqnarray}
R(x,x)  \rightarrow I(x), \label{in1}\\
R(x,y) \wedge R(y, z) \rightarrow R(x, z), \label{in2} \\
R(x, y) \wedge I(y) \rightarrow I(x). \label{in3}
\end{eqnarray}
As we will see formally later, these rules guarantee that if  $\Inst_M$ admits an optimal view, then this view contains the fact $I(a)$.
This fact initialises the tape by means of the following rules (conjunction in heads is just a syntactic sugar):
\begin{eqnarray}
I(x)  \rightarrow I^+(x) \land I^-(x) \land C_{q_0} (x)  \land A_0 (x), \label{in4} \\
I^+ (x) \land T(x, y) \rightarrow I^+(y) \land C_\emptyset (y) \land A_0(y), \label{in5} \\
I^- (y) \land T(x, y) \rightarrow I^-(x) \land C_\emptyset (x) \land A_0(x). \label{in6}
\end{eqnarray}
In these rules $C_{q_0}$ is a unary predicate indicating that the head is pointing to the first cell and the state is $q_0$. For each other state $q$ in $\mathcal Q$ the vocabulary contains the corresponding predicate $C_q$. The rest of the tape should always be marked by predicates $C_\emptyset$ indicating that the head does not point to this cell. Similarly, if in some configuration a cell contains an alphabet symbol $g \in \Gamma$, then this is indicated by the predicate $A_g$; for example, the rules above ensure that the tape is initialised by the symbol $0$. To ensure the consistency of the computation grid, constructed by means of tape and time predicates $T$ and $S$, the ontology contains the rules
\begin{eqnarray}
T(x, y)  \land T(z, u) \land S(y, u) \rightarrow S(x, z), \label{grid1}\\
T(x, y)  \land T(z, u) \land S(x, z) \rightarrow S(y, u). \label{grid2}
\end{eqnarray}
Finally, we need to make the adjacent configurations consistent. In particular, the content of each cell, as well as the fact that the head is pointing to this cell in some particular state, that is, the cell's $C_q$ and $A_g$ labels, is completely defined by the labels of the three cells in the previous configuration. So, abbreviating $T(x, y)  \land T(y, z) \land S(y, u) \land A_{g^-}(x) \land A_g(y) \land A_{g^+}(z)$ by $\phi(x, y, z, u)$, the ontology $\O$ contains the rules
$$
\begin{array}{ll}
\phi(x, y, z, u) {\land} C_\emptyset (x) {\land} C_q (y) {\land} C_\emptyset (z) \rightarrow C_\emptyset (u) {\land} A_{g'} (u),& \text{for all } g^-, g^+ \in \Gamma, \text{ if } \delta(g, q) = (g', q', d) \text{ for some } q', d, \\
\phi(x, y, z, u) {\land} C_q (x) {\land} C_\emptyset (y) {\land} C_\emptyset (z) \rightarrow C_\emptyset (u) {\land} A_{g} (u), & \text{for all } g, g^+ \in \Gamma, \text{ if } \delta(g^-, q) = (g', q', -) \text{ for some } g', q', \\
\phi(x, y, z, u) {\land} C_\emptyset (x) {\land} C_\emptyset (y) {\land} C_q (z) \rightarrow C_\emptyset (u) {\land} A_{g} (u), & \text{for all } g, g^- \in \Gamma, \text{ if } \delta(g^+, q) = (g', q', +) \text{ for some } g', q', \\
\phi(x, y, z, u) {\land} C_q (x) {\land} C_\emptyset (y) {\land} C_\emptyset (z) \rightarrow C_{q'} (u) {\land} A_{g} (u), & \text{for all } g, g^+ \in \Gamma, \text{ if } \delta(g^-, q) = (g', q', +) \text{ for some } g', \\
\phi(x, y, z, u) {\land} C_\emptyset (x) {\land} C_\emptyset (y) {\land} C_q (z) \rightarrow C_{q'} (u) {\land} A_{g} (u), & \text{for all } g, g^- \in \Gamma, \text{ if } \delta(g^+, q) = (g', q', -) \text{ for some } g', \\
\phi(x, y, z, u) {\land} C_\emptyset (x) {\land} C_\emptyset (y) {\land} C_\emptyset (z) \rightarrow C_\emptyset (u) {\land} A_{g} (u), & \text{for all } g^-, g, g^+ \in \Gamma.
\end{array}
$$

Having the ontology defined, we complete the construction with specifying the policy. It consists of several BCQs, but the translation to a single CQ by means of several rules in the ontology is straightforward. The policy $P$ guarantees that a cell cannot contain several alphabet symbols, the machine cannot be in several states, and the head cannot simultaneously point and not point to a cell. This is formalised as the following set of BCQs:
$$
\begin{array}{ll}
\exists x.\, A_{g}(x) \land A_{g'}(x), & \text{for all } g, g' \in \Gamma \text{ such that } g \neq g', \\
\exists x.\, C_{q}(x) \land C_{q'}(x), & \text{for all } q, q' \in \mathcal Q \cup \{\emptyset\} \text{ such that } q \neq q'.
\end{array}
$$

Completed the construction, next we formally prove that $M$ has a repeated configuration if and only if $\Inst_M$ has a (finite) optimal view. We start with forward direction.

\medskip

\noindent ($\Rightarrow$)

Let the first pair of repeated configurations of $M$ have numbers $m$ and $n$, while the smallest (non-positive) number of a cell whose content was changed during the computation is $k+1$, and the biggest (non-negative) such number is $\ell-1$ (we assume that initially the head is pointing to the cell number 0). Note that $k$ and $\ell$ are finite, because a computation cannot use infinite number of cells in finite number of steps. In fact, $k > -n$ and $\ell < n$.

The view $\View$ makes use of constants $a_{ij}$ with $-1 \leq i < n$ and $k \leq j \leq \ell$, such that $a_{00} = a$ and all others are anonymous copies of $a$. By means of binary predicates $S$ and $T$ these constants form a grid, that is the view contains atoms
$$
\begin{array}{ll}
S(a_{(i-1)j}, a_{ij}), & \text{for all } 0 \leq i < n, k \leq j \leq \ell, \\
T(a_{i(j-1)}, a_{ij}), & \text{for all } {-}1 \leq i < n, k < j \leq \ell.
\end{array}
$$
The grid is ``folded'' on all the sides, in the configuration number $i = {-}1$ and cells number $k$ and $\ell$ by means of self loops, and on repeated configurations $m$ and $n$:
$$
\begin{array}{ll}
S(a_{({-}1)j}, a_{({-}1)j}), & \text{for all } k \leq j \leq \ell, \\
T(a_{ik}, a_{ik}), & \text{for all } {-}1 \leq i < n, \\
T(a_{i\ell}, a_{i\ell}), & \text{for all } {-}1 \leq i < n, \\
S(a_{(n-1)j}, a_{mj}), & \text{for all } k \leq j \leq \ell.
\end{array}
$$
Each configuration with number $0 \leq i < n$ with the word $g_k \ldots g_\ell$ written on the part of the tape with cell numbers from $k$ to $\ell$, the state $q$, and the head pointing to the cell number $h$ is represented by means of the following facts:
$$
\begin{array}{ll}
A_{g_j}(a_{ij}), & \text{for all } k \leq j \leq \ell, \\
C_q(a_{ih}), \\
C_\emptyset(a_{ij}), & \text{for all } k \leq j \leq \ell, j \neq h.
\end{array}
$$
The auxiliary ``configuration'' number $-1$ is the same as a usual configuration with the empty tape, except that the head does not point anywhere:
$$
\begin{array}{ll}
A_{0}(a_{({-1)}j}), & \text{for all } k \leq j \leq \ell, \\
C_\emptyset(a_{({-1)}j}), & \text{for all } k \leq j \leq \ell, j \neq h.
\end{array}
$$
The constant $a$ is in the initialisation predicates:
$$
R(a, a), I(a).
$$
Finally, each configuration with number $-1 \leq i < n$ (i.e., including the auxiliary one) has cells with numbers $k'$ and $\ell'$ such that all the cells between $k$ and $k'$, as well as all the cells between $\ell'$ and $\ell$ contain $0$ and do not have the head pointing on them. the first group is marked by $I^-$ and the second by $I^+$:
$$
\begin{array}{ll}
I^-(a_{(ij}), & \text{for all } k \leq j \leq k', \\
I^+(a_{(ij}), & \text{for all } \ell' \leq j \leq \ell.
\end{array}
$$

It is straightforward to see that $\View \models \O$ and $\O \cup \View \not  \models P$, that is, $\View$ is a confidentiality preserving view for $\Inst_M$. Also, it is a matter of technicality to check that the view is indeed optimal.

\bigskip

\noindent ($\Leftarrow$)

Next we show that if the machine $M$ does not have a repeated configuration, then there is no optimal view for the instance $\Inst_M$.
Assume for the sake of contradiction that such a view $\View$ exists. Without loss of generality we may assume that $\View \models \O$. The first fact we need is the following claim.

\medskip

\begin{claim}
The view $\View$ contains the atom $I(a)$.
\end{claim}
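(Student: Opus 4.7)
The plan is to argue by contradiction: assuming $I(a) \notin \V$, I would construct a strictly larger confidentiality-preserving view $\V'$ and so violate the optimality of $\V$. First, I would note that $R(a,a) \in \D$ together with rule~(\ref{in1}) yields $\O \cup \D \models I(a)$, so $I(a)$ is a legitimate certain answer to the CQ $I(x)$ that an optimal censor should not suppress unless forced to by the policy.

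Next, I would define $\V'$ to be the closure under $\O$ of $\V \cup \{R(a,a), I(a)\}$. Because $\V$ already satisfies $\O$, the only atoms the closure can add are $I(a)$ via rule~(\ref{in1}), then $A_0(a), C_{q_0}(a), I^+(a), I^-(a)$ via rule~(\ref{in4}), and then, by propagation through $T$-edges already present in $\V$, further labels $C_\emptyset$ and $A_0$ on constants that are $T$-reachable from $a$ via rules~(\ref{in5}) and (\ref{in6}). By construction $\V' \supseteq \V$ and $\V' \models \O$, and $\V'$ strictly dominates $\V$ on the query $I(x)$, returning the answer $a$ that $\V$ does not return.

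The main obstacle is showing that $\V'$ remains confidentiality preserving, i.e.\ $\O \cup \V' \not\models P(\vec b)$ for every $\vec b$. Since the policy only forbids two distinct $A_g, A_{g'}$ labels or two distinct $C_q, C_{q'}$ labels on the same constant, a violation would have to arise from one of the newly added atoms clashing with an atom already in $\V$. I would argue that any such clash would have to involve one of the initialisation labels $A_0$, $C_{q_0}$, or $C_\emptyset$ meeting a conflicting grid-computed label on a constant that is simultaneously $T$-reachable from $a$ (hence on the would-be initialisation row) and also bears grid-computed labels in $\V$. Because $\V$ is already a confidentiality-preserving model of $\O$, such a constant would already carry its grid-derived labels in $\V$, and the initialisation rules would already have been forced to fire on some $R$-derived ancestor in $\V$, producing the same conflict inside $\V$ alone---contradicting the standing assumption that $\V$ is confidentiality preserving. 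Once this case analysis is carried out (covering the constant $a$ itself, its $T$-descendants that also participate in non-initial configurations, and the virtual $i=-1$ row), we conclude that $\V'$ is confidentiality preserving, contradicting the optimality of $\V$ and proving that $I(a) \in \V$.
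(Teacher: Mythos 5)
Your overall strategy is genuinely different from the paper's, and it contains a gap at exactly the step that matters. The paper does not try to show that $I(a)$ \emph{can safely be added} to $\View$; it shows that $I(a)$ is already \emph{forced} to be in $\View$: the pure-$R$ chain queries $Q^R_i$ are harmless (their frozen bodies use fresh constants and only the predicate $R$, so closing under $\O$ adds no $A_g$ or $C_q$ atoms), hence optimality makes $\View$ entail all of them; pigeonhole on the finite $\View$ yields an $R$-cycle reachable from $a$ by an $R$-chain, transitivity (\ref{in2}) turns the cycle into a self-loop, (\ref{in1}) puts $I$ on it, and (\ref{in3}) propagates $I$ back to $a$. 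Your route instead requires proving that $\View' = $ the $\O$-closure of $\View \cup \{R(a,a), I(a)\}$ is confidentiality preserving, and the argument you give for this is circular: you claim any clash between a freshly derived initialisation label ($C_{q_0}$, $A_0$, $C_\emptyset$) and a label already in $\View$ ``would already be present inside $\View$ alone because the initialisation rules would already have been forced to fire.'' But under your standing assumption $I(a)\notin\View$ the initialisation rules need never have fired in $\View$ at all, so $\View$ may consistently carry, say, $C_{q_1}(a)$ with $q_1\neq q_0$, and the clash appears only in $\View'$. Nothing you have established rules this out.

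Worse, the step is not merely unproved --- it is false in the situation at hand. Your enumeration of what the closure adds omits the transition rules (the block built from $\phi(x,y,z,u)$), which, once a full initialisation row is labelled, propagate $C_q$/$A_g$ labels along $S$-edges and compute the entire run of $M$. Since $\View$ is optimal it must also entail the harmless grid queries $Q^{S,T}$, so $\View'$ contains $I(a)$ together with a folded finite grid; the remainder of the paper's proof shows precisely that this combination entails the policy when $M$ has no repeated configuration. So $\View'$ is \emph{not} confidentiality preserving (except vacuously, because no such $\View$ exists --- which is what you were supposed to prove). The fix is to abandon the ``extend the view'' strategy for this claim and instead derive $I(a)\in\View$ directly from optimality on the $R$-chain queries, as the paper does.
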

\begin{proof}
Whatever is the shape of $\View$, it entails the BCQs 
$$
Q^R_i = \exists x_1 \ldots \exists x_i.\, R(a, x_1) \wedge R(x_1, x_2) \wedge \dots \wedge R(x_{i-1}, x_i) \text{ for all } i \geq 1.
$$
Since $i$ is unbounded, but $\View$ is finite, there exists $i_0$ such that there is a homomorphism from the body of $Q^R_{i_0}$ to $\View$ which sends different $x_j$ and $x_k$ to the same constant. This means that there is an $R$-loop of some length in $\View$, which is connected by an $R$-chain from $a$. By the rules (\ref{in1})--(\ref{in3}) this implies that $I(a)$ is a fact in $\View$.
\end{proof}

Similarly to the proof of the claim above, whatever is the shape of $\View$, it entails the BCQs
Whatever is the shape of $\View$, it entails the BCQs 
$$
Q^S_i = \exists x_1 \ldots \exists x_i.\, S(a, x_1) \wedge S(x_1, x_2) \wedge \dots \wedge S(x_{i-1}, x_i) \text{ for all } i \geq 1.
$$
Since $\View$ is finite, this implies that there is the (finite) biggest number $n - 1$ such that the body of $Q^S_n$ has a homomorphism to $\View$ which sends different $x_j$ to different constants.

Consider now a ``grid'' BCQ $Q^{S, T}$ that consists of the following atoms:
$$
\begin{array}{ll}
S(x_{(i-1)j}, x_{ij}), & \text{for all } 0 < i \leq n, {-}n \leq j \leq n, \\
T(x_{i(j-1)}, x_{ij}), & \text{for all } 0 \leq i \leq n, {-}n < j \leq n, \\
x_{00} = a.
\end{array}
$$
This query is also ``harmless'', that is, should be entailed by $\View$ whatever is its shape.
Since this BCQ has a chain of $S$ starting from $a$ of length greater than $n-1$, for any homomorphism from the body of $Q^{S, T}$ to $\View$ there are numbers $k$ and $\ell$ such that this homomorphism sends $x_{k0}$ and $x_{\ell0}$ to the same constant. Let $h$ be such a homomorphism, and $k$, $\ell$ be the numbers corresponding to $h$. By rules (\ref{grid1}) and (\ref{grid2}) we have that $\View$ contains atoms
\begin{equation}
S(x_{(\ell-1)j}, x_{kj}), \quad \text{for all } {-}n \leq j \leq n. \label{loop}
\end{equation}

On the other hand, by the fact that $I(a)$ is in $\View$ and the rules (\ref{in4})--(\ref{in6}) we have that the constants $h(x_{0j})$ for ${-}n \leq j \leq n$ represent the part of the initial configuration on cells with numbers from ${-}n$ to $n$. Furthermore, by means of the rules corresponding to the transition function of the machine, the constants $h(x_{ij})$ form the part of the configuration with number $i$ for all $0 < i < \ell$. By the same rules and atoms (\ref{loop}) we conclude that the constants $h(x_{kj})$ represent not only the part of the configuration number $k$, but also the part of the configuration number $\ell$. If these parts are different, then this discloses the policy, so they are the same. But the rest of the configuration, that is the content of the tape beyond the cells with numbers from ${-}n$ to $n$, is also the same for the configurations, because they are just full of symbols $0$ (the head cannot reach this part of the tape because it is too far). So, we come to the fact that $M$ has a repeated computation, which contradicts the precondition.
\end{proof} %
\begin{restatable}{proposition}{basicview}\label{prop:basic-view-properties}
The censor $\vcens{\Inst}{\View}$ based on a view $\V$ 
	is confidentiality preserving for a CQE instance $\Inst = (\O, \D, P)$
	if and only if
	$\O \cup \View \not\models P(\vec s)$ for each $\vec s \in \cert{P}{\O}{\D}$.
Additionally, it is optimal 
	if and only if 
	for each CQ $Q(\vec x)$ and each $\vec t \in \cert{Q}{\O}{\D}$, 
	the fact that $\O \cup \View \cup \set{Q(\vec t)} \not\models P(\vec s)$ 
	for any $\vec s \in \cert{P}{\O}{\D}$ implies that $\vec t \in \cert{Q}{\O}{\View}$.
\end{restatable}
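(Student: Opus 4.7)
The plan is to prove the two biconditionals in turn, both resting on the key observation that $\O \cup \View$ and $\O \cup \D$ each entail the theory $\Fcens{\vcens{\Inst}{\View}}$: every Boolean CQ in this set is, by construction, a certain answer over both $\O \cup \D$ and $\O \cup \View$. In particular, any tuple $\vec a$ for which $\O \cup \Fcens{\vcens{\Inst}{\View}} \models P(\vec a)$ must satisfy $\O \cup \D \models P(\vec a)$, so $\vec a \in \cert{P}{\O}{\D}$; this reduces confidentiality-preservation checks to policy witnesses drawn from $\cert{P}{\O}{\D}$ throughout.

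For the first biconditional, assume $\vcens{\Inst}{\View}$ is confidentiality preserving and that $\O \cup \View \models P(\vec s)$ for some $\vec s \in \cert{P}{\O}{\D}$; then $\vec s \in \vcens{\Inst}{\View}(P) = \cert{P}{\O}{\D} \cap \cert{P}{\O}{\View}$, so $P(\vec s) \in \Fcens{\vcens{\Inst}{\View}}$, a contradiction. Conversely, suppose $\O \cup \View \not\models P(\vec s)$ for every $\vec s \in \cert{P}{\O}{\D}$ and that $\O \cup \Fcens{\vcens{\Inst}{\View}} \models P(\vec a)$ for some tuple $\vec a$; the opening observation yields $\vec a \in \cert{P}{\O}{\D}$, while the fact that $\O \cup \View \models \Fcens{\vcens{\Inst}{\View}}$ gives $\O \cup \View \models P(\vec a)$, contradicting the assumption.

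For the second biconditional the core tool is a minimal-extension argument. Given $Q$ and $\vec t \in \cert{Q}{\O}{\D} \setminus \cert{Q}{\O}{\View}$, define $C_0$ to be the censor obtained from $\vcens{\Inst}{\View}$ by adjoining only $\vec t$ to the answer set for $Q$, so that $\Fcens{C_0} = \Fcens{\vcens{\Inst}{\View}} \cup \{Q(\vec t)\}$. The same reasoning as in Part~1 shows that $C_0$ is confidentiality preserving iff $\O \cup \Fcens{\vcens{\Inst}{\View}} \cup \{Q(\vec t)\} \not\models P(\vec s)$ for every $\vec s \in \cert{P}{\O}{\D}$, and since $\O \cup \View$ already entails $\Fcens{\vcens{\Inst}{\View}}$, this is implied by the criterion's hypothesis that $\O \cup \View \cup \{Q(\vec t)\} \not\models P(\vec s)$ for every such $\vec s$. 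Hence, for the forward direction, if the criterion fails at some pair $(Q, \vec t)$ then the corresponding $C_0$ is a confidentiality preserving strict extension of $\vcens{\Inst}{\View}$, contradicting optimality. For the backward direction, any confidentiality preserving $\cens'' \supsetneq \vcens{\Inst}{\View}$ yields a witness $\vec t \in \cens''(Q) \setminus \vcens{\Inst}{\View}(Q)$ whose associated $C_0$ is also confidentiality preserving (because $\Fcens{C_0} \subseteq \Fcens{\cens''}$); the criterion applied at $(Q, \vec t)$ then forces $\vec t \in \cert{Q}{\O}{\View}$, the required contradiction.

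The step I expect to be most delicate is the last one: the confidentiality-preservation status of $C_0$ is governed by the weaker theory $\O \cup \Fcens{\vcens{\Inst}{\View}} \cup \{Q(\vec t)\}$, whereas the criterion's hypothesis speaks about the stronger theory $\O \cup \View \cup \{Q(\vec t)\}$. Transferring between the two requires exploiting that $\View$ is itself a model of $\Fcens{\vcens{\Inst}{\View}}$ and that all relevant policy witnesses lie in $\cert{P}{\O}{\D}$, which together pin down the policy entailments that can actually distinguish the two theories.
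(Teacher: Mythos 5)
Your proof follows the paper's own argument almost step for step: the first biconditional and the direction ``optimality implies the criterion'' (via your minimal extension $C_0$) are exactly the paper's proof, and your opening observation that any violation of confidentiality by $\O \cup \Fcens{\cens}$ is already witnessed by some $\vec s \in \cert{P}{\O}{\D}$ makes explicit a point the paper uses silently. However, the step you yourself flag as delicate is a genuine gap, and the repair you sketch does not close it. To refute a confidentiality preserving strict extension $\cens''$ you obtain from its witness $(Q,\vec t)$ only that $\O \cup \Fcens{\vcens{\Inst}{\View}} \cup \set{Q(\vec t)} \not\models P(\vec s)$, whereas invoking the criterion requires $\O \cup \View \cup \set{Q(\vec t)} \not\models P(\vec s)$. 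Since $\O \cup \View \models \Fcens{\vcens{\Inst}{\View}}$, the theory you control is the \emph{weaker} one, so the needed implication runs in the wrong direction; neither the fact that $\View$ models $\Fcens{\vcens{\Inst}{\View}}$ nor the restriction of witnesses to $\cert{P}{\O}{\D}$ reverses it.

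The failure is real whenever $\View$ contains a fact that is not a certain consequence of $\O \cup \D$, which the definition of a view permits. Take $\O = \set{A(x) \wedge B(x) \to C(x),\; D(x) \wedge E(x) \to C(x)}$, $\D = \set{A(a), B(a), E(a)}$, $P = \exists x.\, C(x)$, and $\View = \set{A(a), D(a), A(c_1), E(c_1), B(c_2), E(c_2)}$ with $c_1, c_2$ fresh. One checks that $\View$ is confidentiality preserving and satisfies the stated criterion: every certain answer it withholds, together with $\O \cup \View$, entails the policy --- in particular $E(a)$ is blocked only because of the spurious fact $D(a)$. Yet the censor that additionally returns $E(a)$ is still confidentiality preserving, since $D(a)$ and $\exists x.\, D(x)$ are not certain over $\O \cup \D$ and hence lie outside $\Fcens{\vcens{\Inst}{\View}}$; so $\vcens{\Inst}{\View}$ is not optimal. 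You should be aware that the paper's own proof makes exactly the same unjustified transfer from $\Fcens{\vcens{\Inst}{\View}}$ to $\View$ at the corresponding point, so you are not missing an idea the paper supplies. The direction ``criterion implies optimality'' is recoverable under an additional soundness assumption on $\View$ --- for example, that $\chase{\O}{\View}$ maps homomorphically into $\chase{\O}{\D}$ fixing the original constants, so that $\View$ is entailed by $\Fcens{\vcens{\Inst}{\View}}$ up to renaming of fresh constants --- and this assumption does hold for the views the paper actually constructs; but it is not part of the statement being proved.
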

\begin{proof}
Assume that  $\O \cup \View \not\models P(\vec s)$ for each $\vec s \in \cert{P}{\O}{\D}$. 
Trivially, $\O \cup \View \models \Fcens{\vcens{\Inst}{\View}}$ 
	and hence we have
	$\O \cup \Fcens{\vcens{\Inst}{\View}} \not\models P(\vec s)$ 
	for each $\vec s \in \cert{P}{\O}{\D}$, as required.

Assume now that $\cens$ is confidentiality preserving, 
	in which case $\O \cup \Fcens{\vcens{\Inst}{\View}} \not\models P(\vec s)$ 
	for each $\vec s \in \cert{P}{\O}{\D}$. 
Next, assume for the sake of contradiction 
	that $\O \cup \View \models P(\vec s)$ 
	for some $\vec s \in \cert{P}{\O}{\D}$.
Since $\O \cup \D \models P(\vec s)$, 
	by the definition of policy we have that $\vcens{\Inst}{\View}(P(\vec s)) = \true$ 
	and thus $P(\vec s) \in \Fcens{\vcens{\Inst}{\View}}$; 
	therefore, $\O \cup \Fcens{\vcens{\Inst}{\View}} \models P(\vec s)$, which is a contradiction.

\medskip

We next focus on the optimality statement. 
Assume that $\O \cup \View \cup \set{Q(\vec t)} \not\models P(\vec s)$ 
	for any $\vec s \in \cert{P}{\O}{\D}$ implies that $\vec t \in \cert{Q}{\O}{\View}$,
	while $\vcens{\Inst}{\View}$ is not optimal. 
Then, there is a confidentiality preserving censor $\cens$ 
	that extends $\vcens{\Inst}{\View}$;
	this means that for some CQ $Q(\vec x)$ and, 
	$\vec t \in \cert{Q}{\O}{\D}$ 
	we have $\vec t \in \cens(Q)$, but $\vec t \notin \vcens{\Inst}{\View}(Q)$.
The fact that $\vec t \notin \vcens{\Inst}{\View}(Q)$ 
	and $\vec t \in \cert{Q}{\O}{\D}$ 
	implies that $\vec t \not\in \cert{Q}{\O}{\View}$.
Furthermore, the fact that $\cens$ is confidentiality-preserving
	implies that $\O \cup \Fcens{\cens} \cup \{Q(\vec t)\} \not\models P(\vec s)$
	for any $\vec s \in \cert{P}{\O}{\D}$.
But then, since $\cens$ extends $\vcens{\Inst}{\View}$, 
	we have that $\Fcens{\vcens{\Inst}{\View}} \subseteq \Fcens{\cens}$ 
	and hence $\O \cup \Fcens{\vcens{\Inst}{\View}} \cup \set{Q(\vec t)} \not\models P(\vec s)$, 
	and therefore $\vec t \in \vcens{\Inst}{\View}(Q)$,
	which is a contradiction.

Finally, assume that there exists  some CQ $Q(\vec x)$ 
	and $\vec t \in \cert{Q}{\O}{\D}$ 
	such that $\O \cup \View \cup \{Q(\vec t)\} \not\models P(\vec s)$
	for each $\vec s \in \cert{P}{\O}{\D}$, 
	but $\O \cup \View \not\models Q(\vec t)$.
Then, we can define a censor $\cens$ 
	that behaves exactly like $\vcens{\Inst}{\View}$, 
	with the exception of answering $Q(\vec t)$ positively. 
Thus, $\Fcens{\cens} = \Fcens{\vcens{\Inst}{\View}} \cup \{Q(\vec t)\}$.
But then, since $\O \cup \View \cup \{Q(\vec t)\} \not\models P(\vec s)$ 
	for each $\vec s \in \cert{P}{\O}{\D}$ 
	and $\O \cup \View \models \Fcens{\vcens{\Inst}{\View}}$ 
	we have that $\O \cup \Fcens{\cens} \not\models P(\vec s)$, 
	which implies that $\cens$ is confidentiality preserving 
	and $\Fcens{\vcens{\Inst}{\View}}$ is not optimal, as required.
\end{proof} %
We say that a rule is \emph{normalised}
	if it has at most two atoms in its body;
	an ontology is \emph{normalised}
	if it is a set of normalised rules.
Clearly, any guarded ontology can be normalised.

\begin{definition}
\label{def:predicates closed under O}
Let $\Sigma$ be a signature,
	\O an ontology over $\Sigma$,
	and a subset $S$ of $\Sigma$ is a set of unary predicates.
$S$ is \emph{closed under \O} if
\begin{inparaenum}[\it (i)]
\item $\O \cup \set{A(x) \mid A \in S} \models C(x)$ implies that $C \in S$ and
\item if $A$ does not occur in \O, then $A \in S$.
\end{inparaenum}
\end{definition}

\guardedTreeShapedDatalogView*

\begin{proof}
\mbox{}\\
\noindent\textbf{Guarded, tree-shaped CQE instance.}
Algorithm~\ref{alg:view guarded datalog full}
	presents a procedure that builds a view for a given CQE instance $\Inst = (\O, \D, P)$.
We are going to show that
	if $\Inst$ is tree-shaped and guarded,
	then the algorithm returns an optimal view for \Inst.
By its construction, the constructed dataset \View is safe,
	so it remains to prove its optimality. 
Due to Proposition~\ref{prop:basic-view-properties}, it suffices to show
	that for each CQ $Q$ and a tuple $\vec t$ 
	such that $\vec t \in \cert{Q}{\O}{\D}$:
\begin{align}
\label{eq:view optimality criterion for guarded datalog}
	\text{if } \O \cup \View \cup \freeze{Q(\vec t)} \not\models \freeze{P(\vec s)}
	\text{ for each } \vec s \in \cert{P}{\O}{\D},
	\text{ then } \vec t \in \cert{Q}{\O}{\View}.
\end{align}
Observe the following.
\begin{enumerate}[(O1)]
\item
W.l.g.\ we can assume that $\View \cap \chase{\O}{\D} \incl \freeze{Q(\vec t)}$.
\label{it:opt view th guarded datalog, maximised query}

\item
If $\H$ is as defined in Algorithm~\ref{alg:view guarded datalog full},
$\chase{\O}{\D} \incl \H$
	and $\H \setminus \chase{\O}{\D}$
	consists of unary atoms only over fresh predicates
	introduced into $\O_E$ at Line~1.
	
\item
No rule of $\O_E$ can be applied to \View.
\end{enumerate}

Assume that $Q(\vec t)$ satisfies the ``if''-clause of
	Equation~\eqref{eq:view optimality criterion for guarded datalog}.
Since by the assumption $\vec t \in \cert{Q}{\O}{\D}$,
	then there is a homomorphism 
	$h$ from $\chase{\O}{\freeze{Q(\vec t)}}$ into $\chase{\O}{\D}$.
It is easy to see that
\begin{align*}
	h: \chase{\O}{\freeze{Q(\vec t)}} \rightarrow \chase{\O}{\D}
	\text{ iff }
	h: \chase{\O_E}{\freeze{Q(\vec t)}} \rightarrow \chase{\O_E}{\D}.
\end{align*}
We are going to use the following notations.
\begin{itemize}
\item
Denote $\chase{\O_E}{\freeze{Q(\vec t)}}$ as $\B$.

\item
Let $\X$ be a dataset and $d$ an element occurring in $\X$.
Then we define the set $\conc{\X}{d}$ as $\set{A \mid A(d) \in \X}$.
\end{itemize}

We are going to show the existence of a homomorphism
	$g: \B \rightarrow \View$,
	which would prove that $Q$ satisfies the ``then''-clause
	of Equation~\eqref{eq:view optimality criterion for guarded datalog}.
Let $d_1, \ldots, d_m$ be all the fresh constants from $\freeze{Q(\vec t)}$,
	let $d$ be an element from $\freeze{Q(\vec t)}$, and
	let $h$ be a homomorphism from $\B$
	into $\H$.
We claim that there exists $g$ that satisfies the following properties:
\begin{compactenum}
\item If $d$ is from $\O \cup \D$,
	then $g(d) = d$.
\item Let $d = d_i$ and $h(d_i) = a$.
Then $g(d_i) = a'$ such that $a' \in \sigma_a$, $a' \neq a$ and
	$\conc{\S}{a'} = \conc{\B}{d}$,
	where $\sigma_a$ is a set of all ``copies'' of $a$
	introduced by the algorithm
	(for example, see sub-routines 
	in Algorithm~\ref{alg:view guarded datalog sub-routines 1}).
\end{compactenum}

\medskip
It remains to show that $g$ does indeed exist and map 
	$\B$ into $\View$.
To this end, we need to show that
\begin{compactenum}
	\item for each element $d$ from $\freeze{Q(\vec t)}$,
	there is an element $a'$ in $\View$ satisfying
	the second property of $g$, and
	\item for each binary atom $R(d_1, d_2) \in \freeze{Q(\vec t)}$,
	there exists a corresponding binary atom $R(g(d_1), g(d_2)) \in \View$.
\end{compactenum}

\medskip
The former requirement follows from the construction of \View.
The latter one requires that
\begin{align}
\label{eq:homomorphism is safe}
	\cert{P}{\O_E}{\View \cup g(\B)} = \eset.
\end{align}
Note that Equation~\ref{eq:view optimality criterion for guarded datalog}
	implies that $\cert{P}{\O_E}{\View \cup \B}= \eset$.
Also observe that \emph{no rule from $\O_E$ is applicable to $\View \cup \B$}.
Indeed,
	no rule is applicable to $\View$ nor to $\B$ by construction.
Assume that a rule $r$ is applicable to $\View \cup \B$.
If the body of $r$ contains one atom,
	then we immediately obtain a contradiction.
If the body of $r$ contains two atoms then
	there exist an atom $f_1 \in \View$ and an atom $f_2 \in \B$
	such that $f_1 \land f_2$ is an instantiation of the body of $r$.
Assume that the atom in the body of $r$ corresponding to $f_1$
	is a guard of the rule;
	then all constants occurring in $f_2$
	occur in $f_1$ too.
Since \B and \View share only ``active'' constants (i.e., the ones from \Inst),
	we have that $f_2 \in \View \cap \B$
	(due to Observation~\ref{it:opt view th guarded datalog, maximised query}),
	and thus $r$ is applicable to \View,
	which gives a contradiction.
	
\medskip
Assume that Equation~\ref{eq:homomorphism is safe} does not hold.
Hence,
	there is a rule $r \in \O$ applicable to $\View \cup g(\B)$.
Recall that $r$ is not applicable to $\View$.
We have the following cases depending on the shape of $r$.

\begin{enumerate}
\item $r$ is of the form $A(x) \rightarrow C(x)$,
	$A(x) \land B(x) \rightarrow C(x)$, or
	$A \land B(x) \rightarrow C(x)$.
Clearly, in this case $r$ is applicable to $g(\B)$.
It is easy to see that $r$ is then applicable to \B
	since $\conc{\B}{d} = \conc{\View}{g(d)}$ for every $d$ in $\B$,
	which contradicts the observation above.

\item $r$ is of the form $R(x,y) \rightarrow \mathit{Head}(\vec x)$
	or $A \land R(x,y) \rightarrow \mathit{Head}(\vec x)$,
	where $\mathit{Head}(\vec x)$ is of one of the following forms
	for some unary $C$ of binary $Q$ predicate:
	$C(x)$, $C(y)$, $Q(x,y)$, or $Q(y,x)$.
Here we obtain a contradiction similarly to the previous case.

\item $r$ is of the form $R(x,y) \land A(x) \rightarrow \mathit{Head}(\vec x)$.
There are three cases.
	\begin{enumerate}[(a)]
	\item There are $a$, $b$, and $d_i$ such that $R(a,b) \in \View$ and $A(d_i) \in \B$,
		where $g(d_i) = a$.
	Since $r$ is not applicable to \B,
		then for any element $c$ occurring in \B, it is the case
		that $R(d_i,c) \notin \B$.
	Thus, $\delta_R(d_i) \notin \B$ and consequently $\delta_R(a) \notin \View$.
	The latter statement contradicts the assumption that $R(a,b) \in \View$.

	\item There are $a$, $b'$, and $d_i$ such that $R(d_i,b') \in \B$ and $A(a) \in \View$,
		where $g(d_i) = a$.
	Since $r$ is not applicable to \B,
		then $A(d_i) \notin \B$ and thus $A(g(d_i)) \notin \View$.
	This contradicts that $A(a) \in \View$.
	
	\item there are $a$, $b$, $b'$, $d_i$, and $d_j$ such that
		$R(d_i,b') \in \B$, $A(d_j) \in \B$, $g(d_i) = g(d_j) = a$,
		and $g(b') = b$.
	Then we conclude that $\conc{\B}{d_i} = \conc{\B}{d_j}$
		and consequently $A(d_i) \in \B$.
	If $\mathit{Head}(d_i,b')$ is equal to $C(d_i)$ or $C(b')$
		for some unary predicate $C$,
		then $C \in \conc{\B}{d_i}$ or $C \in \conc{\B}{b'}$, respectively,
		and thus $C(g(d_i)) \in \View$ or $C(g(b')) \in \View$, respectively.
	If $\mathit{Head}(d_i,b')$ is equal to $Q(d_i,b')$ for some binary predicate $Q$,
		then $\delta_Q \in \conc{\B}{d_i}$ and $\rho_Q \in \conc{\B}{b'}$,
		and thus $\delta_Q(g(d_i))$ and $\rho_Q(g(b'))$ are in \View;
		therefore, \CheckRole sub-routine of the algorithm
		would return $\mathtt{True}$ on input $(Q(g(d_i),g(b')), \View)$,
		and thus $Q(g(d_i),g(b')) \in \View$.
	Anyway, the obtained contradictions conclude the case.
	\end{enumerate}

\item $r$ is of the form $R(x,y) \land A(y) \rightarrow \mathit{Head}(\vec x)$.
This case is analogous to the previous one.
\end{enumerate}

Finally,
	$g(f)$ should be in \View for each binary atom $f \in \B$,
	since
	\begin{inparaenum}[\it (i)]
	\item Equation~\eqref{eq:homomorphism is safe} holds and
	\item binary atoms that do not discover the policy
		were exhaustively added to \View.
	\end{inparaenum}

Regarding the size of the \View,
	if $a$ is a constant occurring in $\Inst$ 
	and $C$ a set of unary predicates $A$ such that $A(a) \in \H$,
	then the number of ``copies'' of $a$ added by the algorithm
	is equal to a number of subsets of $C$ closed under $\O_E$
	(see Algorithm~\ref{alg:view guarded datalog sub-routines 1}).
Clearly, this number is exponential in $|\O|$ and polynomial in $|\D|$
	(see Definition~\ref{def:predicates closed under O}).

\medskip
\noindent\textbf{Multi-linear, tree-shaped CQE instance.}
Let a DPI $\Inst = (\O, \D, P)$ be such that \O is multi-linear Datalog.
Let $\View$ be a dataset returned by Algorithm~\ref{alg:view guarded datalog full}.
For every constant $a$,
	the set $\sigma_a$ contains the constant $a_{\A^*}$ such that
	$\A^*$ is a maximal subset of $\set{A \mid A(a) \in \chase{\O}{\D}}$
	closed under $\O_E$.
It is easy to check that the number of such subsets is polynomial in the size of $\O$.
The set $\A^*$ is a maximal set of labels (i.e., unary predicates) 
	among all constants in $\sigma_a$,
	i.e., if $a' \in \sigma_a$, 
	then $\set{A \mid A(a') \in \View} \incl \A^*$ for some $\A^*$.
We will also denote as $a^*$ an element of $\sigma_a$
	such that $\conc{\View}{a^*} = \A^*$.

\medskip
Let $b$ be a constant from \Inst and let $a'$ be from $\sigma_a$
	such that $R(a,b)$ is in $\chase{\O_E}{\D}$.
Since \Inst is multi-linear,
	\O does not include rules with bodies of the form
	$R(x,y) \land A(x)$ and thus whatever unary atoms $a'$ participates in,
	they cannot affect the atoms $b$ participates in.
Hence we conclude that
\begin{inparaenum}[\it (i)]
\item if $R(a',b)$ is in \View for some $a' \in \sigma_a$ and $b$ from \Inst,
	then so is $R(a^*,b)$ for a corresponding element $a^*$ from $\sigma_a$;
\item if $R(a',b')$ is in \View for some $a' \in \sigma_a$ and $b' \in \sigma_b$,
	then so is $R(a^*,b^*)$ for corresponding elements $a^*$ and $b^*$ 
	from $\sigma_a$ and $\sigma_b$, respectively.
\end{inparaenum}
Let $\View^*$ be a subset of $\View$ which is based on
	constants $a$ from \Inst and their copies $a^*$.
Clearly if, for some CQ $Q(\vec x)$ ,
	$\vec a \in \cert{Q}{\O}{\D}$ and
	$\vec a \in \cert{Q}{\O}{\View}$, then
	$\vec a \in \cert{Q}{\O}{\View^*}$,
	which proves optimality of $\View^*$.

The polynomial size of $\View^*$ follows from the observation
	that the sub-routine \Anonymise introduces only linearly many copies of a constant $a$
	for each set of labels, including $\A^*$.

\medskip
\noindent\textbf{Linear, tree-shaped CQE instance.}
Finally, assume that \O is linear.
Then, there is the unique maximal subset $\View_0$ of $\chase{\O_E}{\D}$
	such that $\cert{P}{\O_E}{\View_0} = \eset$,
	which gives the uniqueness of \View.
\end{proof} %

\HornDatalog*
 
\begin{proof} 
 First we show the confidentiality preservation of the censor.
Since $\vcens{\Inst'}{\View'}$ is confidentiality-preserving,
	we have that $\O' \cup \View' \not\models P(\vec s)$
	for each $\vec s \in \cert{P}{\O}{\D}$. 
Since $\O'$ is Datalog, 
	it is clear that $\chase{\O'}{\View} = \chase{\O'}{\View'}$; 
	thus, $\O' \cup \View \not\models P(\vec s)$
	for each $\vec s \in \cert{P}{\O}{\D}$. 
But then, 
	since $P$ is tree-shaped and $\O'$ is a rewriting of $\O$ 
	we have $\O \cup \View \not\models P(\vec s)$ for each $\vec s \in \cert{P}{\O}{\D}$
	(see~\cite{DBLP:conf/aaai/StefanoniMH13}), 
	as required.

\medskip

Now we concentrate on the optimality of the view. Assume by contradiction that $\vcens{\Inst}{\View}$
is not optimal, then, by Proposition~\ref{prop:basic-view-properties},
there exists a BCQ $Q$ such that \emph{(i)} $\O \cup \D \models Q$; \emph{(ii)} 
$\O \cup \View \not\models Q$;  and
\emph{(iii)} $\O \cup \View \cup \{Q\} \not\models P(\vec s)$ for each $\vec s \in \cert{P}{\O}{\D}$.
Since $\O \cup \D \models Q$ and $\O' \models \O$ we have \emph{(iv)} $\O' \cup \D \models Q$.
Furthermore, condition \emph{(iii)} 
	implies that $\O \cup \View \cup \freeze{Q} \not\models P(\vec s)$
and since $P$ is tree-shaped and $\O'$ is a rewriting of $\O$ we have 
$\O' \cup \View \cup \freeze{Q} \not\models P(\vec s)$, which 
by the fact that $\View \models \View'$ then
also implies that \emph{(v)} $\O' \cup \View' \cup \{Q\} \not\models P(\vec s)$
for each $\vec s \in \cert{P}{\O}{\D}$. 
But then, \emph{(iv)} and \emph{(v)} and the fact that
$\View'$ is optimal for $\Inst'$ we must have $\O' \cup \View' \models Q$.
Since $\View = \chase{\O'}{\View'}$ we have $\View \models Q$, which
contradicts \emph{(ii)}.
\end{proof}

\subsection{Proofs for Section~\ref{sec:obstruction-optimal}}

For the sake of ease in the proofs for theorems and propositions of this section we will consider only the class of BCQs with constants.
Clearly, any results obtained for this class will also hold for the class of all CQs.
Before proceeding to the main proofs, we introduce few definitions and lemmas.

Let \O be a Datalog ontology and \D a dataset;
	let $\mathbb{Q}'$ be a possibly infinite set of queries
	such that $\O \cup \D \models Q$ for each $Q \in \mathbb{Q}'$.
Then a censor $\cens_{\mathbb{Q}'}$ is defined as follows:
\begin{align*}
	\cens_{\mathbb{Q}'}(Q) = \true&\ \quad \text{ iff } \quad \cert{Q}{\O}{\D} = \true \text{ and } 
		\freeze{Q} \not\models Q' \text{ for each } Q' \in \mathbb{Q}'.
\end{align*}

\begin{restatable}{lemma}{UCQCensorMinimalSet}
\label{prop:UCQ censor over minimal set}
Let $\Inst = (\O, \D, P)$ be a CQE instance;
	let $\Upsilon$ be a pseudo-obstruction based on a subset $\mathbb{S}$ of $\goals{\Inst}$.
Then, $\cens_{\Upsilon} = \cens_{\goals{\Inst} \setminus \mathbb{S}}$.
\end{restatable}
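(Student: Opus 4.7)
The plan is to unfold the definition of $\cens_{\mathbb{Q}'}$ and observe that the $\cert{Q}{\O}{\D}=\true$ clause is shared by both censors, so the equality reduces to showing that, for every BCQ $Q$,
\[
  \freeze{Q} \not\models Q' \text{ for all } Q' \in \Upsilon
  \quad\Longleftrightarrow\quad
  \freeze{Q} \not\models Q'' \text{ for all } Q'' \in \goals{\Inst} \setminus \mathbb{S}.
\]
I will prove the two implications separately.

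The $(\Leftarrow)$ direction is immediate from the very definition of pseudo-obstruction: since $\Upsilon \incl \goals{\Inst} \setminus \mathbb{S}$, if $\freeze{Q}$ entails no CQ in the larger set, it certainly entails no CQ in the smaller one. Hence $\cens_{\goals{\Inst} \setminus \mathbb{S}}(Q)=\true$ implies $\cens_{\Upsilon}(Q)=\true$.

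For the $(\Rightarrow)$ direction I would argue by contrapositive. Assume there exists $Q'' \in \goals{\Inst} \setminus \mathbb{S}$ with $\freeze{Q} \models Q''$. By the defining property of a pseudo-obstruction (Definition~\ref{def:pseudo obstruction}), $\Upsilon$ contains some $Q' \in \Upsilon$ such that $Q'' \models Q'$. Combining $\freeze{Q} \models Q''$ and $Q'' \models Q'$ by transitivity of BCQ entailment yields $\freeze{Q} \models Q'$, so $\cens_{\Upsilon}(Q)=\false$. Contraposing gives the desired implication.

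The argument is essentially a one-line chase through definitions, so there is no real obstacle; the only thing to be careful with is reading the covering property of pseudo-obstructions in the correct direction (every element of $\goals{\Inst} \setminus \mathbb{S}$ is \emph{entailed by} something that entails an element of $\Upsilon$, namely the CQ itself), and making sure the argument is symmetric in neither direction, since inclusion $\Upsilon \incl \goals{\Inst}\setminus \mathbb{S}$ gives one direction for free while the covering property is precisely what is needed for the other.
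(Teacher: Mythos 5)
Your proof is correct and follows essentially the same route as the paper's: the inclusion $\Upsilon \subseteq \goals{\Inst}\setminus\mathbb{S}$ gives one direction, and the covering property of pseudo-obstructions (every $Q''\in\goals{\Inst}\setminus\mathbb{S}$ entails some $Q'\in\Upsilon$) combined with transitivity of entailment gives the other. No gaps.
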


\begin{proof}
Let $Q$ be a CQ such that $\cert{Q}{\O}{\D} = \true$.

Assume that $\cens_{\goals{\Inst} \setminus \mathbb{S}}(Q) = \false$;
	this yields that $\freeze{Q} \models Q'$ for some $Q' \in \goals{\Inst} \setminus \mathbb{S}$.
Then there exists $Q'' \in \Upsilon$ such that $Q' \models Q''$ 
	and thus $\freeze{Q} \models Q''$,
	i.e., $\cens_{\Upsilon}(Q) = \false$.
	
Assume that $\cens_{\Upsilon}(Q) = \false$;
	this yields that $\freeze{Q} \models Q''$ for some $Q'' \in \Upsilon$.
Note that $Q'' \in \goals{\Inst} \setminus \mathbb{S}$ since $\Upsilon \incl \goals{\Inst} \setminus \mathbb{S}$
	and thus $\cens_{\goals{\Inst} \setminus \mathbb{S}}(Q) = \false$.
\end{proof}

The lemma above allows us to speak of obstruction censors in terms of 
	either $\Upsilon$ or $\goals{\Inst} \setminus \mathbb{S}$, whatever way is more convenient to show the required results. %
We are going to show now that a censor \cens is optimal for a given CQE instance $\Inst$
	iff
	there exists a maximal subset $\mathbb{S}$ of $\goals{\Inst}$ 
	such that $\cens = \cens_{\goals{\Inst} \setminus \mathbb{S}}$.
But first we need the following notion of a normalised proof.

\begin{definition}
Let \O be a Datalog ontology, \D a dataset, and $G_0$ a goal.
A proof $\pi$ of length $n$ of $G_0$ in $\O \cup \D$ is \emph{normalised}
	if there is $k \leq n$ such that 
	$r_i \in \O$ for each $i < k$ and $r_j \in \D$ for each $j \geq k$.
Moreover, the number $k$ is called the \emph{frontier} of $\pi$, denoted $\frontier{\pi}$.
\end{definition}

Intuitively, a normalised proof $\pi$ works as follows:
	first we rewrite the initial query $G_0$ over the ontology \O
	until we obtain the query $G_{\frontier{\pi}-1}$ that can be mapped into \D,
	and then we perform such a mapping applying $(r_i, \theta_i)$ with $i \geq \frontier{\pi}$.
Observe that for every $G_i$ with $i < \frontier{\pi}$ it holds that $\O \cup G_i \models G_0$.

We exploit the following known result about SLD resolution over
Datalog ontologies.
\begin{restatable}{lemma}{proofnormalisation}
\label{lem:normalisation of a proof}
Let \O be a Datalog ontology, let \D be a dataset, and let
$G_0$ be a goal such that $\O \cup \D \models G_0$.
Then there exists a normalised SLD proof $\pi$ of $G_0$ in $\O \cup \D$.
\end{restatable}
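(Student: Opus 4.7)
The plan is to establish the result via an adjacent-swap argument familiar from logic programming: starting from any SLD proof of $G_0$, we repeatedly swap consecutive steps in which a fact is used before a rule, until no such pair remains, at which point the proof is normalised. Since $\O \cup \D \models G_0$, soundness and completeness of SLD resolution provide some proof $\pi = G_0, \ldots, G_n$ (with $G_n$ empty) to start from. The termination measure I would use is $\mu(\pi) = \#\{(i,j) \mid i<j,\ r_i \in \D,\ r_j \in \O\}$; each swap strictly decreases $\mu$, so iteration terminates, and the final proof has all rule-steps preceding all fact-steps, giving a frontier equal to the index of the first fact-step.

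The technical core is the swap lemma. Suppose step $i$ of $\pi$ applies a fact $\delta_i \in \D$ (selected atom $\beta_i$ of $G_{i-1}$, mgu $\theta_i$) and step $i+1$ applies a rule $\psi \to \delta \in \O$ (selected atom $\beta_{i+1}$ of $G_i$, mgu $\theta_{i+1}$), with the rule variant renamed apart from $G_{i-1}$. Since $G_i$ is obtained from $G_{i-1}$ by deleting $\beta_i$ and applying $\theta_i$, the atom $\beta_{i+1}$ has the form $\alpha\theta_i$ for some $\alpha$ in $G_{i-1}\setminus\{\beta_i\}$. Consequently $\theta_i\theta_{i+1}$ unifies $\alpha$ with $\delta$, so by the standard mgu property there exist an mgu $\sigma$ of $\alpha$ and $\delta$ and a substitution $\tau$ with $\theta_i\theta_{i+1} = \sigma\tau$ (modulo renaming of fresh variables). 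I would then verify that performing first the rule-step on $\alpha$ in $G_{i-1}$ with $\sigma$, and next the fact-step on $\beta_i\sigma$ in the intermediate goal with the appropriate restriction of $\tau$, is a valid SLD resolution sequence whose result is $G_{i+1}$ up to variable renaming.

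The main obstacle is the bookkeeping of most general unifiers and of variable renamings in the swap lemma: one needs to be careful that the rule's variables are apart from those in $G_{i-1}$ and from $\delta_i$, that the decomposition $\sigma\tau$ of $\theta_i\theta_{i+1}$ exists and yields an mgu in each of the two new steps, and that the final goal agrees with $G_{i+1}$ modulo consistent renaming, so that the remainder of $\pi$ (steps $i+2,\ldots,n$) can be reused unchanged (possibly after pushing the renaming through). Once this is in place, the induction on $\mu(\pi)$ is immediate and produces the required normalised proof.
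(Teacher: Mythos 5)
Your proposal is correct, and it is worth noting that the paper itself gives no proof of this lemma at all: it is introduced with the phrase ``we exploit the following known result about SLD resolution over Datalog ontologies'' and never argued, in the appendix or elsewhere. What you have written is essentially the classical Switching (independence-of-selection) Lemma argument from logic programming, specialised to the Datalog setting, and it does supply the missing justification. The one structural point that makes the argument go through -- and which you use implicitly but should state explicitly -- is that a fact-step has an empty body and therefore introduces no new atoms into the goal; this is precisely why the atom $\beta_{i+1}$ selected at the subsequent rule-step must descend from some $\alpha \in G_{i-1}\setminus\{\beta_i\}$, so that an adjacent (fact, rule) pair can always be swapped. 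Combined with the observation that any non-normalised proof contains such an adjacent pair, your inversion-counting measure $\mu$ then terminates the process correctly. The remaining mgu/renaming bookkeeping you flag (decomposing $\theta_i\theta_{i+1}$ as $\sigma\tau$, checking that the two reordered steps reproduce $G_{i+1}$ up to a variant, and pushing the renaming through the tail of the derivation) is exactly the content of the standard switching lemma and lifting lemma, so your outline is complete modulo routine detail. In short: the paper asserts, you prove; your route is the standard one and it is sound.
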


\begin{lemma}
\label{lem:on characterisation of optimal censors for datalog}
Let $\Inst = (\O, \D, P)$ be a CQE instance with \O a Datalog ontology 
	and \cens a censor for $\O$ and $\D$.
Then \cens is optimal for \Inst iff there exists a maximal subset $\mathbb{S}$ of $\goals{\Inst}$
	such that
\begin{inparaenum}[\it (i)]
	\item $\O \cup \mathbb{S} \not\models P(\vec s)$ for each $\vec s \in \cert{P}{\O}{\D}$ and
	\item $\cens = \cens_{\goals{\Inst} \setminus \mathbb{S}}$.
\end{inparaenum}
\end{lemma}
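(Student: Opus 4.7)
My plan is to establish the two implications in sequence, treating the backward direction first (``maximal safe $\mathbb{S}$ $\Rightarrow$ optimal $\cens_{\goals{\Inst}\setminus\mathbb{S}}$'') because it contains the main technical content, which the forward direction then reuses. For the backward direction, I put $\cens = \cens_{\goals{\Inst}\setminus\mathbb{S}}$ and prove confidentiality by contradiction. Assume $\O \cup \Fcens{\cens} \models P(\vec s)$ for some $\vec s \in \cert{P}{\O}{\D}$. By compactness, finitely many $Q_1,\ldots,Q_k \in \Fcens{\cens}$ already entail $P(\vec s)$ together with $\O$; Skolemizing each $Q_i$ with pairwise disjoint fresh constants yields a dataset $\D'$ with $\O \cup \D' \models P(\vec s)$. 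By Lemma~\ref{lem:normalisation of a proof} there is a normalized SLD refutation with an ontology phase ending in a goal $G_{k-1}$ followed by a fact phase against $\D'$. The pivotal structural observation is that disjointness of Skolem constants forces each variable-connected component $G^{(j)}$ of $G_{k-1}$ to resolve entirely against $\mathrm{Skolem}(Q_{i_j})$ for a single index $i_j$, so $Q_{i_j} \models \exists^* G^{(j)}$ and hence $\freeze{Q_{i_j}} \models \exists^* G^{(j)}$. Using the homomorphism witnessing $\O \cup \D \models Q_{i_j}$ and reordering the fact phase so that components other than $G^{(j)}$ are resolved first (valid because distinct components share no variables), I exhibit $G^{(j)}$ as a goal in an SLD proof of $P(\vec s)$ in $\O \cup \D$, placing $\exists^* G^{(j)}$ in $\goals{\Inst}$. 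The defining property of $\cens$ rules out $\exists^* G^{(j)} \in \goals{\Inst}\setminus\mathbb{S}$, so each $\exists^* G^{(j)} \in \mathbb{S}$; hence $\mathbb{S} \models \exists^* G_{k-1}$, and running the ontology phase semantically gives $\O \cup \{\exists^* G_{k-1}\} \models P(\vec s)$, so $\O \cup \mathbb{S} \models P(\vec s)$, contradicting safety. For optimality, suppose a c.p.~$\cens' \neq \cens$ extends $\cens$ and pick $Q$ with $\cens'(Q)=\true$, $\cens(Q)=\false$; then $\freeze{Q} \models Q^\star$ for some $Q^\star \in \goals{\Inst}\setminus\mathbb{S}$. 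A short lemma---that maximal safe $\mathbb{S}$ is closed upward under entailment within $\goals{\Inst}$, since any $\goals{\Inst}$-consequence of $\mathbb{S}$ added to $\mathbb{S}$ yields an equivalent, hence safe, set---implies $\mathbb{S} \subseteq \Fcens{\cens} \subseteq \Fcens{\cens'}$, so by maximality $\O \cup \mathbb{S} \cup \{Q^\star\} \models P(\vec a)$ for some $\vec a$, whence $\O \cup \Fcens{\cens'} \models P(\vec a)$, contradicting confidentiality of $\cens'$.

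For the forward direction, let $\cens$ be optimal and set $\mathbb{S} = \Fcens{\cens} \cap \goals{\Inst}$; safety is immediate from $\mathbb{S} \subseteq \Fcens{\cens}$. The first key observation is that optimal censors are closed under CQ entailment: if $Q \in \Fcens{\cens}$ and $Q \models Q'$, then $Q' \in \Fcens{\cens}$, because the extension $\cens^*$ with $\cens^*(Q') = \true$ satisfies $\O \cup \Fcens{\cens^*} \equiv \O \cup \Fcens{\cens}$ (so $\cens^*$ is c.p.) and would strictly extend $\cens$ otherwise. This immediately yields $\cens(Q)=\true \Rightarrow \cens_{\goals{\Inst}\setminus\mathbb{S}}(Q)=\true$; the converse reuses the normalization-plus-decomposition analysis from the backward direction. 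Specifically, if $\cens(Q)=\false$ while $\cens_{\goals{\Inst}\setminus\mathbb{S}}(Q)=\true$, then any putative proof of $P(\vec s)$ from $\O \cup \Fcens{\cens} \cup \{Q\}$ produces a component $\exists^* G^{(j)} \in \goals{\Inst}$ with $\freeze{Q} \models \exists^* G^{(j)}$; the hypothesis on $Q$ forces $\exists^* G^{(j)} \in \mathbb{S} \subseteq \Fcens{\cens}$, so occurrences of $Q$ in the proof can be replaced by $\exists^* G^{(j)}$, giving $\O \cup \Fcens{\cens} \models P(\vec s)$ and contradicting confidentiality of $\cens$; thus the extension by $Q$ is c.p., contradicting optimality. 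Finally, for maximality of $\mathbb{S}$, extend any strictly larger safe $\mathbb{S}' \supsetneq \mathbb{S}$ to a maximal safe $\mathbb{S}''$ (Zorn); the backward direction ensures $\cens_{\goals{\Inst}\setminus\mathbb{S}''}$ is c.p., while any $Q^\star \in \mathbb{S}''\setminus\mathbb{S}$ is answered positively by $\cens_{\goals{\Inst}\setminus\mathbb{S}''}$ but negatively by $\cens = \cens_{\goals{\Inst}\setminus\mathbb{S}}$, so $\cens_{\goals{\Inst}\setminus\mathbb{S}''}$ is a strict c.p.~extension of $\cens$, contradicting optimality of $\cens$.

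The main obstacle is the structural analysis of the normalized SLD refutation underpinning both directions: specifically, justifying that each variable-connected component of the intermediate goal $G_{k-1}$ lifts to a BCQ in $\goals{\Inst}$ entailed by a \emph{single} $Q_{i_j}$. This depends essentially on the disjointness of fresh Skolem constants across different $Q_i$ (which makes the component-to-query assignment well-defined, since a variable substituted to a Skolem constant of $Q_i$ forces all atoms sharing it to resolve within $\mathrm{Skolem}(Q_i)$) and on the back-translation of the proof to $\O \cup \D$ via the homomorphisms witnessing each $\O \cup \D \models Q_{i_j}$; the remaining pieces of both directions---the closure property, the maximality arguments, and the censor-extension constructions---reduce to careful bookkeeping on top of this central argument.
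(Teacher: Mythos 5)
Your overall architecture matches the paper's: normalise the SLD refutation of $P(\vec s)$ from $\O$ plus the frozen positive answers, split the frontier goal into pieces each entailed by a single answered query, back-translate each piece into $\goals{\Inst}$, and conclude $\O\cup\mathbb{S}\models P(\vec s)$; the surrounding bookkeeping (upward-closure of a maximal safe $\mathbb{S}$ under entailment inside $\goals{\Inst}$, closure of optimal censors under CQ entailment, the Zorn-style maximality argument) is correct and in places more explicit than the paper itself. However, the step you yourself flag as pivotal is stated incorrectly. It is not true that every variable-connected component of the frontier goal $G_{k-1}$ resolves against the Skolemisation of a single $Q_{i_j}$. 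Disjointness of Skolem constants only forces atoms sharing a variable that is \emph{instantiated to a Skolem constant} to resolve against the same frozen query; a shared variable instantiated to an original constant of $\O\cup\D$ can perfectly well bridge two different queries. Concretely, with the rule $A(x)\wedge R(x,y)\wedge B(y)\to P$ the frontier goal $A(x)\wedge R(x,y)\wedge B(y)$ is a single variable-connected component, yet $A(x)\wedge R(x,y)$ may resolve in $\freeze{Q_1}$ for $Q_1=\exists x.\,A(x)\wedge R(x,a)$ (with $y\mapsto a$) while $B(y)$ resolves in $\freeze{Q_2}$ for $Q_2=B(a)$; neither $Q_1$ nor $Q_2$ entails the existential closure of the whole component, so the assignment of that component to a single index, and hence the conclusion that its closure lies in $\mathbb{S}$, fails.

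The repair is the paper's finer decomposition: partition the atoms of the frontier goal by \emph{which} frozen query's facts prove them, obtaining $G_{k-1}=B_1\wedge\dots\wedge B_m$, and resolve all groups except $B_i$ first. The residual goal $B_i\theta_i$ then mentions only constants of $\O\cup\D$ (a Skolem constant of some $Q_j$ with $j\neq i$ occurring in $B_i\theta_i$ would make it unprovable from $\freeze{Q_i}$ alone), so $Q_i\models\exists^* B_i\theta_i$, and $\exists^* B_i\theta_i$ occurs as a goal of a proof of $P(\vec s)$ in $\O\cup\D$ because every other $B_j$ is itself entailed by $\O\cup\D$. With this replacement for your component decomposition, both occurrences of the argument (confidentiality in the backward direction and the ``$\cens(Q)=\false$ implies $\cens_{\goals{\Inst}\setminus\mathbb{S}}(Q)=\false$'' half of the forward direction) go through, and the rest of your proof stands.
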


\begin{proof}
We start with the ``only if''-direction.
Let us assume that such maximal subset $\mathbb{S}$ exists.
We  show that $\cens_{\goals{\Inst} \setminus \mathbb{S}}$
is optimal. 

First, we show that $\cens_{\goals{\Inst} \setminus \mathbb{S}}$ is confidentiality preserving.
Assume the contrary; then, 
there is a (finite) subset $\mathbb{F}$ of $\Fcens{\cens_{\goals{\Inst} \setminus \mathbb{S}}}$ 
	such that $\O \cup \mathbb{F} \models P(\vec s)$
	for some $\vec s \in \cert{P}{\O}{\D}$.
This yields the existence of proof $\pi$ of $P(\vec s)$ in $\O \cup \freeze{\mathbb{F}}$, where
$\freeze{\mathbb{F}} = \bigcup_{Q \in \mathbb{F}} \freeze{Q}$.
Due to Lemma~\ref{lem:normalisation of a proof}, we can assume that $\pi$ is normalised
with frontier $k+1$. Let $G_k$ be the goal right before frontier
in $\pi$. Since $\pi$ is normalised, then $G_k$ is proved
by using only facts from $\freeze{\mathbb{F}}$. So, 
we can write $G_k$ as $G_k = B_1 \wedge \ldots \wedge B_m$, where
each $B_j$ is the conjunction of all atoms that are proved using
facts only from a particular $\freeze{Q_j}$. Obviously, the order
in which these $B_j$ are proved is irrelevant, so let us assume that
all $B_j$ have been proved except for $B_i$; since, the different
$B_j$ can share variables, the remaining goal to prove may not be just $B_i$, but
rather $B_i \theta_i$, with $\theta_i$ some substitution. 
We make the following observations:
\begin{enumerate}
  \item $B_i \theta_i$ does not mention any constants not in $\O \cup \D$. Indeed, for any distinct
  queries $Q_k$, $Q_j$ in $\mathbb{F}$ we have that $\freeze{Q_k}$ and 
  $\freeze{Q_j}$ only share constants from $\O \cup \D$ $\freeze{Q_k}$; thus, if $B_i \theta_i$ contains some
  constant coming from $\freeze{Q_j}$ with $j \neq i$, it would not be possible to prove $B_i\theta_i$ using only facts from
  $\freeze{Q_i}$.
  \item There exists a proof of $P(\vec s)$ in $\O \cup \D$ 
  	such that $B_i \theta_i$ occurs as a subgoal.
  We construct such proof as follows. First, we can ``reach'' goal $G_k$ because it only requires
  rules from $\O$. Note also that each $B_j$ follows from $\O \cup \D$, so we can continue the proof
  by showing all $B_j$ except for $B_i$. Then, we can do it in such a way we reach precisely $B_i \theta_i$ as a subgoal.  
 \item $Q_i \models \exists^* B_i \theta_i$ since $B_i \theta_i$ is provable from  $\freeze{Q_i}$. 
\end{enumerate}
Observation 2 means that $B_i \theta_i \in \goals{\Inst}$ for all $1 \leq i \leq m$. Furthermore, since
the censor answers $\true$ for each $Q_i$ we have that $B_i \theta_i \in \mathbb{S}$. But then,
$\O \cup \mathbb{S} \models P(\vec s)$, which is a contradiction.

\medskip
Now we show the optimality of $\cens_{\goals{\Inst} \setminus \mathbb{S}}$. 
Clearly, a censor $\cens$ for $\Inst = (\O,\D,P)$ is optimal 
	if and only if for each CQ $Q(\vec x)$
and each $\vec{t} \in \cert{Q}{\O}{\D}$ the fact that
$\O \cup \Fcens{\cens} \cup \{Q(\vec t)\} \not\models P(\vec s)$ holds
for each $\vec s \in \cert{P}{\O}{\D}$ implies that $\O \cup \Fcens{\cens} \models Q(\vec t)$.
Due to this, $\cens_{\goals{\Inst} \setminus \mathbb{S}}$ is optimal if and only if
	for each $Q$ such that $\cert{Q}{\O}{\D} = \true$ and $\O \cup \Fcens{\cens_{\goals{\Inst} \setminus \mathbb{S}}} \cup \set{Q} \not\models P(\vec s)$,
	it holds that $\O \cup \Fcens{\cens_{\goals{\Inst} \setminus \mathbb{S}}} \models Q$.
Assume to the contrary that there exists a CQ $Q$ such that 
	$\cert{Q}{\O}{\D} = \true$ and $\O \cup \Fcens{\cens_{\goals{\Inst} \setminus \mathbb{S}}} \cup \set{Q} \not\models P(\vec s)$,
	but $\O \cup \Fcens{\cens_{\goals{\Inst} \setminus \mathbb{S}}} \not\models Q$.
The latter means that $\cens_{\goals{\Inst} \setminus \mathbb{S}}(Q) = \false$, that is, $\freeze{Q} \models Q'$,
	for some $Q' \in \goals{\Inst} \setminus \mathbb{S}$.
Recall that for any $Q \in \goals{\Inst} \setminus \mathbb{S}$ it holds that
	$\O \cup \mathbb{S} \cup \set{Q} \models P(\vec s)$ due to maximality of $\mathbb{S}$.
Observe that $\mathbb{S} \incl \Fcens{\cens_{\goals{\Inst} \setminus \mathbb{S}}}$;
	this yields $\O \cup \Fcens{\cens_{\goals{\Inst} \setminus \mathbb{S}}} \cup \set{Q} 
	\models P(\vec s)$,
	which contradicts the initial assumption and concludes the ``only if''-direction.

\medskip
Now we consider the ``if''-direction.
Let us now assume that $\cens$ is optimal, and let
$\mathbb{Q}' = \set{Q \mid \cens(Q) = \false}$.
Consider the following subset $\mathbb{S}$ of $\goals{\Inst}$: 
	$\mathbb{S} = \goals{\Inst} \setminus \mathbb{Q}'$.
To prove the ``if''-direction, it suffices to prove the following two conditions:
\begin{inparaenum}[\it (i)]
	\item
$\mathbb{S}$ is a maximal subset of $\goals{\Inst}$ 
	such that $\O \cup \mathbb{S} \not\models P(\vec s)$
	for each $\vec s \in \cert{P}{\O}{\D}$ and
	\item
$\cens_{\goals{\Inst} \setminus \mathbb{S}} = \cens$.
\end{inparaenum}

To show \emph{(i)}, 
	assume that $\O \cup \mathbb{S} \cup \set{Q} \models P(\vec s)$ 
	for some $\vec s \in \cert{P}{\O}{\D}$
	and some $Q \in \goals{\Inst}$.
Clearly, since by construction $\mathbb{S} \incl \Fcens{\cens}$, 
it holds that $\O \cup \Fcens{\cens} \cup \set{Q} \models P(\vec s)$,
and therefore $\cens(Q) = \false$, i.e. $Q \in \mathbb{Q}'$, which implies \emph{(i)}.

To show \emph{(ii)}, let us pick an arbitrary $Q$ such that $\O \cup \D \models Q$
	but $\cens(Q) = \false$ and hence $Q \in \mathbb{Q}'$. 
Since $\cens$ is optimal, 
	we have that $\O \cup \Fcens{\cens} \cup \{Q\} \models P(\vec s)$ for some
	$\vec s \in \cert{P}{\O}{\D}$, 
	so let $\mathbb{F}$ be any minimal subset of $\Fcens{\cens}$ 
	such that $\O \cup \mathbb{F} \cup \{Q\} \models P(\vec s)$.
Following the same arguments as we used in the ``only if'' direction
we have that there exists $G \in \goals{\Inst} \setminus \mathbb{S}$ 
such that $Q \models \exists^* G$; since $\exists^* G$ is part of the obstruction, 
	then $\cens_{\goals{\Inst} \setminus \mathbb{S}}(Q) = \false$.
Finally, assume that $\cens_{\goals{\Inst} \setminus \mathbb{S}}(Q) = \false$; then,
$Q \models \exists^* G$ for some $G \in \goals{\Inst} \setminus \mathbb{S}$. Since
$\goals{\Inst} \setminus \mathbb{S} \subseteq \mathbb{Q}$, we have that
$\cens(Q) = \false$, as required.
\end{proof} %

\optimalUCQcensor*

\begin{proof}

Let us prove Statement 1. Assume that $\Upsilon$ is a finite pseudo-obstruction.
By Lemma~\ref{prop:UCQ censor over minimal set}, we have that
$\cens_{\Upsilon} = \cens_{\goals{\Inst} \setminus \mathbb{S}}$.
By the ``only if'' statement in Lemma
\ref{lem:on characterisation of optimal censors for datalog}, we have that
$\cens_{\goals{\Inst} \setminus \mathbb{S}}$ is optimal. But then,
since $\Upsilon$ is finite, then $U$ is an obstruction.

\medskip

Next, we show Statement 2. Assume by contradiction that
each pseudo-obstruction is infinite, but there is an optimal
censor based on an obstruction $U$. Since $\ocens{\Inst}{U}$ is an 
optimal censor, then the ``if'' direction of Lemma
\ref{lem:on characterisation of optimal censors for datalog} tells us
that there exists a pseudo-obstruction $\Upsilon$ such that
$\ocens{\Inst}{U} = \cens_{\Upsilon}$. We can show that
then there exists a finite pseudo-obstruction
which contradicts the assumption above.
Pick any CQ $Q$ from $U$; then, clearly, $\ocens{\Inst}{U}(Q) = \false$ and hence
$\cens_{\Upsilon}(Q) = \false$. The latter implies that
there exists $Q'\in \Upsilon$ such that $Q \models Q'$. Let us now
construct $U' = \bigvee_{Q \in U} Q'$, which is finite and also
a ``subset'' of $\Upsilon$. To obtain a contradiction, it thus suffices
to show now that $\ocens{\Inst}{U'} = \cens_{\Upsilon}$.
Indeed, for each CQ $Q$ such that $\cert{Q}{\D}{\O} = \true$
	(recall that $\ocens{\Inst}{U} = \cens_{\Upsilon}$):
\begin{itemize}
	\item 
Assume that $\ocens{\Inst}{U}(Q) = \false$;
	then	 there is $Q'$ in $U$ such that $\freeze{Q} \models Q'$,
	which yields $\freeze{Q} \models Q''$ with $Q''$ from $U'$,
	and therefore $\ocens{\Inst}{U'}(Q) = \false$.
	
	\item
Assume that $\ocens{\Inst}{U'}(Q) = \false$;
	then $\freeze{Q} \models Q''$ for some $Q''$ in $U'$,
	and consequently, since $Q'' \in \goals{\Inst} \setminus \mathbb{S}$,
	we conclude that $\cens_\Upsilon(Q) = \false$.
\end{itemize}
The obtained contradiction concludes the proof.
\end{proof} %
\nonexistenceobs*

\begin{proof}

To show the first statement, consider $\Inst_1 = (\O_1,\D_1, P_1)$, where
$\D_1 = \set{R(a,a), A(a)}$, $P_1 = A(a)$, and the 
guarded RL (and EL) ontology
$\O_1 =$ $\set{R(x,y) \land A(y) \rightarrow A(x)}$. Since this CQE instance is 
guarded and tree-shaped, by Theorem~\ref{th:guarded} we can devise
an optimal view. 
No optimal obstruction, however, exists,
	which is shown in Example~\ref{example:no obstruction censor}.

To show the second statement, consider CQE instance $\Inst_2 = (\O_2,\D_2, P_2)$, 
	with $\D_2 = \set{R(a,a)}$,
	$P_2 = A(a)$, and $\O_2 = \{R(x_1,y) \land R(x_2, y) \rightarrow x_1 \approx x_2, R(x,y) 
	\rightarrow A(y)\}$.
From~\cite{DBLP:conf/semweb/GrauKKZ13} 
	we know that no optimal view exists for this instance, and the proof
	can be easily extended to our framework 
	(note that our notion of a censor $\vcens{\Inst}{\View}$ based on a view $\View$
	differs from the one in~\cite{DBLP:conf/semweb/GrauKKZ13} )
extends also to the case where views are not required to be sound. 
However, $U = A(a) \lor \exists x.\, R(x,a)$ is an optimal
obstruction, since there is only one
proof of $A(a)$ with subgoal $R(x,a)$.
\end{proof} 

\algorithmForQLIsCorrect*

\begin{proof}
Optimality and uniqueness follows from Theorem~\ref{th:characterisation}
 	and the facts that
 		\begin{inparaenum}[\it (i)]
 			\item the set $S$ is exactly $\goals{\Inst}$ 
 			\item the only maximal subset $\mathbb{S}$ of $\goals{\Inst}$ 
 			such that $\O \cup \mathbb{S}$ does not entail any $P(\vec s)$
 			is the empty set.			
 		\end{inparaenum}
To prove the former fact, first observe that any goal that can appear 
	in any SLD proof in $\O \cup \D$
	is isomorphic to one of the nodes of the proof-graph of $\O \cup \D$;
	then Fact~(i) follows directly from the construction of the proof-graph.
Fact~(ii) follows from the observation that each SLD proof in case of linear $\O$ is normalised,
	and therefore for each $Q \in S$ it holds that $\O \cup Q \models P(\vec s)$
	for some $\vec s \in \cert{\P}{\O}{\D}$.

Finally, polynomiality follows from 
 	the fact that in linear Datalog the size of
 	the proof-graph
 	is at most cubic in $|\O\cup\D|$.
\end{proof}

\obstructionForQL*

\begin{proof}
Let $\Inst = (\O, \D, P)$ be a CQE instance with $\O$ in QL.
Let $\cens'$ be the optimal censor for $\Inst' = (\dat{\O}{\Const}, \D, P)$,
	where $\Const$ is a set of constants of \Inst 
	and $\dat{\O}{\Const}$ is a linear Datalog ontology.
By Theorem~\ref{thm:optimal censors for linear RL}, $\cens' = \ocens{\Inst'}{U}$
	for the UCQ $U$ as defined in the theorem.
Let $\cens = \ocens{\Inst}{U}$.
We are going to show that \cens is an optimal censor for $\Inst$.

\medskip
\noindent
\textbf{Confidentiality preservation.}
Assume that \cens is not confidentiality preserving for $\Inst$,
	that is, $\O \cup \Fcens{\cens} \models P(\vec s)$ for some $\vec s \in \cert{P}{\O}{\D}$.
This means that there exist $Q_1, \ldots, Q_n \in \Fcens{\cens}$ such that
	$\O \cup \set{Q_1, \ldots, Q_n} \models P(\vec s)$;
	clearly, $\O \cup \D \models Q_i$ for each $i \in \set{1, \ldots, n}$.
By Proposition~\ref{thm:combined-approach}, %
	$\dat{\O}{\Const} \models \O$ and consequently 	
	$\dat{\O}{\Const} \cup \D \models Q_i$ for each $i \in \set{1, \ldots, n}$.
Since $\cens'$ is confidentiality preserving for $\Inst'$, we conclude that $\set{Q_1, \ldots, Q_n} \not\incl \Fcens{\cens'}$,
	so there is $j \in \set{1, \ldots, n}$ such that $\cens'(Q_j) = \false$;
	i.e., $\freeze{Q_i} \models U$.
The last entailment implies that $\cens(Q_j) = \false$,
	i.e., $Q_j \notin \Fcens{\cens}$, which yields a contradiction
	and thus $\cens$ is confidentiality preserving for \Inst.
	
\medskip
\noindent
\textbf{Optimality.}
Assume, for the sake of getting a contradiction, that \cens is not optimal for \Inst,
	that is, there exists $Q$ such that
	\begin{inparaenum}[\it (i)]
		\item $\O \cup \D \models Q$,
		\item $Q \notin \Fcens{\cens}$, and
		\item $\O \cup \Fcens{\cens} \cup \set{Q} \not\models P(\vec s)$ 
		for each $\vec s \in \cert{P}{\O}{\D}$.
	\end{inparaenum}
This yields $\freeze{Q} \models u$ for some disjunct $u$ in $U$ and consequently $\cens'(Q) = \false$.
Note that for each disjunct $u$ in $U$, 
	it holds that $\dat{\O}{\Const} \cup \set{u} \models P(\vec s)$ 
	for some $\vec s \in \cert{P}{\O}{\D}$;
	thus $\dat{\O}{\Const} \cup \set{Q} \models P(\vec s)$.
There are the following cases depending on the form of $u$.
\begin{itemize}
	\item If $u$ is of the form $A(a)$ or $R(a,b)$ with $a,b \in \Const$,
		then $\O \cup \set{u} \models P(\vec s)$ since, 
		due to Proposition~\ref{thm:combined-approach},
		$\dat{\O}{\Const}$ is a $\Const$-rewriting of \O;
		thus, $\O \cup \set{Q} \models P(\vec s)$ which yields a contradiction with \emph{(iii)}.
		
	\item If $u$ is of the form $\exists y. R(a,y)$ with $a \in \Const$,
		then let $\O_{\min}$ be a minimal subset of $\dat{\O}{\Const}$ such that
		$\O_{\min} \cup \set{u} \models P(\vec s)$.
	Due to the assumption, it holds $\O \cup \set{u} \not\models P(\vec s)$;
		thus, $\O_{\min} \not\incl \O$ and therefore $\O_{\min}$ includes one of the rules introduced by $\Xi$.
	That is, $\O_{\min}$ contains (some of) the following rules
		that come from the Skolemisation $\dat{r}{\Const}$ of some rule $r = A(x) \rightarrow \exists y.[S(x,y) \land B(y)]$ of Type~(3) in \O: %
	\begin{align}\label{eq:ql to datalog rules}
	A(x) \rightarrow P_S(x, c_{A,S}),\quad P_S(x,y) \rightarrow S(x,y),\quad \text{ and } P_S(x,y) \rightarrow B(y).
	\end{align}		
	Consider a proof $\pi = G_0 \rightarrow \ldots \rightarrow G_n$ of $P(\vec s)$ in $\dat{\O}{\Const} \cup \freeze{\exists y. R(a,y)}$,
		where $G_0 = P(\vec s)$. 
	Clearly, $G_i$ can be obtained from $G_{i-1}$ by applying a rule from $\O_{\min}$
	for each $i = 1, \ldots, n-1$, and $G_{n-1} = R(a,x')$ for some $x'$ since the last step of the proof is applying
		the only rule from $\freeze{\exists y. R(a,y)}$.
	Let $G_k$ be the first goal in $\pi$ obtained from $G_{k-1}$ by applying a rule from Equation~\eqref{eq:ql to datalog rules};
		clearly, $\O \cup \set{\exists^*G_{k-1}} \models \exists^* G_0$.
	We have the following cases.
	\begin{itemize}
		\item Assume that we apply the third rule from Equality~\eqref{eq:ql to datalog rules}
			to $G_{k-1} = B(b)$ for some constant $b$ 
			(note that a goal $B(x)$ with $x$ a Skolem constant cannot appear by applying QL rules except for Type $(3)$). %
		Then $G_k = P_S(x, b)$, and the only rule that has $P_S$ in its head is the first one from Equality~\eqref{eq:ql to datalog rules};
			however, this rule cannot be applied to $G_k$ since we cannot unify $b$ and $c_{A,S}$.
		Thus, this case is invalid.
		
		\item Assume that we apply the second rule from Equality~\eqref{eq:ql to datalog rules} to $G_{k-1} = S(b,d)$ for some constants $b$ and $d$.
		This case is always invalid due to the same reason as the previous one.
		
		\item Assume that we apply the third rule from Equality~\eqref{eq:ql to datalog rules} to $G_{k-1} = S(b,x)$ for some constant $b$ and Skolem constant $x$.
		Then, $G_k = P_S(b, x)$ and $G_{k+1}$ is obtained from $G_k$ by applying the first rule from Equation~\eqref{eq:ql to datalog rules};
			that is, $G_{k+1} = A(b)$.
		But then we have that $A(x) \rightarrow \exists y.[S(x,y) \land B(y)] \in \O$
			and consequently $\O \cup \set{A(b)} \models \exists^* G_{k-1}$.
		W.l.o.g. we can assume that starting from $G_{k+1}$ rules only from $\O$ are used,
			which means that $\O \cup \freeze{\exists y.R(s,y)} \models A(b)$.
		
		\item No other case is possible.
	\end{itemize}
	Thus $O \cup \set{u} \models P(\vec s)$ which contradicts \emph{(iii)}.
\end{itemize}
Thus, \cens is optimal for \Inst, which concludes the proof.
\end{proof} 

\section{Appendix (Algorithms)}
\label{app:algorithms}

\begin{algorithm2e}%
\SetKwInOut{Input}{INPUT}
\SetKwInOut{Output}{OUTPUT}
\SetKwFor{ForEach}{for each}{do}{end}
\SetKw{Create}{create}

\Input{a guarded CQE-instance $\Inst = (\O, \D, P)$}
\Output{a dataset $\View$}

\BlankLine

$\O_E := \O \cup \bigcup_{\text{binary } R \text{ in } \O}
	\set{R(x,y) \rightarrow \delta_R(x),\ R(x,y) \rightarrow \rho_R(y)}$;

$\H :=$ the minimal Herbrand model for $\O_E$ and \D;

$\View :=$ a maximal subset of unary atoms from \H
	s.t.\ $\cert{P}{\O_E}{\View} = \eset$;

\lForEach
{constant $a$ from \H}
{$\View := \Anonymise(a)$}

\lForEach
{$R(a,b) \in \H$ such that $R$ is not $\approx$}
{$\View := \AddRoles(R(a,b))$}

\Return \View;

\caption{Compute an optimal view for a guarded tree-shaped CQE instance}
\label{alg:view guarded datalog full}
\end{algorithm2e}

\begin{algorithm2e}%
\SetKwInOut{Input}{INPUT}
\SetKwInOut{Output}{OUTPUT}
\SetKwFor{ForEach}{for each}{do}{end}
\SetKw{Create}{create}
\SetKw{Mark}{mark}

\SetKw{IIf}{if}
\SetKw{FFor}{for}
\SetKw{EEach}{each}

\BlankLine
\AlgPhaseTop{Sub-routine \Anonymise}

\Input{a constant $a$}
\Output{a dataset $\View'$}

\BlankLine

$\View' := \View$;

$C := \set{A \mid A(a) \in \H}$;
	
$\sigma_a := \set{a}$;

\ForEach
{subset $\mathit{Sub}$ of $C$ closed under $\O_E$}
{%
 \Create a globally fresh copy $a_\mathit{Sub}$ of $a$;

 \If
 {$\O_E \cup \View' \cup \set{A(a_\mathit{Sub}) \mid A \in \mathit{Sub}} 
  \not\models P(\vec s)$ 
  for each $\vec s \in \cert{P}{\O}{\D}$}
 {$\View' := \View' \cup \set{A(a_\mathit{Sub}) \mid A \in \mathit{Sub}}$;
  
  $\sigma_a := \sigma_a \cup \set{a_\mathit{Sub}}$;}
}

\Return $\View'$;

\setcounter{AlgoLine}{0}

\AlgPhase{Sub-routine \AddRoles}

\Input{a binary atom $R(a,b)$}
\Output{a dataset $\View'$}

\BlankLine

$\View' := \View$;

\ForEach
{pair $a^* \in \sigma_a$ and $b^* \in \sigma_b$}
{\lIf
 {$\CheckRole(R(a^*,b^*), \View')$}
 {$\View' := \View' \cup \set{R(a^*,b^*)}$}
}

\Return $\View'$;

\setcounter{AlgoLine}{0}

\AlgPhase{Sub-routine \CheckRole}

\Input{a binary atom $R(a,b)$, a dataset $\View'$}
\Output{$\mathtt{True}$ or $\mathtt{False}$}

\BlankLine

\If
{$\O_E \cup \View' \cup \set{R(a,b)} \not\models P(\vec s)$ 
 for each $\vec s \in \cert{P}{\O}{\D}$\\
 \textbf{and} $\O_E \cup \View' \cup \set{R(a,b)} \models C(c)$
 implies $C(c) \in \View$ for any unary predicate $C$
}
{\Return $\mathtt{True}$;}

\lElse{\Return $\mathtt{False}$}

\caption{Sub-routines for Algorithm~\ref{alg:view guarded datalog full}}
\label{alg:view guarded datalog sub-routines 1}
\end{algorithm2e}

\clearpage
\section{Appendix (Reduction)}
\label{app:reduction}
In this section, we show the reduction of the problem 
	of uniform boundedness for binary Datalog
	to the problem of existence of optimal obstructions for Datalog CQE instances
	(see Section~\ref{sec:obstruction-optimal}).

Let $\O$ be a binary Datalog ontology over a signature $\Sigma$
	(observe that w.l.o.g.\ we can assume that $\O$ is connected).
Then, \O is \emph{uniformly bounded}
	if there is a constant $N$ 
	such that for every dataset $\D$ over $\Sigma$
	and for every ground atom $P(\vec t)$,
	if the atom has a proof from $\O$ and $\D$,
	then it has a proof not longer than $N$.
It is well known that 
	each relation $P(\vec x)$ defined by $\O$
	is equivalent to an infinite union of CQs
	$\bigvee_{i=1}^{\infty} \phi_i^P(\vec x)$.
Note that each $\phi_i^P(\vec x)$ is a result of applying some sequence of rules from $\O$
	to $P(\vec x)$.
Moreover,
\begin{itemize}
\item[(P1)] a Datalog ontology is uniformly bounded if and only if
	there exists a number $N$
	such that each $P(\vec x)$ is equivalent to $\bigvee_{i=1}^{N} \phi_i^P(\vec x)$.
\end{itemize}

\medskip
Now we are ready to provide the required reduction.
Let $\O$ be a binary Datalog ontology.
We are going to construct a CQE instance  $\Inst = (\O', \D, P)$
	which admits an optimal obstruction if and only if $\O$ is uniformly bounded.
The ontology $\O'$ of \Inst is defined as 
\begin{align*}
	\O & \cup \set{R_1^A(a, x) \land A(x) \rightarrow P 
		\mid A \text{ is unary and } A \in \Sigma}\\
	   & \cup \set{R_1^S(a, x_1) \land R_2^S(a, x_2) \land S(x_1, x_2) \rightarrow P
	   	\mid S \text{ is binary and } S \in \Sigma},
\end{align*}
	where all $R_1^A$ and $R_i^S$ and $P$ are fresh predicates.
The dataset $\D$ is equal to 
\[
	\set{A(a), S(a,a) \mid \text{$A$ is a unary and $S$ is a binary
	predicates from $\Sigma'$}} \cup \set{P},
\]
	where $\Sigma'$ is $\Sigma$ extended with fresh predicates $R_i^Q$.
Observe that this dataset admits any possible proof of $P$.

Let $\mathbb{Q} \setminus \mathbb{S}$ are built as in Definition~\ref{def:pseudo obstruction}.
It is easy to see that $\mathbb{Q} \setminus \mathbb{S}$
	contains the queries $\psi_i^A$ and $\psi_i^S$ of the form 
	$\exists x.R^A(a, x) \land \phi_i^A(x)$ and
	$\exists x_1 \exists x_2.R_1^S(a, x_1) \land R_2^S(a, x_2) \land \phi_i^S(x_1, x_2)$,
	respectively,
	for each $A, S \in \Sigma$
	as each of them with the help of $\O'$ compromises the policy.

Assume that $\O$ is not uniformly bounded;
	then, due to Property~(P1), there is some $Q \in \Sigma$
	such that for any number $N$
	we have that $\bigvee_{i=1}^{N} \phi_i^Q(\vec x) \not\equiv 
	\bigvee_{i=1}^{\infty} \phi_i^Q(\vec x)$.
That is, it is not the case that for each $\phi_i^Q$
	there exists $\phi_j^Q$ with $j \leq N$ such that 
	there is a homomorphism from $\phi_j^Q(\vec x)$ to $\phi_i^Q(\vec x)$ 
	(note that here distinguished variables are mapped into themselves).
This immediately yields that
	it is not the case that for each number $N$ and for each $\psi_i^Q$
	there exists $\psi_j^Q$ with $j \leq N$ such that
	there is a homomorphism from $\psi_j^Q$ to $\psi_i^Q$
	(note that, although here we do not have distinguished variables,
	we still have that the variables of $\psi_j^Q$
	that correspond to distinguished variables of $\phi_j^Q(\vec x)$
	are mapped to the variables of $\psi_i^Q$
	that correspond to distinguished variables of $\phi_i^Q(\vec x)$
	since they are ``marked'' by predicates $R_i^Q$).
Moreover, for every predicate $T$ different from $Q$,
	it holds that for any $i$ and any $j$
	there is no homomorphism from $\psi_i^Q$ to $\psi_j^T$
	since the former one mentions the predicate $R_1^Q$ and the latter one $R_1^T$.
Hence,
	there is no finite pseudo-obstruction for \Inst
	and therefore, due to Theorem~\ref{th:characterisation},
	no optimal obstruction censor for \Inst exists.

Assume that $\O$ is uniformly bounded and $N$ is a number such that
	for any dataset, if a fact can proved from $\O$ and the dataset,
	then there is a proof of this fact not longer than $N$.
Let $\mathbb{T}$ be a subset of $\mathbb{Q} \setminus \mathbb{S}$
	consisting of those Boolean CQs $\exists^* G$, where
	$G$ is a sub-goal in some proof of $P$ in $\O' \cup \D$
	of length not longer than $N + 3$.
We claim that the UCQ $U = \bigvee_{\phi \in \mathbb{T}} \phi$
	is an optimal obstruction for \Inst.
Assume that there exists a Boolean CQ $\psi = \exists^* G_0$
	from $\mathbb{Q} \setminus \mathbb{S}$
	with $G_0$ a sub-goal coming from some proof of length greater than $N+3$.
This means that $\O' \cup \mathbb{S} \cup \set{\psi} \models P$.
Than there exists a proof $\pi$ of $P$ from $\O' \cup \A$,
	where $\A = \freeze{\psi} \cup \bigcup_{\phi \in \mathbb{S}} \freeze{\phi}$,
	of length no longer than $N+3$
	($1$ step to apply one of the rules 
	$R_1^S(a, x_1) \land R_2^S(a, x_2) \land S(x_1, x_2) \rightarrow P$ from $\O'$,
	$N$ steps to proof $S(x_1, x_2)$ using rules from $\O \cup \A$,
	and $2$ additional steps to proof $R_1^S(a, b_1) \land R_2^S(a, b_2)$
	using facts from $\A$ for some elements $b_1$ and $b_2$).
W.l.o.g., we can assume that this proof is normalised.
Recall that all the rules that are applied after the frontier are from $\A$.
We can assume w.l.o.g.\ that rules from $\freeze{\psi}$ 
	are applied only at the very end of the proof.
Clearly, the goal $G$ right before we start to apply the rules from $\freeze{\psi}$
	is such that
\begin{inparaenum}[\it (i)]
\item $\exists^* G \in \mathbb{T}$ and
\item there is a homomorphism from $\exists^* G$ to $\psi$.
\end{inparaenum}
These properties imply that $\mathbb{T}$
	is a pseudo-obstruction and,
	since it is finite,
	by Theorem~\ref{th:characterisation} we have
	that an optimal obstruction censor for \Inst exists.

  \end{document}